%% file: Arxiv.tex
\documentclass[11pt]{article}
\usepackage[top=2.54cm,left=2.54cm,right=2.54cm,bottom=2.54cm]{geometry}

\input{math_commands.tex}

\usepackage{natbib}
\usepackage[hidelinks]{hyperref} 
\usepackage{url}
\usepackage{amsmath,amssymb,amsthm,mathtools}
\usepackage{algorithm}
\usepackage{algpseudocode}
\usepackage{booktabs}
\usepackage{graphicx}
\usepackage{float}
\usepackage{cleveref}
\usepackage{caption}
\usepackage{microtype}
\usepackage{placeins}

\newtheorem{theorem}{Theorem}[section]
\newtheorem{proposition}[theorem]{Proposition}
\newtheorem{corollary}[theorem]{Corollary}
\newtheorem{definition}[theorem]{Definition}
\newtheorem{remark}[theorem]{Remark}
\newtheorem{lemma}[theorem]{Lemma}

\newcommand{\cS}{\mathcal{S}}
\newcommand{\cA}{\mathcal{A}}
\newcommand{\ind}{\mathbb{I}}
\newcommand{\Mult}{\operatorname{Multinomial}}
\newcommand{\MHG}{\operatorname{MHG}}
\newcommand{\iid}{\mathrm{iid}}
\newcommand{\fc}{\mathrm{fc}}
\newcommand{\splflow}{\mathrm{split}}
\newcommand{\trueflow}{\mathrm{true}}

\title{Exact Fast Batch Simulation for Tabular Reinforcement Learning}

\author{
Haochen Zhang$^{1}$, Lingzhou Xue$^{1}$, and Zhong Zheng$^{2}$\\[0.5em]
$^{1}$Department of Statistics, The Pennsylvania State University\\
$^{2}$Department of Applied and Computational Mathematics and Statistics,\\ University of Notre Dame
}
\date{}

\begin{document}

\maketitle
\begin{abstract}
Simulation is a fundamental computational primitive in reinforcement learning (RL), yet conventional simulation explicitly generates individual trajectories even when downstream procedures use only aggregate statistics. To address this, we develop an exact fast-simulation framework for finite-horizon tabular Markov decision processes. Our framework has two complementary modes. In direct batch simulation, a batch is represented by its aggregate Markov flow. With sufficient parallel simulation resources, this flow can be obtained by trajectory aggregation; when such simulation is unavailable or costly but the initial state and transition distributions are directly accessible, we instead generate an identically distributed flow through forward Markov-flow sampling without materializing individual trajectories. The latter reduces the simulator-side computational dependence on batch size $m$ from $O(m)$ to $O(1)$. In adaptive batch simulation, when batch length is determined by a data-dependent condition, exact multivariate-hypergeometric splitting recursively refines a candidate Markov flow while preserving the conditional law, reducing the cost dependence on $m$ from $O(m)$ to $O(\log m)$. Together, these modes accelerate simulation by keeping trajectories aggregated whenever possible and refining flows only when required to locate data-dependent boundaries. The framework applies broadly across simulator-based, offline, and online batch or stage-based RL, as illustrated with representative algorithms from each setting.
\end{abstract}

\section{Introduction}
\label{sec:intro}

Reinforcement learning (RL)~\citep{1998Reinforcement} has become a useful framework for sequential decision-making, with applications ranging from games~\citep{vinyals2019grandmaster} and robotics~\citep{gu2017deep} to autonomous driving~\citep{yurtsever2020survey}. A canonical theoretical setting is the finite-horizon tabular Markov decision process (MDP), where an agent generates trajectories of $H$ consecutive state-action transitions according to a policy and the underlying transition kernel.

Simulation is a fundamental computational primitive in RL. It is used to generate training data, construct offline datasets, evaluate learned policies, and empirically study algorithmic properties such as regret scaling, sample complexity, visitation frequencies, and switching behavior. The conventional simulation interface operates at the level of individual trajectories: it samples an initial state, repeatedly selects an action and draws a next state, materializes the resulting trajectory, and then repeats this procedure episode by episode.

This episode-level interface can be unnecessarily restrictive for RL procedures that naturally operate in batches. In particular, the sampling policy often remains unchanged over multiple consecutive episodes before the learner performs its next update. We formalize this structure as follows.
\begin{definition}[Batch and batch-implementable update]
\label{def:batch}
A batch is a consecutive collection of episodes generated under the same Markov policy $\pi$. The policy may depend on the pre-batch learner state, which collects all information maintained by the learning algorithm before the batch begins, but remains unchanged within the batch. The batch endpoint may be predetermined or be the first episode at which a specified data-dependent boundary condition is met.

A learner update is batch-implementable if it can be computed exactly from the pre-batch learner state, including any accumulators retained from previous batches, and aggregate information from the current batch, without explicitly processing its episodes one by one.
\end{definition}
Many RL procedures naturally admit this form of batching. Simulator-based methods may repeatedly query the same transition distributions only to construct empirical models or statistics~\citep{gheshlaghi2013minimax,sidford2018near,li2024breaking}; in this case, the raw next-state samples are ultimately used only through empirical transition counts. Offline RL often operates on trajectories collected under a fixed behavior policy~\citep{yin2021near,yin2021towards,shi2022pessimistic}; here, many downstream computations depend only on aggregate occupancy, visitation, and transition statistics. Batch or stage-based online algorithms may retain the same decision rule while accumulating samples until an update is triggered~\citep{bai2019provably,zhang2020almost,qiao2022sample,zhang2022near,zheng2023federated,zheng2024federated,zhang2025regret}; between updates, the trajectories are often used only to update visit counts, empirical moments, or other batch-wise accumulators. In all three settings, explicitly generating every individual sample before forming the required aggregate summaries can introduce unnecessary simulator-side computation.

This issue becomes particularly important when RL simulation is performed at scale. In theoretical studies, large numbers of episodes, repeated random seeds, multiple MDP instances, and extensive parameter sweeps are often required to reveal finite-sample scaling laws or distinguish closely related algorithmic behaviors. For example, recent tabular RL experiments \citep{zhang2025gapdependent} simulate millions of episodes in order to examine regret and switching behavior. Large simulation workloads also arise in practical RL systems: modern simulators such as Isaac Gym train policies using thousands of parallel environments and tens to hundreds of millions of environment steps~\citep{makoviychuk2021isaac}. To handle such workloads, systems including Isaac Gym~\citep{makoviychuk2021isaac}, Brax~\citep{freeman2021brax}, and EnvPool~\citep{weng2022envpool} improve throughput by parallelizing conventional environment stepping. These approaches substantially accelerate trajectory-level simulation, but still explicitly generate individual transitions. This motivates a complementary direction for finite-horizon tabular MDPs:
\begin{center}
\emph{Can we exploit aggregate batch structure to accelerate simulation while exactly preserving the distribution needed by the learning procedure?}
\end{center}
We answer this question affirmatively through two simulation modes, depending on how the batch endpoint is determined. When the endpoint is known in advance, we use \emph{direct batch simulation}. A predetermined batch is represented by its \emph{Markov-flow counts}: state counts, state-action counts, and state-action-next-state transition counts. There are two ways to obtain this flow. When sufficient parallel simulation resources are available, the trajectories can be generated concurrently and then aggregated into the Markov flow. When such parallel simulation is unavailable or costly, but the initial state and transition distributions are accessible, we instead generate an identically distributed Markov flow directly, without materializing individual trajectories, using forward Markov-flow sampling. This reduces the simulator-side computational dependence on the batch size from $O(m)$ to $O(1)$.

When the batch endpoint is determined by a data-dependent boundary condition and is therefore unknown in advance, we use \emph{adaptive batch simulation}. This situation arises in online algorithms whose next update is triggered, for example, when an accumulated visit count reaches a stage boundary or a reference statistic is refreshed. Under the current policy, we first simulate a large candidate batch in aggregate form and then use exact multivariate-hypergeometric splitting to recursively locate the first boundary episode. The aggregate flow up to this boundary is retained, the corresponding learner updates are applied, and simulation restarts from the updated learner state. Because each split preserves the correct conditional law, the resulting procedure is distributionally identical to conventional sequential simulation, while locating the boundary within a candidate batch of $m$ episodes requires only $O(\log m)$ refinement levels instead of $O(m)$ episode-wise inspections.

The resulting principle is simple: \emph{represent trajectory batches by aggregate Markov flows whenever possible, and apply exact hypergeometric splitting only when a data-dependent boundary must be located}. Although our main results are stated for finite-horizon tabular MDPs, the construction is not tied to episodic dynamics and also applies to independent finite-length rollouts of tabular MDPs.

\textbf{Our contributions.}
Our main contributions are summarized as follows.

\textbf{Exact direct batch simulation.}
We formulate the Markov-flow representation of a batch in Section~\ref{sec:direct}. When sufficient parallel simulation resources are available, the batch can be generated concurrently and aggregated directly into its Markov flow. When such parallel simulation is unavailable or costly, but the initial state and transition distributions are accessible, we instead show that the aggregate state, state-action, and transition counts of $m$ episodes can be sampled directly through forward Markov-flow sampling without materializing individual trajectories, while exactly preserving the joint flow distribution of conventional episode-by-episode simulation (\Cref{thm:forward}). Any flow-measurable batch accumulator can then be computed from the resulting aggregate counts. Under the computational convention of Section~\ref{sec:complexity}, this direct flow-sampling implementation reduces the simulator-side dependence on $m$ from $O(m)$ to $O(1)$ (see Proposition~\ref{prop:direct-complexity}).

\textbf{Exact adaptive batch simulation.} For a prefix-monotone data-dependent boundary, we develop a multivariate-hypergeometric splitting procedure in \Cref{sec:adaptive} that samples exact conditional sub-flows of a generated aggregate flow. Recursive binary splitting locates the first boundary in at most $\lceil\log_2 m\rceil$ refinements and recovers the aggregate flow up to that boundary (\Cref{thm:adaptive}). Under the boundary compatibility condition specified in Section~\ref{sec:problem}, restarting after each update exactly reproduces the joint distribution of the post-boundary learner states. When candidate flows are generated by forward Markov-flow sampling, the simulator-side dependence on $m$ is reduced from $O(m)$ to $O(\log m)$ (see Proposition~\ref{prop:adaptive-complexity} for details).

\textbf{Applications across representative RL settings.}
In \Cref{sec:examples}, we demonstrate the framework in simulator-based, offline, and online RL through model-based planning with a generative model~\citep{li2024breaking}, VPVI~\citep{yin2021towards}, and stage-based UCB-Advantage~\citep{zhang2020almost}, respectively. Model-based planning uses direct transition-count sampling, VPVI uses direct Markov-flow generation for a fixed offline dataset, and UCB-Advantage uses adaptive batch simulation to locate data-dependent update boundaries. Our experiments further demonstrate the practical computational efficiency of our framework compared with conventional episode-by-episode simulation.

\textbf{Technical novelty.} A key technical ingredient of our analysis is a conditional labeling representation of an aggregate Markov flow. Conditional on the full flow, all compatible labeled trajectory sequences are uniformly distributed and admit a sequential combinatorial representation (Lemmas \ref{lem:conditional-uniformity} and \ref{lem:conditional-labeling}). Building on this representation, we derive multivariate hypergeometric decompositions that recover the exact conditional law of a contiguous subbatch despite dependencies across successive MDP steps (Lemmas \ref{lem:initial-split}--\ref{lem:split-equivalence}). We further show that this conditional structure is preserved under recursive refinement, enabling repeated splitting and establishing distributional equivalence with sequential simulation across successive data-dependent boundaries.

\section{Problem Formulation}
\label{sec:problem}
\textbf{Finite-horizon tabular Markov decision process.}
We consider a finite-horizon tabular Markov decision process (MDP) $\mathcal{M}=(\cS,\cA,H,P,r,\nu)$, where $\cS$ is the state space with $|\cS|=S$, $\cA$ is the action space with $|\cA|=A$, and $H$ is the horizon. The transition kernel is $P=\{P_h\}_{h=1}^H$, where $P_h(\cdot\mid s,a)$ specifies the distribution of the next state given $(s,a)$ at step $h$. The reward function is $r=\{r_h\}_{h=1}^H$, where each $r_h:\cS\times\cA\rightarrow[0,1]$ is deterministic, and $\nu\in\Delta(\cS)$ denotes the initial-state distribution.

At the beginning of each episode, $X_1\sim\nu$. At step $h\in[H]$, the agent observes $X_h\in\cS$, selects $A_h\in\cA$, receives $r_h(X_h,A_h)$, and transitions according to $X_{h+1}\sim P_h(\cdot\mid X_h,A_h)$. A Markov policy $\pi=\{\pi_h\}_{h=1}^H$ satisfies $\pi_h(\cdot\mid s)\in\Delta(\cA)$. Whenever a policy is defined greedily from Q-values, ties are broken according to a fixed deterministic rule. For a fixed policy $\pi$, one episode generates $\omega=(X_1,A_1,\ldots,X_H,A_H,X_{H+1})$, and for any realization $\omega=(x_1,a_1,\ldots,x_H,a_H,x_{H+1})$,
\begin{equation}
\label{eq:traj-prob}
\mathbb{P}_{\pi}(\omega)=\nu(x_1)\prod\nolimits_{h=1}^H\pi_h(a_h\mid x_h)P_h(x_{h+1}\mid x_h,a_h).
\end{equation}
\textbf{Markov-flow counts.}
Fix the learner state $Z$ at the beginning of a batch as in Definition \ref{def:batch}, and let $\pi=\pi(Z)$ be the policy used throughout the batch. Conditional on $Z$, consider $m$ labeled trajectories $\omega_1,\ldots,\omega_m\stackrel{\mathrm{iid}}{\sim}\mathbb{P}_{\pi}$. Writing $\omega_i=(X_{i,1},A_{i,1},\ldots,X_{i,H},A_{i,H},X_{i,H+1})$, define
\begin{align}
N_h(s)&=\sum\nolimits_{i=1}^m\ind\{X_{i,h}=s\},\qquad
C_h(s,a)=\sum\nolimits_{i=1}^m\ind\{X_{i,h}=s,A_{i,h}=a\},\notag\\
E_h(s,a,s')&=\sum\nolimits_{i=1}^m\ind\{X_{i,h}=s,A_{i,h}=a,X_{i,h+1}=s'\}.\nonumber
\end{align}
We call $F_m=(N_1,C,E)$ the \emph{Markov flow} of the batch, where $C=\{C_h(s,a)\}_{h,s,a}$ and $E=\{E_h(s,a,s')\}_{h,s,a,s'}$. For two Markov flows $F=(N_1,C,E)$ and $F'=(N_1',C',E')$, addition and subtraction are defined as $F\pm F'=(N_1\pm N_1',\,C\pm C',\,E\pm E')$, where $(N_1\pm N_1')(s)=N_1(s)\pm N_1'(s),(C\pm C')_h(s,a)=C_h(s,a)\pm C_h'(s,a), (E\pm E')_h(s,a,s')=E_h(s,a,s')\pm E_h'(s,a,s') $.

\textbf{Flow-measurable accumulators.}
Given the pre-batch learner state $Z$, we call a batch statistic or accumulator \emph{flow measurable} if it can be written as a deterministic measurable function $\phi(Z,F_m)$. Examples include state-action visitation counts, empirical transition kernels, and sums of a fixed function of the next state, such as
\begin{equation}
\label{eq:flow-function}
\sum\nolimits_{i=1}^m\ind\{X_{i,h}=s,A_{i,h}=a\}f_{h+1}(X_{i,h+1})
=\sum\nolimits_{s'}E_h(s,a,s')f_{h+1}(s'),
\end{equation}
whenever $f_{h+1}$ is fixed by $Z$ throughout the batch. Thus, whenever the aggregate information required by a batch-implementable update is flow measurable, the update can be carried out directly from $F_m$ without reconstructing individual trajectories.

\textbf{Adaptive boundaries and boundary compatibility condition.}
For adaptive simulation, let $G_j$ denote the Markov flow of the first $j$ episodes generated under the current policy. A trigger $g(Z,G_j)\in\{0,1\}$ is \emph{prefix monotone} if $g(Z,G_j)=1$ implies $g(Z,G_{j'})=1$ for every $j'\ge j$. Cumulative threshold rules, including staggered visit-count or stage boundaries associated with different state-action-step triples, satisfy this property. Define the first boundary by
\begin{equation}
\label{eq:tau}
\tau=\inf\{j\ge1:g(Z,G_j)=1\},
\end{equation}
with $\inf\emptyset=\infty$. Once the boundary is identified, adaptive simulation retains the aggregate flow $G_\tau$ of all episodes up to the boundary. We require that all learner updates triggered at this boundary can be computed exactly from $(Z,G_\tau)$ and that postponing these updates until $G_\tau$ has been recovered does not alter any subsequent action selection or accumulator contribution within the boundary episode. We refer to these requirements collectively as the \emph{boundary compatibility condition}. After applying the triggered updates, simulation restarts from the resulting learner state.

\section{Exact Fast Simulation Framework}
\label{sec:method}
\subsection{Direct Batch Simulation}
\label{sec:direct}
We first consider a predetermined batch of size $m$, with the goal of obtaining its Markov flow $F_m$. The same aggregate-count perspective also applies to repeated generative-model queries. We consider two complementary ways of obtaining $F_m$.

\textbf{Parallel aggregation.} When sufficient parallel simulation resources are available, the $m$ episodes of the batch can be generated concurrently and aggregated into $F_m$.

\textbf{Forward Markov-flow sampling.} When such parallel simulation is unavailable or costly, but the transition kernels and initial-state distribution are directly accessible, we can instead generate an identically distributed flow without materializing individual trajectories using the forward Markov-flow sampling procedure developed below.

Sample the initial-state counts according to
\begin{equation}
\label{eq:forward1}
N_1(\cdot)\sim\Mult(m,\nu).
\end{equation}
Then, perform the sampling and propagation steps in~\Cref{eq:forward2,eq:forward3,eq:forward4} for $h=1,\ldots,H$ to produce the Markov flow $F_m=(N_1,C,E)$.

Conditional on $N_h$, sample
\begin{equation}
\label{eq:forward2}
C_h(s,\cdot)\mid N_h\sim\Mult\left(N_h(s),\pi_h(\cdot\mid s)\right),\qquad s\in\cS,
\end{equation}
independently across $s$. Conditional on $C_h$, sample
\begin{equation}
\label{eq:forward3}
E_h(s,a,\cdot)\mid C_h\sim\Mult\left(C_h(s,a),P_h(\cdot\mid s,a)\right),\qquad (s,a)\in\cS\times\cA,
\end{equation}
independently across $(s,a)$, and propagate
\begin{equation}
\label{eq:forward4}
N_{h+1}(s')=\sum\nolimits_{s,a}E_h(s,a,s'),\qquad s'\in\cS.
\end{equation}
\begin{theorem}[Exactness of direct Markov-flow sampling]
\label{thm:forward}
Let $F_m^{\iid}=(N_1^{\iid},C^{\iid},E^{\iid})$ be the Markov flow obtained from $m$ independent trajectories $\omega_1,\ldots,\omega_m\stackrel{\mathrm{iid}}{\sim}\mathbb{P}_{\pi}$. Let $F_m^{\fc}=(N_1^{\fc},C^{\fc},E^{\fc})$ be generated by~\Cref{eq:forward1,eq:forward2,eq:forward3,eq:forward4}. Then $F_m^{\fc}\stackrel{d}{=}F_m^{\iid}$.

Equivalently, the joint law of $(N_1,C_1,E_1,\ldots,C_H,E_H)$ is identical under direct Markov-flow sampling and conventional episode-by-episode simulation.
\end{theorem}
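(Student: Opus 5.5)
We prove the equivalent second formulation: the joint law of $(N_1,C_1,E_1,\ldots,C_H,E_H)$ is the same under both schemes. Note first that in both constructions $N_{h+1}(s')=\sum_{s,a}E_h(s,a,s')$ holds deterministically. Under the iid scheme this is because each trajectory with $X_{i,h+1}=s'$ is counted in exactly one $E_h(s,a,s')$. So it suffices to show that, for $F_m^{\iid}$, the conditional law of each new block given the past blocks is the multinomial kernel prescribed in \Cref{eq:forward1,eq:forward2,eq:forward3}. We then conclude by the chain rule for the joint law of the sequence $N_1,C_1,E_1,C_2,E_2,\ldots,C_H,E_H$.

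The plan is a Markov-property argument, run by induction on $h$. Let $\mathcal{G}_h$ be the $\sigma$-field generated by the individual labeled states $(X_{i,1},A_{i,1},\ldots,X_{i,h})_{i=1}^m$. By the product formula \Cref{eq:traj-prob} and independence across $i$, the actions $A_{i,h}$ are conditionally independent given $\mathcal{G}_h$, with $A_{i,h}\sim\pi_h(\cdot\mid X_{i,h})$. Grouping trajectories by the value of $X_{i,h}$, the vector $C_h(s,\cdot)$ is therefore a sum of $N_h(s)$ iid categorical vectors with parameter $\pi_h(\cdot\mid s)$. Hence, conditional on $\mathcal{G}_h$, it is $\Mult(N_h(s),\pi_h(\cdot\mid s))$, independently across $s$. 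This conditional law depends on $\mathcal{G}_h$ only through $N_h$. Since $(N_1,C_1,E_1,\ldots,E_{h-1})$ is $\mathcal{G}_h$-measurable, the tower property gives the same conditional law given the past flow blocks.

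The same argument handles the transitions. Given $\mathcal{G}_h\vee\sigma(A_{\cdot,h})$, the next states $X_{i,h+1}$ are conditionally independent with law $P_h(\cdot\mid X_{i,h},A_{i,h})$. Grouping by $(s,a)$ yields $E_h(s,a,\cdot)\sim\Mult(C_h(s,a),P_h(\cdot\mid s,a))$, independently across $(s,a)$, and this law depends only on $C_h$. The base case is immediate: $N_1\sim\Mult(m,\nu)$ because $X_{i,1}\stackrel{\iid}{\sim}\nu$.

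Multiplying these conditional probabilities gives, for any realization, a joint mass function under the iid scheme that equals the product of multinomial masses defining the forward sampler. This proves $F_m^{\fc}\stackrel{d}{=}F_m^{\iid}$. The only delicate step is the reduction from conditioning on the fine $\sigma$-field $\mathcal{G}_h$ to conditioning on the coarse flow history. It is resolved by observing that the fine conditional law is already a function of the coarse statistic $N_h$ (respectively $C_h$), so the tower property transfers it without change.
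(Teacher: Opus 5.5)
Your proposal is correct and follows essentially the same route as the paper's proof. Both arguments induct over $h$, derive the multinomial conditional laws of $C_h$ and $E_h$ given the fine labeled history, note that these laws depend only on $N_h$ (respectively $C_h$), and use the tower property to pass them to the coarse flow history before chaining the blocks together with the deterministic propagation $N_{h+1}(s')=\sum_{s,a}E_h(s,a,s')$.
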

A complete proof is deferred to Appendix~\ref{app:proof-forward}. The exactness of the Markov flow immediately extends to any computation that depends on the batch only through its flow.
\begin{corollary}
\label{cor:flow}
Fix the pre-batch learner state $Z$. For any measurable map $\phi$ of $Z$ and the Markov flow, $\phi(Z,F_m^{\fc})\stackrel{d}{=}\phi(Z,F_m^{\iid}).$ Therefore, any flow-measurable accumulator or batch-implementable update that depends on the batch only through $F_m$ is distributionally identical under direct batch simulation and conventional episode-by-episode simulation.
\end{corollary}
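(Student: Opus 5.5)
The statement to prove is Corollary~\ref{cor:flow}, and it reduces directly to Theorem~\ref{thm:forward}. The only subtlety is that $\phi$ also depends on the pre-batch learner state $Z$. We therefore work conditionally on $Z$ throughout, exactly as in the setup of Section~\ref{sec:problem}.

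\textbf{Step 1: fixing $Z$.} Fix $Z=z$. Then $\pi=\pi(z)$ is a fixed Markov policy, and Theorem~\ref{thm:forward} applies to it. It gives $F_m^{\fc}\stackrel{d}{=}F_m^{\iid}$ conditional on $Z=z$, as random elements of the finite (hence countable) set of nonnegative integer arrays indexed by $(s)$, $(h,s,a)$ and $(h,s,a,s')$.

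\textbf{Step 2: pushing forward through $\phi$.} The section $\phi_z=\phi(z,\cdot)$ of a jointly measurable map is measurable. Measurability is in any case automatic here, because the flow takes values in a discrete space. Hence, for any measurable set $B$,
\[
\mathbb{P}\bigl(\phi(z,F_m^{\fc})\in B\mid Z=z\bigr)=\mathbb{P}\bigl(F_m^{\fc}\in\phi_z^{-1}(B)\mid Z=z\bigr)=\mathbb{P}\bigl(F_m^{\iid}\in\phi_z^{-1}(B)\mid Z=z\bigr),
\]
and the right-hand side equals $\mathbb{P}(\phi(z,F_m^{\iid})\in B\mid Z=z)$. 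This is the standard fact that equality in law is preserved under measurable maps.

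\textbf{Step 3: unconditional version.} If an unconditional statement is wanted for random $Z$, integrate the conditional identity of Step 2 against the law of $Z$. This uses that both simulation schemes draw the batch from the same conditional mechanism given $Z$, namely a regular conditional distribution, which exists because the flow space is discrete. It yields $(Z,\phi(Z,F_m^{\fc}))\stackrel{d}{=}(Z,\phi(Z,F_m^{\iid}))$.

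\textbf{Step 4: accumulators and updates.} The second sentence of the corollary follows by definition. A flow-measurable accumulator, or a batch-implementable update that depends on the batch only through $F_m$, is by definition of the form $\phi(Z,F_m)$ for some measurable $\phi$, so Steps 2 and 3 apply to it directly.

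There is no real obstacle. The only point needing care is to state explicitly that the equality is conditional on $Z$, or joint with $Z$, rather than merely marginal.
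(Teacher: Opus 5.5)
Your proposal is correct and follows the paper's reasoning. The paper presents this corollary as an immediate consequence of Theorem~\ref{thm:forward}: equality in law of $F_m^{\fc}$ and $F_m^{\iid}$ given $Z$ is preserved under the measurable map $\phi(Z,\cdot)$, which is exactly your Steps 1, 2 and 4. Your Step 3 is an optional strengthening that the corollary does not need, since the corollary fixes $Z$. In that step, ``same conditional mechanism'' should read ``identical conditional law of the flow given $Z$,'' because the two generating procedures are different.
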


\subsection{Adaptive Batch Simulation}
\label{sec:adaptive}

When the batch endpoint is determined by a data-dependent boundary, its location is unknown in advance. We therefore first obtain the Markov flow of a fixed-size candidate batch under the current policy using direct batch simulation and, if the candidate batch triggers the boundary, recursively refine its flow through exact multivariate-hypergeometric splitting to locate the earliest triggering episode.

\textbf{Exact multivariate-hypergeometric splitting.}
Let a parent flow $F_m=(N_1,C,E)$ represent $m$ trajectories, and choose any $m_L\in\{0,\ldots,m\}$ with $m_R=m-m_L$. For a count vector $b$, let $Y\sim\MHG(b,n)$ denote a multivariate hypergeometric random vector obtained by sampling $n$ objects without replacement from categories with counts $b$.

Define the left child flow recursively by first sampling
\begin{equation}
N_1^L(\cdot)
\sim \MHG\!\left(N_1(\cdot),m_L\right).
\label{eq:split1}
\end{equation}
Then, repeat the following procedure for $h=1,\ldots,H$. Conditional on $N_h^L$, sample
\begin{equation}
C_h^L(s,\cdot)
\sim \MHG\!\left(C_h(s,\cdot),N_h^L(s)\right),
\qquad s\in\cS,
\label{eq:split2}
\end{equation}
independently across $s$ and, conditional on $C_h^L$, sample
\begin{equation}
E_h^L(s,a,\cdot)
\sim \MHG\!\left(E_h(s,a,\cdot),C_h^L(s,a)\right),
\qquad (s,a)\in\cS\times\cA,
\label{eq:split3}
\end{equation}
independently across $(s,a)$. Finally, propagate
\begin{equation*}
N_{h+1}^L(s')=\sum\nolimits_{s,a}E_h^L(s,a,s'),\qquad s'\in\cS.
\end{equation*}
The right child flow is obtained by subtraction, $F_{m_R}^R=F_m-F_{m_L}^L$.

\begin{theorem}[Conditional correctness of a split]
\label{thm:split}
Let $\omega_1,\ldots,\omega_m\stackrel{\iid}{\sim}\mathbb{P}_\pi$, and let $F_m$ be their aggregate Markov flow. Fix any $m_L\in\{0,\ldots,m\}$ and let $m_R=m-m_L$. Denote by $F_{m_L}^{\trueflow,L}$ and $F_{m_R}^{\trueflow,R}$ the Markov flows of the first $m_L$ and remaining $m_R$ trajectories, respectively, under conventional episode-by-episode simulation. Let $F_{m_L}^{\splflow,L}$ be the left child generated by~\Cref{eq:split1,eq:split2,eq:split3}, and define the right child by $F_{m_R}^{\splflow,R}=F_m-F_{m_L}^{\splflow,L}.$ Then, conditional on $F_m$,
\begin{equation*}
\left(F_{m_L}^{\splflow,L},F_{m_R}^{\splflow,R}\right)\mid F_m
\stackrel{d}{=}
\left(F_{m_L}^{\trueflow,L},F_{m_R}^{\trueflow,R}\right)\mid F_m.
\end{equation*}
In particular, each child has the correct conditional distribution corresponding to its true sub-batch.
\end{theorem}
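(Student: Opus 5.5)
We reduce the statement to a computation on labeled trajectory sequences. The plan is to show that, conditional on $F_m$, the true left flow $F^{\trueflow,L}_{m_L}$ has exactly the sequential law prescribed by \Cref{eq:split1,eq:split2,eq:split3}. Since the right child is the deterministic function $F_m-F_{m_L}^L$ in both constructions, equality of the conditional law of the left child then gives equality of the joint conditional law of the pair.

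\textbf{Step 1 (conditional uniformity).} By \eqref{eq:traj-prob}, the probability of a labeled sequence $(\omega_1,\ldots,\omega_m)$ equals
\[
\prod_s\nu(s)^{N_1(s)}\prod_{h,s,a}\pi_h(a\mid s)^{C_h(s,a)}\prod_{h,s,a,s'}P_h(s'\mid s,a)^{E_h(s,a,s')},
\]
which depends on the sequence only through its flow. Hence, conditional on $F_m=f$, the labeled sequence is uniform on the set $\Omega(f)$ of sequences with flow $f$.

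\textbf{Step 2 (sequential labeling representation).} Represent a uniform element of $\Omega(f)$ by a step-by-step assignment, which is a bijection up to a fixed product of counting factors.
\begin{itemize}
\item Assign initial states to the labels $1,\ldots,m$ as a uniformly random arrangement of the multiset given by $N_1$.
\item At step $h$, among the $N_h(s)$ labels currently in state $s$, assign actions as a uniformly random arrangement of the multiset $C_h(s,\cdot)$.
\item Among the $C_h(s,a)$ labels that took $(s,a)$, assign next states as a uniformly random arrangement of $E_h(s,a,\cdot)$.
\end{itemize}
We must check that every element of $\Omega(f)$ arises from exactly one choice of these arrangements. The number of choices is $\binom{m}{N_1}\prod_{h,s}\binom{N_h(s)}{C_h(s,\cdot)}\prod_{h,s,a}\binom{C_h(s,a)}{E_h(s,a,\cdot)}$, independently of the sequence, so uniformity on $\Omega(f)$ is equivalent to independent uniform arrangements at each stage, conditional on the earlier stages.

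\textbf{Step 3 (hypergeometric restriction).} Restrict attention to the label set $L=\{1,\ldots,m_L\}$.
\begin{itemize}
\item Under a uniform arrangement of $N_1$, the counts of states among $L$ follow $\MHG(N_1,m_L)$.
\item Conditional on the history up to step $h$, the number of labels in $L$ lying in state $s$ is $N_h^L(s)$. Their actions are the restriction of a uniform arrangement of $C_h(s,\cdot)$ to a subset of size $N_h^L(s)$ that is fixed given the past, so the action counts are $\MHG(C_h(s,\cdot),N_h^L(s))$.
\item The arrangements at step $h$ are independent across $s$, which gives \eqref{eq:split2}. The same argument applied to the next-state assignments gives \eqref{eq:split3}.
\item $N_{h+1}^L$ is determined by $E_h^L$ through the propagation step.
\end{itemize}
The conditional law of the within-$L$ counts at each stage depends only on the earlier $L$-counts, and not on the finer labeling, by exchangeability of the uniform arrangement. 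Multiplying these conditional laws along $h$ therefore yields exactly the law of the split procedure.

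\textbf{Main obstacle.} The delicate point is the Markov property across steps. At step $h$ we condition on the full labeled history, yet the split procedure only tracks $N_h^L$. We handle this by conditioning on the full labeled past, computing the law of the step-$h$ $L$-counts (which depends only on $N_h^L$ and $f$), and then integrating out the labels. The independence across $(s,a)$ and across stages from Step 2 is what makes this integration valid.
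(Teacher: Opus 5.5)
Your proposal is correct and follows essentially the paper's route. Both arguments use conditional uniformity given $F_m$ and a sequential labeling representation. Both compute the hypergeometric restriction conditional on the full labeled past, push it down to the coarse left-flow history by the tower property, induct over $h$, and obtain the right child by subtraction. The only difference is cosmetic: you build the action-then-next-state factorization into the labeling representation itself, so \Cref{eq:split2,eq:split3} come out directly. The paper instead assigns each row jointly to $(a,s')$ categories and then needs the separate Vandermonde argument of Lemma~\ref{lem:split-equivalence}.
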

A complete proof is deferred to Appendix~\ref{app:proof-split}.

\begin{remark}[Deterministic-policy specialization]
\label{rem:deterministic-policy}
Suppose $\pi$ is deterministic, and write $\pi_h(s)\in\cA$ for the unique action selected at state $s$ and step $h$. Then~\Cref{eq:forward2} is deterministic: $C_h(s,\pi_h(s))=N_h(s)$ and $C_h(s,a)=0$ for $a\neq\pi_h(s)$, so~\Cref{eq:forward3} need only be sampled for the active row $(s,\pi_h(s))$. Likewise, in multivariate-hypergeometric splitting,~\Cref{eq:split2} is deterministic and~\Cref{eq:split3} need only be sampled for the active action. In these deterministic-policy specializations, we represent the inactive action rows implicitly as zero rather than materializing them. This changes neither the Markov flow represented nor the exactness statements above.
\end{remark}

\textbf{Binary search for the first boundary.}
For adaptive simulation, we specialize the general splitting rule above to approximately equal child sizes. Algorithm~\ref{alg:fastforward} maintains an accepted prefix flow $F^{\mathrm{pre}}$ that is known not to have triggered the boundary. At each iteration, the current batch is split at its midpoint. If appending the left child triggers $g$, the first boundary lies in the left child; otherwise, the left child is accepted, and the search continues in the right child. Thus, the unresolved batch size decreases by approximately one half at each iteration. The exactness of \Cref{alg:fastforward} is stated in the following theorem and proved in Appendix~\ref{app:proof-adaptive}.
\begin{theorem}[Exact adaptive batch simulation]
\label{thm:adaptive}
Fix $Z$ and suppose the policy remains equal to $\pi(Z)$ until the first boundary defined in~\Cref{eq:tau}. Let $\widetilde G$ denote the flow returned by Algorithm~\ref{alg:fastforward}. Then
\[
\bigl(g(Z,F_m),\widetilde G\bigr)
\overset{d}{=}
\bigl(\mathbb{I}\{\tau\le m\},G_{\tau\wedge m}\bigr).
\]
Moreover, the algorithm uses at most $\lceil\log_2 m\rceil$ conditional splits. If the boundary compatibility condition holds, repeatedly applying the procedure at successive boundaries produces the same joint distribution over post-boundary learner states as conventional simulation.
\end{theorem}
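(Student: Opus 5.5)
We prove the distributional identity by coupling. The coupling is with one realization of conventional sequential simulation: $\omega_1,\ldots,\omega_m\stackrel{\iid}{\sim}\mathbb{P}_\pi$, with prefix flows $G_j$.

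Algorithm~\ref{alg:fastforward} maintains a state consisting of an accepted prefix flow $F^{\mathrm{pre}}$ of size $\ell$ and an unresolved flow $F^{\mathrm{cur}}$ of size $n$. The invariant we will prove by induction on iterations is the following. Conditional on the algorithm's history, the pair $(F^{\mathrm{pre}},F^{\mathrm{cur}})$ has the same law as $(G_\ell,\,G_{\ell+n}-G_\ell)$ conditional on the corresponding events in the sequential world. Those events are: $g(Z,G_m)=1$, $g(Z,G_\ell)=0$, and the recorded left-child outcomes. Moreover, conditional on $G_\ell$ and $G_{\ell+n}-G_\ell$, the labeled trajectories $\omega_{\ell+1},\ldots,\omega_{\ell+n}$ have the law of $n$ iid trajectories conditioned on their flow.

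\emph{Base case.} By Theorem~\ref{thm:forward}, $F_m\stackrel{d}{=}G_m$, so the first outputs match: $g(Z,F_m)\stackrel{d}{=}\ind\{\tau\le m\}$. By prefix monotonicity, $\{\tau\le m\}=\{g(Z,G_m)=1\}$. If $g=0$, the algorithm returns $F_m$ and the sequential side gives $G_{\tau\wedge m}=G_m$, so the laws agree. If $g=1$, the invariant holds with $\ell=0$ and $n=m$.

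\emph{Induction step.} Split $F^{\mathrm{cur}}$ at $n_L=\lceil n/2\rceil$. The unresolved block consists of iid trajectories independent of the accepted prefix given its flow, because the episodes are iid and conditioning on a function of disjoint blocks factorizes. Hence Theorem~\ref{thm:split}, applied to this block, gives $(F^L,F^R)\mid F^{\mathrm{cur}}\stackrel{d}{=}(G_{\ell+n_L}-G_\ell,\;G_{\ell+n}-G_{\ell+n_L})\mid(\cdot)$. The branch test $g(Z,F^{\mathrm{pre}}+F^L)$ is a deterministic function of these quantities, so it has the same law as $g(Z,G_{\ell+n_L})$. Prefix monotonicity then gives the key equivalence: $\tau\le\ell+n_L$ iff $g(Z,G_{\ell+n_L})=1$, given that $\tau>\ell$ and $\tau\le\ell+n$. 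In the left branch the new state is $(F^{\mathrm{pre}},F^L)$; in the right branch it is $(F^{\mathrm{pre}}+F^L,F^R)$.

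The main obstacle is showing that the invariant survives conditioning on the branch event. That event is measurable with respect to the flows, and Theorem~\ref{thm:split}, or its proof via conditional uniformity of labelings (Lemma~\ref{lem:conditional-uniformity}), gives the full conditional law of the sub-block trajectories given their flows. Conditioning further on flow-measurable events therefore preserves the "iid given flow" structure on each sub-block.

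\emph{Termination.} The loop stops when $n=1$. Then $\tau=\ell+1$, and the algorithm returns $F^{\mathrm{pre}}+F^{\mathrm{cur}}\stackrel{d}{=}G_{\ell+1}=G_\tau$. Since $n$ goes to $\lceil n/2\rceil$ or $\lfloor n/2\rfloor$ at each iteration, starting from $m$ it reaches $1$ after at most $\lceil\log_2 m\rceil$ splits.

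\emph{Successive boundaries.} Induct over boundaries using the strong Markov/regeneration property of sequential simulation. Given $(Z,G_\tau)$, the episodes after $\tau$ are fresh iid draws under the updated policy. By the boundary compatibility condition, the post-boundary learner state is a deterministic function $U(Z,G_\tau)$, and the sequential process also restarts from $U(Z,G_\tau)$. If $\tau>m$, both the algorithm and the sequential process simply continue with a fresh candidate batch from the updated accumulators. In both cases the memorylessness of iid episodes makes the concatenation consistent. Hence the joint law of all post-boundary learner states coincides with that of conventional simulation.
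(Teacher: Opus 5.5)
Your proposal is correct and follows essentially the same route as the paper. Both arguments run an induction over refinement depth: the retained block's labeled trajectories stay uniform, i.e.\ iid-given-flow, conditional on the revealed flow history and branch outcomes, so Theorem~\ref{thm:split} applies at every split; prefix monotonicity keeps the first trigger in the retained node; the $\lceil m/2^q\rceil$ size bound gives at most $\lceil\log_2 m\rceil$ splits; and regeneration at each boundary under the compatibility condition handles successive boundaries. The only slip is cosmetic: Algorithm~\ref{alg:fastforward} splits at $n_L=\lfloor n/2\rfloor$ rather than $\lceil n/2\rceil$, which changes neither the exactness argument nor the split count.
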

\begin{algorithm}[H]
\caption{Exact adaptive batch simulation by binary refinement}
\label{alg:fastforward}
\begin{algorithmic}[1]
\Require Learner state $Z$, candidate size $m$, policy $\pi(Z)$, prefix-monotone trigger $g$
\State Sample candidate flow $F_m$ by direct batch simulation
\If{$g(Z,F_m)=0$}
    \State \Return $F_m$
\EndIf
\State $(F,n)\gets(F_m,m)$ and $F^{\mathrm{pre}}\gets0$
\While{$n>1$}
    \State $n_L\gets\lfloor n/2\rfloor$ and $n_R\gets n-n_L$
    \State Split $(F,n)$ into exact prefix and suffix flows $(F^L,n_L)$ and $(F^R,n_R)$
    \If{$g(Z,F^{\mathrm{pre}}+F^L)=1$}
        \State $(F,n)\gets(F^L,n_L)$
    \Else
        \State $F^{\mathrm{pre}}\gets F^{\mathrm{pre}}+F^L$, $(F,n)\gets(F^R,n_R)$
    \EndIf
\EndWhile
\State \Return boundary flow $\widetilde G_\tau=F^{\mathrm{pre}}+F$
\end{algorithmic}
\end{algorithm}

\section{Computational Cost}
\label{sec:complexity}

The results in~\Cref{thm:forward,thm:split,thm:adaptive} establish that the proposed procedures exactly preserve the distribution induced by conventional episode-by-episode simulation. We now compare the simulator-side computational cost required to generate these equivalent outputs. For direct batch simulation, our cost analysis focuses on the forward Markov-flow sampling implementation.

\textbf{Computational convention.}
Throughout this section, we assume that basic arithmetic on the counts, sampling from binomial and hypergeometric distributions, and sampling a single environment transition each have $O(1)$ cost. Consequently, a $q$-category multinomial or multivariate-hypergeometric draw costs $O(q)$, independent of the total count.

For a general Markov policy, a forward Markov-flow sample requires one $S$-category multinomial draw, $HS$ draws over $A$ action categories, and $HSA$ draws over $S$ next-state categories. Thus, its simulator-side computational cost is $C_{\mathrm{flow}}=O(S+HSA+HS^2A)=O(HS^2A)$, independent of the batch size $m$. The multivariate-hypergeometric splitting procedure in Section~\ref{sec:adaptive} has the same computational cost. Under the deterministic-policy specialization of Remark~\ref{rem:deterministic-policy}, the action-count draws disappear, and only one $S$-category transition row is sampled for each $(s,h)$. With inactive action rows represented implicitly as zero, both forward flow sampling and multivariate-hypergeometric splitting therefore have simulator-side cost $C_{\mathrm{flow}}^{\mathrm{det}}=O(S+HS^2)=O(HS^2)$.

For the applications in \Cref{sec:examples}, learner updates are computed from aggregate visitation, transition, or moment statistics and introduce no additional dependence on $m$. In all three applications, the learner-side cost per update is at most $O(HS^2A)$ and is accounted for separately when relevant.

One apparent exception is the broad class of Q-learning methods based on the visit-dependent learning rate $\alpha_t=(H+1)/(H+t)$~\citep{jin2018q}. A direct implementation applies the learning-rate update after individual visits and therefore appears to retain learner-side computation linear in the batch size $m$. However, suitably designed round-based variants can aggregate not only the cumulative learning-rate effect but also the trajectory-dependent statistics and confidence bonuses required by the update. As shown in Appendix~\ref{app:further-extensions} using the single-agent specialization of FedQ-Bernstein~\citep{zheng2023federated} as a representative example, these quantities can be computed from aggregate round statistics and pre- and post-round visit counts with cost independent of $m$. Thus, visit-dependent learning rates do not inherently preclude batch-size-independent learner updates.

\subsection{Cost of Direct Batch Simulation}
Direct batch simulation applies whenever the sampling rule remains fixed over a predetermined batch, and the required output is flow measurable. This includes repeated simulator-based queries used to construct empirical transition models~\citep{gheshlaghi2013minimax,sidford2018near,li2024breaking}, synthetic offline data generated under a fixed behavior policy~\citep{yin2021near,yin2021towards,shi2022pessimistic,li2024settling}, and online algorithms with predetermined batch or policy-update schedules~\citep{qiao2022sample,zhang2022near}. In each case, the resulting samples can be represented and processed through aggregate flow statistics, while forward Markov-flow sampling can generate these statistics directly without materializing individual trajectories or transitions.
\begin{proposition}[Computational cost of direct batch simulation]
\label{prop:direct-complexity}
For a predetermined batch of $m$ episodes, conventional episode-by-episode simulation has simulator-side computational cost $\Theta(mH)$, whereas direct Markov-flow sampling has cost $C_{\mathrm{flow}}=O(HS^2A)$. If the sampling policy is deterministic, the specialization in Remark~\ref{rem:deterministic-policy} has cost $C_{\mathrm{flow}}^{\mathrm{det}}=O(HS^2)$.
\end{proposition}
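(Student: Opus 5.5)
The argument is a direct operation count under the computational convention of Section~\ref{sec:complexity}: binomial and hypergeometric draws, single environment transitions, and count arithmetic each cost $O(1)$, so a $q$-category multinomial or multivariate-hypergeometric draw costs $O(q)$ regardless of its total count. I will bound the conventional simulator and the forward Markov-flow sampler separately, and then specialize the latter to deterministic policies.

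\textbf{Conventional simulation.} Each of the $m$ episodes needs one draw of $X_1\sim\nu$. Then, for each $h\in[H]$, it needs one action draw from $\pi_h(\cdot\mid X_h)$ and one next-state draw from $P_h(\cdot\mid X_h,A_h)$. Under the convention, each of these is a single $O(1)$ sampling operation, and recording the transition is $O(1)$ arithmetic. This gives an upper bound of $O(m(2H+1))=O(mH)$. For the matching lower bound, the procedure materializes every trajectory $\omega_i$, which has $2H+1$ entries. So every episode-step pair $(i,h)$ requires at least one elementary operation, which gives $\Omega(mH)$ and hence $\Theta(mH)$. I will state this lower bound relative to the conventional interface, which by definition generates each transition individually, and not as a lower bound over all possible algorithms.

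\textbf{Forward Markov-flow sampling.} I will go through \Cref{eq:forward1,eq:forward2,eq:forward3,eq:forward4} in order.
\begin{itemize}
\item \Cref{eq:forward1} is one $S$-category multinomial, costing $O(S)$.
\item For each $h$, \Cref{eq:forward2} is $S$ multinomials over $A$ categories, costing $O(SA)$.
\item For each $h$, \Cref{eq:forward3} is $SA$ multinomials over $S$ categories, costing $O(S^2A)$.
\item For each $h$, the propagation \Cref{eq:forward4} sums $SA$ terms for each of the $S$ values of $s'$, costing $O(S^2A)$ arithmetic.
\end{itemize}
Summing over $h$ gives $O(S+H(SA+S^2A))=O(HS^2A)=C_{\mathrm{flow}}$. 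The total count $m$ enters only as the multinomial size parameter, and under the convention this does not affect the cost.

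\textbf{Deterministic specialization.} Under Remark~\ref{rem:deterministic-policy}, \Cref{eq:forward2} becomes the assignment $C_h(s,\pi_h(s))=N_h(s)$, costing $O(S)$ per step. \Cref{eq:forward3} is sampled only on the $S$ active rows, costing $O(S^2)$ per step. Propagation only sums over active rows, since inactive rows are represented implicitly as zero, and this costs $O(S^2)$ per step. The total is $O(S+HS^2)=O(HS^2)$.

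\textbf{Main obstacle.} The step needing the most care is justifying that a multinomial draw with arbitrarily large $N$ costs $O(q)$, and that the propagation is $m$-independent. I will handle the first by the standard sequential conditional-binomial decomposition: $q-1$ binomial draws, each $O(1)$ under the convention. The second holds because integer additions are $O(1)$ under the convention.
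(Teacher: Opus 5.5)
Your proposal is correct and follows essentially the same route as the paper, which justifies the proposition by the operation count in Section~\ref{sec:complexity}: one $S$-category multinomial, $HS$ action-row draws over $A$ categories, and $HSA$ transition-row draws over $S$ categories, each costing $O(q)$ under the convention, with the action draws disappearing and only $HS$ active transition rows remaining in the deterministic case. Two details you spell out are left implicit in the paper: the $O(HS^2A)$ propagation arithmetic in \Cref{eq:forward4}, and the interface-relative $\Omega(mH)$ lower bound for conventional simulation.
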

Thus, the simulator-side computational cost improves from $O(m)$ to $O(1)$ in its dependence on the batch size $m$. By~\Cref{thm:forward} and Corollary~\ref{cor:flow}, this reduction preserves the distribution of the Markov flow and any flow-measurable computation without generating the individual trajectories.

\subsection{Cost of Adaptive Batch Simulation}

Adaptive batch simulation is relevant to online algorithms whose policy updates are triggered by data-dependent visitation thresholds~\citep{bai2019provably,zhang2020almost}. Suppose a run of $K\geq 2$ episodes contains a random number $B$ of adaptive batches, possibly including a terminal partial batch truncated at episode $K$, with $B\leq B_{\max}$ almost surely. Conventional episode-by-episode simulation generates trajectories sequentially and therefore has simulator-side computational cost $\Theta(HK)$.

Our adaptive procedure instead locates each data-dependent boundary through binary refinement. If $k$ episodes have already been accepted, we choose the candidate size as $m=K-k$. Thus, the initial forward Markov-flow sample has cost $C_{\mathrm{flow}}$, and locating the boundary requires at most $\lceil\log_2 K\rceil$ conditional splits, each with the same computational cost. 
\begin{proposition}[Computational cost of adaptive batch simulation]
\label{prop:adaptive-complexity}
Suppose each candidate flow in Algorithm~\ref{alg:fastforward} is generated by forward Markov-flow sampling, and the number of data-dependent batches satisfies $B\leq B_{\max}$ almost surely for some deterministic $B_{\max}$. Conventional episode-by-episode simulation incurs simulator-side computational cost $\Theta(HK)$, whereas Algorithm~\ref{alg:fastforward} has simulator-side computational cost $O(B_{\max}HS^2A\log K)$. If the policy in every adaptive batch is deterministic, the corresponding cost is $O(B_{\max}HS^2\log K)$.
\end{proposition}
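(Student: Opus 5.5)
The proof is a direct operation count under the computational convention of Section~\ref{sec:complexity}, done separately for the two simulators.

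\textbf{Conventional simulation.} A run of $K$ episodes samples exactly $K$ initial states, $KH$ actions and $KH$ next-state transitions, each at cost $O(1)$. This gives the upper bound $O(HK)$. Every transition must actually be drawn, so the cost is also at least $KH$, hence $\Theta(HK)$. Boundary checks add at most $O(1)$ amortized work per episode; if one prefers, they can be treated as learner-side and excluded.

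\textbf{Adaptive simulation.} I would bound the cost of a single adaptive batch and then sum over batches.

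\emph{Cost of one batch.} Suppose $k$ episodes have been accepted before the batch starts. Algorithm~\ref{alg:fastforward} is called with candidate size $m=K-k\le K$. Its line 1 is one forward Markov-flow sample, which costs $C_{\mathrm{flow}}=O(HS^2A)$ by the count of multinomial draws given before Proposition~\ref{prop:direct-complexity}. If $g(Z,F_m)=0$, the algorithm stops there. Otherwise the while loop runs.
\begin{itemize}
\item Each iteration performs one multivariate-hypergeometric split, costing $O(HS^2A)$ by the same draw count.
\item It also performs one flow addition $F^{\mathrm{pre}}+F^L$ and one subtraction $F-F^L$. Each is $O(HS^2A)$ arithmetic on count arrays of that size.
\end{itemize}
By Theorem~\ref{thm:adaptive}, the loop runs at most $\lceil\log_2 m\rceil\le\lceil\log_2 K\rceil$ times, since $n$ goes from $n$ to at most $\lceil n/2\rceil$ each iteration. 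The per-batch cost is therefore $O(HS^2A)(1+\lceil\log_2 K\rceil)=O(HS^2A\log K)$, using $K\ge2$ so that $1\le\log_2 K$.

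\emph{Evaluating the trigger.} The step I expect to need care is the cost of evaluating $g$. It is not a sampling step, so I would count it as learner-side, in line with the convention stated in Section~\ref{sec:complexity}. If it must be counted simulator-side, I would assume, as holds for cumulative threshold rules, that $g(Z,\cdot)$ can be evaluated in $O(HS^2A)$ time from a flow. That leaves the per-batch bound unchanged.

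\emph{Summing over batches.} After a batch ends, the accepted prefix grows and the next candidate is again of size at most $K$. Summing the per-batch bound over the $B\le B_{\max}$ batches, including the terminal truncated one (where $g(Z,F_m)=0$ costs only $C_{\mathrm{flow}}$), gives a total of $O(B_{\max}HS^2A\log K)$ almost surely. This bound is deterministic, so it holds regardless of randomness in $B$.

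\textbf{Deterministic policies.} Under Remark~\ref{rem:deterministic-policy}, every forward sample and split costs $C_{\mathrm{flow}}^{\mathrm{det}}=O(HS^2)$. Flow additions and subtractions also touch only the active rows, so they cost $O(HS^2)$. Repeating the argument above gives $O(B_{\max}HS^2\log K)$.
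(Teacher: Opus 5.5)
Your proposal is correct and follows essentially the same argument as the paper. Each adaptive batch uses one forward Markov-flow sample plus at most $\lceil\log_2 K\rceil$ hypergeometric splits (the candidate size is $K-k\le K$), each costing $C_{\mathrm{flow}}$, or $C_{\mathrm{flow}}^{\mathrm{det}}$ for deterministic policies, and this is summed over at most $B_{\max}$ batches. Your extra accounting of flow additions and subtractions is harmless, and keeping trigger evaluation out of the simulator-side count matches the paper, which adds it separately as an $O(B_{\max}C_g\log K)$ term.
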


By~\Cref{thm:split,thm:adaptive}, this reduction preserves the exact distribution of every boundary flow. In stage-based tabular algorithms with geometrically increasing visitation thresholds~\citep{bai2019provably,zhang2020almost}, the number of updates typically admits a deterministic upper bound logarithmic in $K$, up to factors in $(H,S,A)$. Thus, when $B_{\max}=O(\log K)$ in its dependence on $K$, the simulator-side dependence on the number of episodes becomes $O(\log^2 K)$ rather than $O(K)$.

If evaluating the prefix-monotone trigger costs $C_g$ per call, Algorithm~\ref{alg:fastforward} makes $O(\log K)$ trigger evaluations per adaptive batch, giving total cost $O\!\left(B_{\max}(C_{\mathrm{flow}}+C_g)\log K\right)$. For UCB-Advantage in Section~\ref{sec:examples}, each adaptive-batch policy is deterministic. Hence, for every $(s,h)$, only the active action $a=\pi_h(s)$ can receive additional visits as the candidate prefix grows, while all inactive-action counts remain zero. The stage-threshold trigger therefore inspects only the $HS$ active state-action-step tuples, while the reference-value part contains only $HS$ state-step conditions. Thus, $C_g=O(HS)$, and together with $C_{\mathrm{flow}}^{\mathrm{det}}=O(HS^2)$, the total deterministic-policy cost remains $O(B_{\max}HS^2\log K)$.

\section{Representative Applications}
\label{sec:examples}

We now illustrate how the fast-simulation framework applies to three RL settings: simulator-based RL, offline RL, and online RL. The role of simulation differs across these settings. In simulator-based RL, repeated generative-model queries can be sampled directly in aggregate form. In offline RL, datasets generated under a fixed behavior policy can be sampled directly as aggregate Markov flows. In online RL, where the sampling policy evolves with the learner, adaptive batch simulation can fast-forward through intervals between data-dependent update boundaries. Together, these examples demonstrate the applicability of our framework across these three RL settings.

\subsection{Simulator-Based RL: Model-Based Planning with a Generative Model}
Consider model-based planning with a generative model~\citep{li2024breaking}. Suppose each $(s,a,h)$ is queried exactly $J$ times. Under conventional simulation, the simulator independently draws $J$ next states from $P_h(\cdot\mid s,a)$ one by one and uses them to construct the empirical transition model. Repeating this procedure for all $(s,a,h)$ requires $\Theta(SAHJ)$ simulator-side computational cost.

For this application, we assume simulator-side access to the transition probability vectors $P_h(\cdot\mid s,a)$, which need not be available to the learning algorithm. Under this access model, direct transition-count sampling, corresponding to \Cref{eq:forward3}, produces an empirical transition model with the same distribution as conventional generative-model sampling without generating the $J$ individual next-state samples. Specifically, with $C_h(s,a)=J$, \Cref{eq:forward3} gives
\begin{equation*}
E_h(s,a,\cdot)\sim\Mult\!\left(J,P_h(\cdot\mid s,a)\right),\text{ and thus }\widehat P_h(s'\mid s,a)=E_h(s,a,s')/J.
\end{equation*}
Thus, the resulting $\widehat P_h(\cdot\mid s,a)$, and hence the full empirical model $\widehat P$, has the same distribution as under conventional generative-model sampling. Furthermore, by Proposition~\ref{prop:direct-complexity}, direct batch simulation has simulator-side cost $O(S^2AH)$, eliminating the dependence on the prescribed query count $J$ while leaving the model-based planning procedure unchanged.

\subsection{Offline RL: Vanilla Pessimistic Value Iteration (VPVI)}

Consider the offline RL algorithm VPVI~\citep{yin2021towards}. Suppose an offline dataset consists of $K$ trajectories collected under a fixed behavior policy $\mu$. Under conventional simulation, these trajectories are generated one episode at a time, incurring $\Theta(HK)$ simulator-side computational cost.

VPVI builds the empirical transition model from the dataset as $\widehat P_h(s'\mid s,a)=E_h(s,a,s')/C_h(s,a)$ whenever $C_h(s,a)>0$. Under our deterministic-reward setting, the empirical rewards are also determined by the state-action counts. The subsequent pessimistic value iteration depends on the dataset only through these empirical quantities and the visitation counts $C_h(s,a)$.

Since the behavior policy $\mu$ is fixed and the dataset size $K$ is predetermined, the aggregate dataset statistics required by VPVI can instead be generated by direct batch simulation. Applying~\Cref{eq:forward1,eq:forward2,eq:forward3,eq:forward4} with policy $\mu$ and batch size $K$ directly generates the Markov flow $F_K=(N_1,C,E)$, from which all quantities required by VPVI can be computed without materializing individual trajectories. By~\Cref{thm:forward} and Corollary~\ref{cor:flow}, the resulting value estimates and learned policy have the same distribution as under conventional episode-by-episode data generation.

By Proposition \ref{prop:direct-complexity}, the simulator-side computational cost is reduced from $\Theta(HK)$ to $O(HS^2A)$, eliminating its dependence on the dataset size $K$, while leaving the VPVI computation unchanged. If the behavior policy $\mu$ is deterministic, the cost further reduces to $O(HS^2)$ by Remark~\ref{rem:deterministic-policy}.
\subsection{Online RL: UCB-Advantage}

Consider the online RL algorithm UCB-Advantage~\citep{zhang2020almost}. Under conventional simulation, the algorithm generates $K$ trajectories sequentially, incurring $\Theta(HK)$ simulator-side computational cost. Unlike the previous two settings, the policy is updated adaptively based on observed data, so the entire interaction cannot be generated as a single predetermined batch. Between updates, however, UCB-Advantage follows the fixed deterministic greedy policy $\pi_h(s)\in\argmax_{a\in\cA}Q_h(s,a)$.

UCB-Advantage updates its value estimates only when prescribed visitation thresholds are reached. For each $(s,a,h)$, visits are partitioned into geometrically growing stages with lengths satisfying $e_1=H$ and $e_{j+1}=\left\lfloor(1+1/H)e_j\right\rfloor$. At each stage end, the estimates are updated, and the intra-stage accumulators are reset. The reference value $V_h^{\mathrm{ref}}(s)$ is also updated at prescribed state-level visitation thresholds. We treat these stage and reference-value updates as adaptive boundaries.

Let $n_h(s,a)$ and $n_h(s)=\sum_{a\in\mathcal A} n_h(s,a)$ denote the cumulative
state-action and state visit counts before the current batch, respectively, and let $b_h^{\mathrm{stage}}(s,a)$ and $b_h^{\mathrm{ref}}(s)$ denote the next active stage and reference thresholds. For the prefix flow $G_j$, let $C_h^{(j)}(s,a)$ and $N_h^{(j)}(s)$ denote its state-action and state counts, respectively, and define $g_{\mathrm{UCB}}(Z,G_j)$ as
\begin{equation*}
\begin{split}
\ind\bigl\{&\exists(s,a,h):n_h(s,a)+C_h^{(j)}(s,a)\ge b_h^{\mathrm{stage}}(s,a)\text{ or }\exists(s,h):n_h(s)+N_h^{(j)}(s)\ge b_h^{\mathrm{ref}}(s)\bigr\}.
\end{split}
\end{equation*}
Since these visitation counts are nondecreasing in $j$, the trigger is prefix monotone. Moreover, within the current batch only $C_h^{(j)}(s,\pi_h(s))$ can change for each $(s,h)$; all $C_h^{(j)}(s,a)$ with $a\neq\pi_h(s)$ remain zero. Hence the trigger can be evaluated in $O(HS)$ time as described in Section~\ref{sec:complexity}.

Between consecutive boundaries, the policy and the value quantities entering the relevant accumulators remain fixed. Hence the visitation counts and first- and second-moment accumulators are flow measurable through~\Cref{eq:flow-function}. Together with the active thresholds stored in the pre-boundary learner state $Z$, the flow $G_\tau$ therefore determines all updates triggered at the boundary.

UCB-Advantage performs triggered updates immediately at step $h$ of the boundary episode, so the second requirement must be verified. Such an update modifies only $Q_h$, $V_h$, and possibly $V_h^{\mathrm{ref}}$, while decisions in the same episode use quantities indexed by $h+1,h+2,\ldots$. The step-$h$ accumulators depend only on $V_{h+1}$ and $V_{h+1}^{\mathrm{ref}}$, which are likewise unaffected. Hence postponing these updates does not alter any subsequent action or accumulator contribution within the boundary episode. Replaying the triggered updates in increasing order of $h$, with $Q_h$ and $V_h$ updated before $V_h^{\mathrm{ref}}(s)$ when both updates coincide, therefore yields the same post-episode learner state.

Thus, the boundary compatibility condition holds, and adaptive batch simulation can be applied between successive boundaries. By~\Cref{thm:adaptive}, Algorithm~\ref{alg:fastforward} reproduces the joint distribution of the post-boundary learner-state sequence under conventional episode-by-episode simulation.

Finally, each $(s,a,h)$ has at most $O(H\log K)$ stages, so the number of adaptive boundaries is $O(H^2SA\log K)$ including reference-value updates, and hence $B_{\max}=O(H^2SA\log K)$. Since each batch policy is deterministic, the flow-sampling cost is $O(B_{\max}HS^2\log K)$ by Proposition~\ref{prop:adaptive-complexity}; the $O(HS)$ trigger evaluations are dominated by this cost. Therefore, the simulator-side cost is $O(H^3S^3A\log^2 K)$, compared with $\Theta(HK)$ for conventional simulation. Thus, the dependence on $K$ is reduced to $O(\log^2 K)$ while preserving the joint distribution of the post-boundary learner states.

\section{Empirical Evaluation}
\label{sec:empirical}

We evaluate our framework on the three applications in \Cref{sec:examples}, comparing conventional episode-by-episode simulation with our exact fast simulator across four tabular MDP scales. We examine whether the two simulators produce consistent learner-relevant outcomes and compare their computational costs at different budgets. Full experimental settings and results are deferred to Appendix~\ref{sec:experiments}.

For model-based planning and VPVI, direct batch simulation produces learning outcomes closely consistent with conventional simulation, including value errors, policy gaps, and empirical transition errors, in agreement with the distributional equivalence established in Corollary~\ref{cor:flow}. At the same time, conventional simulation time grows approximately linearly with the number of queries or trajectories, whereas the runtime of direct Markov-flow sampling remains nearly constant with respect to the batch size, consistent with Proposition~\ref{prop:direct-complexity}. See Appendices~\ref{exp-planning} and~\ref{exp-vpvi}.

For UCB-Advantage, adaptive batch simulation closely reproduces both cumulative regret and the data-dependent batch-update behavior of conventional simulation, consistent with the distributional equivalence in \Cref{thm:adaptive}. Although adaptive splitting introduces additional overhead at small budgets, its runtime grows much more slowly with the number of episodes, consistent with Proposition~\ref{prop:adaptive-complexity}. In particular, once the algorithm reaches the accuracy regime where the average regret falls below $0.01$, the fast simulator is consistently faster across all tested scales, with the computational advantage increasing at larger budgets. Details are provided in Appendix~\ref{exp-ucb}. Overall, the experiments support both the distributional equivalence and computational improvements predicted by our theory.

\section{Conclusion}

We present an exact fast simulation framework for tabular RL that preserves the relevant distributions of the learning procedure. For predetermined batches, Markov flows can be obtained through parallel trajectory aggregation or, with direct access to the initial-state and transition distributions, through forward Markov-flow sampling without materializing individual trajectories. Exact hypergeometric splitting extends the framework to data-dependent boundaries through adaptive refinement. Under forward Markov-flow sampling, the simulator-side dependence on batch size is reduced from linear to constant for direct simulation and logarithmic for adaptive simulation. Applications to simulator-based, offline, and online RL demonstrate the applicability of our framework.

\bibliography{main}
\bibliographystyle{plainnat}

\appendix
\newpage
The appendices are organized as follows. Appendix~\ref{sec:related} discusses related work. Appendix~\ref{sec:experiments} provides the complete experimental setup and results. Appendix~\ref{app:proof-forward} proves the exactness of direct Markov-flow sampling in \Cref{thm:forward}, Appendix~\ref{app:proof-split} establishes the conditional correctness of exact flow splitting in \Cref{thm:split}, and Appendix~\ref{app:proof-adaptive} proves the exactness of adaptive batch simulation in \Cref{thm:adaptive}. Appendix~\ref{app:algorithm-review} reviews the learning algorithms used in \Cref{sec:examples}, and Appendix~\ref{app:further-extensions} discusses further applications to Q-learning with visit-dependent learning rates and federated RL.
\section{Related Work}
\label{sec:related}

\textbf{Parallel and batched RL simulation.}
Efficient RL systems improve data-generation throughput by executing many environments concurrently. Sample Factory~\citep{pmlr-v119-petrenko20a} combines asynchronous sampling with GPU-based inference. Large Batch Simulation~\citep{shacklett2021large} jointly designs rendering and simulation around batches of requests, sharing scene resources and amortizing rendering, data-loading, and synchronization costs. Galactic~\citep{Berges_2023_CVPR} further optimizes physics, rendering, inference, and learning together for embodied RL. These systems generate environment transitions efficiently through parallelism and batching. Our framework addresses a complementary opportunity in finite tabular MDPs: directly sampling the aggregate statistics of many episodes when the learning procedure does not require their individual trajectories.

\textbf{Simulation with aggregate counts.}
Collective graphical models perform probabilistic inference on aggregate sufficient statistics of an individual-level graphical model~\citep{NIPS2011_fccb3cdc}. In collective multiagent RL, aggregate state, state-action, and transition counts are used to represent population-level dynamics and perform count-based learner updates~\citep{NEURIPS2018_94bb077f}. These count-based perspectives are related to our Markov-flow representation. Our framework instead studies exact batch simulation across independent episodes of a learning procedure, formalized through a flow-measurable interface. Beyond direct generation of aggregate statistics, we introduce exact conditional splitting of a previously sampled multi-step flow to locate the first data-dependent update boundary. Under the stated compatibility conditions, updating the learner and restarting simulation preserves the joint distribution of its boundary learner states.

The preceding discussion positions our framework relative to existing simulation methods. We next discuss its relationship to simulator-based, offline, and online RL, together with extensions to federated RL.

\textbf{Simulator-based RL.}
Generative models allow transition samples to be queried from specified state-action pairs or state-action-step triples. Prior work studies the sample and computational complexity of learning under this interface~\citep{sidford2018near}. Model-based methods estimate transition kernels from repeated queries and plan in the resulting empirical MDP; \citet{li2024breaking} establish guarantees covering finite-horizon settings, and \citet{kalyanakrishnan2025viewcertaintyequivalencemethodpac} revisit certainty-equivalence planning through the trajectory tree method. We study the simulator-side cost of realizing the statistics required by a prescribed learning procedure. With simulator-side access to transition probability vectors, the empirical transition counts associated with repeated queries can be sampled directly. This preserves their joint distribution, with computational costs measured under the convention in Section~\ref{sec:complexity}. 

\textbf{Offline RL.}
Offline RL learns from a fixed dataset without further environment interaction. In finite-horizon tabular MDPs, pessimistic value-iteration methods use empirical visitation and transition statistics to construct policies~\citep{yin2021towards,li2024settling}. The tabular implementation of selective uncertainty propagation also uses empirical transition models and count-based bonuses~\citep{krishnamurthy2025selective}. These methods motivate direct generation of the statistics needed from synthetic offline datasets. In our VPVI example, the behavior policy and dataset size are fixed, and, under deterministic rewards, the dataset enters the learner only through visitation and transition counts. Direct Markov-flow sampling therefore generates required data statistics without materializing individual trajectories, preserving the distribution of the value estimates and policy.

\textbf{Batch and low-switching online RL.}
Another line of work reduces online adaptivity by limiting policy updates. \citet{bai2019provably} study tabular Q-learning with low switching cost, while UCB-Advantage~\citep{zhang2020almost} combines reference-advantage decomposition with stage-wise updates. Subsequent work further reduces switching complexity~\citep{qiao2022sample}, studies multi-batch RL with predetermined update schedules~\citep{zhang2022near}, and develops a general framework for rare policy switching and batch learning~\citep{pmlr-v235-xiong24d}. Recent model-free algorithms improve the tradeoffs among regret, burn-in, and switching costs~\citep{zheng2024gap,zhang2025regret,zhang2025q}. These methods motivate simulation over intervals with a fixed policy. Low switching alone, however, does not ensure flow measurability: visit-index-dependent weights or sequential updates may require additional trajectory-order information. Our adaptive method applies when the required statistics are flow measurable, the trigger is prefix monotone, and the boundary compatibility condition holds. UCB-Advantage provides a concrete example in Section~\ref{sec:examples}.

\textbf{Federated RL.}
Event-triggered federated Q-learning achieves logarithmic communication cost in tabular episodic MDPs~\citep{zheng2023federated,zheng2024federated}, and subsequent work establishes gap-dependent regret guarantees~\citep{zhang2025gapdependent}. These settings provide natural applications of our exact batch simulation framework. In Appendix~\ref{app:further-extensions}, we verify the required conditions for FedQ-Bernstein and FedQ-Advantage, illustrating both round-based aggregate updates and distributed local exploration.

\section{Numerical Experiments}
\label{sec:experiments}

We empirically evaluate our fast simulation framework on model-based planning with a generative model, offline Vanilla Pessimistic Value Iteration (VPVI), and online UCB-Advantage.\footnote{The experiments were split by algorithm, scale, and either simulation replicate (fixed-MDP comparison) or MDP instance (cross-MDP timing). Each task ran on one core of an Intel Xeon Gold 6430 with 4~GB of allocated memory and finished within 15 hours.} In particular, we examine whether the learner-relevant quantities produced by the two simulation methods are consistent with the theoretical distributional equivalence and whether the reduced simulation complexity yields substantial computational gains. We consider four tabular MDP sizes:
\[
 (H,S,A)\in\{(2,3,3),(5,5,5),(7,8,6),(10,15,10)\}.
\]
For each size, the fixed-MDP experiment holds the underlying MDP fixed and uses ten independent simulation replicates per method to assess the distributional equivalence of learner-relevant quantities. The separate cross-MDP timing experiment uses ten independently generated MDP instances, with the conventional episode-by-episode simulation and exact fast simulation methods evaluated on the same instance and simulation seed, to assess computational efficiency and scaling across problem instances. For each $(h,s,a)$, the transition probability vector $P_h(\cdot \mid s,a)$ is sampled independently from a symmetric Dirichlet distribution with unit concentration; deterministic rewards are drawn independently from $\operatorname{Unif}[0,1]$, and the initial-state distribution is uniform.

\textbf{Measurement and uncertainty.}
Both implementations use the same programming environment, learner implementation, and boundary checks, differing only in the simulation procedure. The simulator time covers only the generation of transition samples or episode batches and the construction of the aggregate statistics passed to the learner. The learner time covers the subsequent algorithmic update, including policy-value computation for performance tracking whenever the policy changes. MDP construction, file I/O, and plotting are excluded from both timings.

For short-running configurations, identified by a pilot wall time below 0.5 seconds, we use two warm-up runs followed by five timed blocks, with each block repeating fresh runs until approximately 0.5 seconds of total runtime is accumulated; the reported time is the median of the per-run runtimes across the five blocks.

Tables report sample means with 95\% percentile bootstrap confidence intervals (CIs) for learning statistics and sample medians with 95\% percentile bootstrap CIs for timing results, based on 10,000 resamples. Numerical entries in the tables are reported to three significant digits, with scientific notation used where appropriate. For learning statistics, the conventional and fast methods use independent simulation replicates on the same fixed MDP. Confidence intervals for each mean and for the difference in means are therefore constructed by resampling the two groups independently. For timing, the ten MDP instances serve as the sampling units; median runtimes and median same-instance runtime ratios are bootstrapped across instances.

\textbf{Budget grid.}
For the distributional comparisons, both the conventional and fast simulators are evaluated on the common grids $J=10^1,\ldots,10^6$ for model-based planning and $K=10^1,\ldots,10^8$ for VPVI and UCB-Advantage. The timing experiments use the same measured ranges for the conventional simulator, while the fast simulator is further evaluated at every power of ten up to $10^{15}$.

The extended timing range is motivated by sample-complexity guarantees for the corresponding learning problems. For model-based planning, $\widetilde O(H^3/\epsilon^2)$ queries per $(h,s,a)$ suffice to obtain a uniformly $\epsilon$-accurate value function. For VPVI, $\widetilde O(H^4 S C^\star/\epsilon^2)$ episodes suffice to learn a policy with value gap at most $\epsilon$ under the standard coverage condition, where $C^\star$ bounds the state-action occupancy ratio between an optimal policy and the behavior policy. For UCB-Advantage, $\widetilde O(H^3SA/\epsilon^2)$ episodes suffice to learn an $\epsilon$-optimal policy. For our largest problem scale, setting $\epsilon=10^{-4}$ yields aggregate query or episode counts on the order of $10^{13}$--$10^{14}$ for planning and UCB-Advantage, before logarithmic and confidence factors, while the VPVI scale additionally depends on the coverage coefficient $C^\star\geq 1$. These estimates motivate probing substantially larger simulation budgets. Accordingly, we extend the fast-simulator timing experiments up to $10^{15}$ as a large-count stress test.

Because running the conventional simulator beyond the measured range would incur prohibitive computational cost, we extrapolate its runtime under a linear-scaling assumption with respect to the number of queries or episodes. Specifically, starting from the largest directly measured budget, we multiply the measured runtime by ten for each additional order-of-magnitude increase in the budget; all extrapolated entries are marked by $\approx$. This assumption is supported empirically by the approximately linear growth of the measured conventional runtimes over the observed range, and theoretically by the fact that conventional simulation performs work proportional to the number of generated queries or episodes. In contrast, all reported fast-simulation runtimes are obtained from direct measurements. We define speedup as the ratio of the conventional runtime to the fast-simulation runtime, so a value greater than one indicates that the fast simulator is faster.

\subsection{Model-Based Planning with a Generative Model}
\label{exp-planning}
The planning budget $J$ is applied to every $(h,s,a)$, resulting in a total of $HSAJ$ generative-model queries. The conventional and fast methods differ only in how the transition counts are generated; the same planning routine is applied afterward. We use the average optimal-value error
\[
E_V(J)=\frac{1}{HS}\sum\nolimits_{h=1}^{H}\sum\nolimits_{s=1}^{S}
\left|\widehat V_h^\star(s)-V_h^\star(s)\right|,
\]
where $\widehat V^\star$ and $V^\star$ are the optimal value functions of the empirical and true MDPs, respectively. This continuous statistic directly measures the accuracy of the empirical planning solution. Since the empirical transition counts have the same joint distribution under conventional and fast simulation, the resulting empirical MDP, $\widehat V^\star$, and hence $E_V(J)$ are also equal in distribution.

Tables~\ref{tab:planning-average_v_error-s1}--\ref{tab:planning-average_v_error-s4} show that the conventional and fast simulators achieve very similar value errors across all four problem scales and budgets. Most bootstrap confidence intervals for the difference in mean error contain zero, with 21 of the 24 intervals including zero. These results are consistent with the distributional equivalence predicted by our theory.

The timing results in Tables~\ref{tab:planning-timing-s1}--\ref{tab:planning-timing-s4} show that conventional simulation is substantially slower and its runtime grows approximately linearly with the query budget $J$. In contrast, our fast method uses direct batch simulation to sample the transition counts directly, so its simulator runtime remains nearly unchanged as $J$ increases, consistent with our complexity result in Proposition \ref{prop:direct-complexity} that removes the dependence on $J$. Moreover, the learner time remains below $0.007$ seconds in all tested configurations and is substantially smaller than the conventional simulation time at large scales and budgets. This shows that simulation is the dominant computational bottleneck in this regime, making the reduction in simulator-side complexity practically significant.

\begin{table}[htbp]
\centering
\tiny
\caption{Average optimal-value error for $(H,S,A)=(2,3,3)$. Entries report the mean and 95\% percentile bootstrap confidence interval over ten independent simulation seeds on a fixed MDP. The error difference is defined as fast minus conventional.}
\label{tab:planning-average_v_error-s1}
\setlength{\tabcolsep}{1.5pt}
\begin{tabular}{rlll}
\toprule
Budget & Conventional & Fast & Difference \\
\midrule
$10^{1}$ & $1.41\mathord{\times}10^{-2}\,[1.08\mathord{\times}10^{-2},1.79\mathord{\times}10^{-2}]$ & $1.08\mathord{\times}10^{-2}\,[8.27\mathord{\times}10^{-3},1.35\mathord{\times}10^{-2}]$ & $-3.29\mathord{\times}10^{-3}\,[-7.79\mathord{\times}10^{-3},1.01\mathord{\times}10^{-3}]$ \\
$10^{2}$ & $3.58\mathord{\times}10^{-3}\,[2.72\mathord{\times}10^{-3},4.42\mathord{\times}10^{-3}]$ & $3.21\mathord{\times}10^{-3}\,[2.37\mathord{\times}10^{-3},4.03\mathord{\times}10^{-3}]$ & $-3.67\mathord{\times}10^{-4}\,[-1.59\mathord{\times}10^{-3},8.24\mathord{\times}10^{-4}]$ \\
$10^{3}$ & $1.30\mathord{\times}10^{-3}\,[8.96\mathord{\times}10^{-4},1.72\mathord{\times}10^{-3}]$ & $8.97\mathord{\times}10^{-4}\,[6.69\mathord{\times}10^{-4},1.14\mathord{\times}10^{-3}]$ & $-4.06\mathord{\times}10^{-4}\,[-8.84\mathord{\times}10^{-4},6.27\mathord{\times}10^{-5}]$ \\
$10^{4}$ & $3.79\mathord{\times}10^{-4}\,[2.91\mathord{\times}10^{-4},4.69\mathord{\times}10^{-4}]$ & $2.90\mathord{\times}10^{-4}\,[2.34\mathord{\times}10^{-4},3.37\mathord{\times}10^{-4}]$ & $-8.91\mathord{\times}10^{-5}\,[-1.96\mathord{\times}10^{-4},1.17\mathord{\times}10^{-5}]$ \\
$10^{5}$ & $1.25\mathord{\times}10^{-4}\,[9.90\mathord{\times}10^{-5},1.52\mathord{\times}10^{-4}]$ & $8.20\mathord{\times}10^{-5}\,[6.27\mathord{\times}10^{-5},9.99\mathord{\times}10^{-5}]$ & $-4.28\mathord{\times}10^{-5}\,[-7.67\mathord{\times}10^{-5},-1.16\mathord{\times}10^{-5}]$ \\
$10^{6}$ & $3.51\mathord{\times}10^{-5}\,[2.38\mathord{\times}10^{-5},4.70\mathord{\times}10^{-5}]$ & $2.49\mathord{\times}10^{-5}\,[1.80\mathord{\times}10^{-5},3.14\mathord{\times}10^{-5}]$ & $-1.02\mathord{\times}10^{-5}\,[-2.39\mathord{\times}10^{-5},2.96\mathord{\times}10^{-6}]$ \\
\bottomrule
\end{tabular}
\end{table}

\begin{table}[htbp]
\centering
\tiny
\caption{Average optimal-value error for $(H,S,A)=(5,5,5)$. Entries report the mean and 95\% percentile bootstrap confidence interval over ten independent simulation seeds on a fixed MDP. The error difference is defined as fast minus conventional.}
\label{tab:planning-average_v_error-s2}
\setlength{\tabcolsep}{1.5pt}
\begin{tabular}{rlll}
\toprule
Budget & Conventional & Fast & Difference \\
\midrule
$10^{1}$ & $2.48\mathord{\times}10^{-2}\,[2.06\mathord{\times}10^{-2},2.92\mathord{\times}10^{-2}]$ & $1.98\mathord{\times}10^{-2}\,[1.74\mathord{\times}10^{-2},2.21\mathord{\times}10^{-2}]$ & $-4.94\mathord{\times}10^{-3}\,[-9.89\mathord{\times}10^{-3},-2.74\mathord{\times}10^{-4}]$ \\
$10^{2}$ & $6.99\mathord{\times}10^{-3}\,[6.15\mathord{\times}10^{-3},7.77\mathord{\times}10^{-3}]$ & $8.55\mathord{\times}10^{-3}\,[6.91\mathord{\times}10^{-3},1.04\mathord{\times}10^{-2}]$ & $1.56\mathord{\times}10^{-3}\,[-2.88\mathord{\times}10^{-4},3.62\mathord{\times}10^{-3}]$ \\
$10^{3}$ & $1.91\mathord{\times}10^{-3}\,[1.57\mathord{\times}10^{-3},2.30\mathord{\times}10^{-3}]$ & $2.07\mathord{\times}10^{-3}\,[1.71\mathord{\times}10^{-3},2.54\mathord{\times}10^{-3}]$ & $1.55\mathord{\times}10^{-4}\,[-3.81\mathord{\times}10^{-4},7.39\mathord{\times}10^{-4}]$ \\
$10^{4}$ & $7.60\mathord{\times}10^{-4}\,[6.21\mathord{\times}10^{-4},8.89\mathord{\times}10^{-4}]$ & $6.34\mathord{\times}10^{-4}\,[5.12\mathord{\times}10^{-4},8.00\mathord{\times}10^{-4}]$ & $-1.27\mathord{\times}10^{-4}\,[-3.13\mathord{\times}10^{-4},9.20\mathord{\times}10^{-5}]$ \\
$10^{5}$ & $2.08\mathord{\times}10^{-4}\,[1.77\mathord{\times}10^{-4},2.36\mathord{\times}10^{-4}]$ & $2.18\mathord{\times}10^{-4}\,[1.93\mathord{\times}10^{-4},2.45\mathord{\times}10^{-4}]$ & $1.04\mathord{\times}10^{-5}\,[-2.74\mathord{\times}10^{-5},5.09\mathord{\times}10^{-5}]$ \\
$10^{6}$ & $6.40\mathord{\times}10^{-5}\,[5.51\mathord{\times}10^{-5},7.41\mathord{\times}10^{-5}]$ & $6.45\mathord{\times}10^{-5}\,[5.68\mathord{\times}10^{-5},7.30\mathord{\times}10^{-5}]$ & $4.12\mathord{\times}10^{-7}\,[-1.23\mathord{\times}10^{-5},1.27\mathord{\times}10^{-5}]$ \\
\bottomrule
\end{tabular}
\end{table}

\begin{table}[htbp]
\centering
\tiny
\caption{Average optimal-value error for $(H,S,A)=(7, 8, 6)$. Entries report the mean and 95\% percentile bootstrap confidence interval over ten independent simulation seeds on a fixed MDP. The error difference is defined as fast minus conventional.}
\label{tab:planning-average_v_error-s3}
\setlength{\tabcolsep}{1.5pt}
\begin{tabular}{rlll}
\toprule
Budget & Conventional & Fast & Difference \\
\midrule
$10^{1}$ & $2.58\mathord{\times}10^{-2}\,[2.26\mathord{\times}10^{-2},2.90\mathord{\times}10^{-2}]$ & $2.91\mathord{\times}10^{-2}\,[2.49\mathord{\times}10^{-2},3.39\mathord{\times}10^{-2}]$ & $3.29\mathord{\times}10^{-3}\,[-2.15\mathord{\times}10^{-3},9.02\mathord{\times}10^{-3}]$ \\
$10^{2}$ & $8.29\mathord{\times}10^{-3}\,[7.45\mathord{\times}10^{-3},9.13\mathord{\times}10^{-3}]$ & $7.45\mathord{\times}10^{-3}\,[6.70\mathord{\times}10^{-3},8.44\mathord{\times}10^{-3}]$ & $-8.43\mathord{\times}10^{-4}\,[-1.97\mathord{\times}10^{-3},4.28\mathord{\times}10^{-4}]$ \\
$10^{3}$ & $2.53\mathord{\times}10^{-3}\,[2.31\mathord{\times}10^{-3},2.78\mathord{\times}10^{-3}]$ & $3.07\mathord{\times}10^{-3}\,[2.67\mathord{\times}10^{-3},3.47\mathord{\times}10^{-3}]$ & $5.48\mathord{\times}10^{-4}\,[8.15\mathord{\times}10^{-5},1.00\mathord{\times}10^{-3}]$ \\
$10^{4}$ & $8.01\mathord{\times}10^{-4}\,[6.89\mathord{\times}10^{-4},9.29\mathord{\times}10^{-4}]$ & $8.81\mathord{\times}10^{-4}\,[7.68\mathord{\times}10^{-4},1.03\mathord{\times}10^{-3}]$ & $8.01\mathord{\times}10^{-5}\,[-8.82\mathord{\times}10^{-5},2.64\mathord{\times}10^{-4}]$ \\
$10^{5}$ & $2.68\mathord{\times}10^{-4}\,[2.45\mathord{\times}10^{-4},2.92\mathord{\times}10^{-4}]$ & $2.58\mathord{\times}10^{-4}\,[2.31\mathord{\times}10^{-4},2.90\mathord{\times}10^{-4}]$ & $-1.01\mathord{\times}10^{-5}\,[-4.69\mathord{\times}10^{-5},2.88\mathord{\times}10^{-5}]$ \\
$10^{6}$ & $7.63\mathord{\times}10^{-5}\,[6.71\mathord{\times}10^{-5},8.64\mathord{\times}10^{-5}]$ & $8.77\mathord{\times}10^{-5}\,[7.68\mathord{\times}10^{-5},9.92\mathord{\times}10^{-5}]$ & $1.14\mathord{\times}10^{-5}\,[-3.45\mathord{\times}10^{-6},2.66\mathord{\times}10^{-5}]$ \\
\bottomrule
\end{tabular}
\end{table}

\begin{table}[htbp]
\centering
\tiny
\caption{Average optimal-value error for $(H,S,A)=(10, 15, 10)$. Entries report the mean and 95\% percentile bootstrap confidence interval over ten independent simulation seeds on a fixed MDP. The error difference is defined as fast minus conventional.}
\label{tab:planning-average_v_error-s4}
\setlength{\tabcolsep}{2pt}
\begin{tabular}{rlll}
\toprule
Budget & Conventional & Fast & Difference \\
\midrule
$10^{1}$ & $1.95\mathord{\times}10^{-2}\,[1.76\mathord{\times}10^{-2},2.23\mathord{\times}10^{-2}]$ & $1.70\mathord{\times}10^{-2}\,[1.53\mathord{\times}10^{-2},1.94\mathord{\times}10^{-2}]$ & $-2.52\mathord{\times}10^{-3}\,[-5.85\mathord{\times}10^{-3},6.62\mathord{\times}10^{-4}]$ \\
$10^{2}$ & $5.24\mathord{\times}10^{-3}\,[4.92\mathord{\times}10^{-3},5.64\mathord{\times}10^{-3}]$ & $5.73\mathord{\times}10^{-3}\,[5.34\mathord{\times}10^{-3},6.14\mathord{\times}10^{-3}]$ & $4.81\mathord{\times}10^{-4}\,[-6.87\mathord{\times}10^{-5},1.02\mathord{\times}10^{-3}]$ \\
$10^{3}$ & $1.85\mathord{\times}10^{-3}\,[1.63\mathord{\times}10^{-3},2.15\mathord{\times}10^{-3}]$ & $1.64\mathord{\times}10^{-3}\,[1.58\mathord{\times}10^{-3},1.71\mathord{\times}10^{-3}]$ & $-2.05\mathord{\times}10^{-4}\,[-5.07\mathord{\times}10^{-4},2.85\mathord{\times}10^{-5}]$ \\
$10^{4}$ & $5.33\mathord{\times}10^{-4}\,[5.09\mathord{\times}10^{-4},5.58\mathord{\times}10^{-4}]$ & $5.44\mathord{\times}10^{-4}\,[4.94\mathord{\times}10^{-4},6.04\mathord{\times}10^{-4}]$ & $1.11\mathord{\times}10^{-5}\,[-4.46\mathord{\times}10^{-5},7.48\mathord{\times}10^{-5}]$ \\
$10^{5}$ & $1.74\mathord{\times}10^{-4}\,[1.61\mathord{\times}10^{-4},1.92\mathord{\times}10^{-4}]$ & $1.78\mathord{\times}10^{-4}\,[1.69\mathord{\times}10^{-4},1.87\mathord{\times}10^{-4}]$ & $3.43\mathord{\times}10^{-6}\,[-1.63\mathord{\times}10^{-5},2.02\mathord{\times}10^{-5}]$ \\
$10^{6}$ & $5.40\mathord{\times}10^{-5}\,[5.07\mathord{\times}10^{-5},5.75\mathord{\times}10^{-5}]$ & $5.37\mathord{\times}10^{-5}\,[5.02\mathord{\times}10^{-5},5.71\mathord{\times}10^{-5}]$ & $-3.20\mathord{\times}10^{-7}\,[-5.20\mathord{\times}10^{-6},4.42\mathord{\times}10^{-6}]$ \\
\bottomrule
\end{tabular}
\end{table}

\begin{table}[htbp]
\centering
\tiny
\caption{Simulator time for $(H,S,A)=(2, 3, 3)$, in seconds. Entries are medians with 95\% percentile bootstrap CIs over ten MDP instances; speedup is the median of paired time ratios. Conventional entries beyond the last measured $10^{6}$ budget are marked $\approx$ and extrapolated by a factor of ten per additional power of ten.}
\label{tab:planning-timing-s1}
\setlength{\tabcolsep}{3pt}
\begin{tabular}{rlll}
\toprule
Budget & Conventional (s) & Fast (s) & Speedup \\
\midrule
$10^{1}$ & $3.01\mathord{\times}10^{-4}\,[2.61\mathord{\times}10^{-4},3.33\mathord{\times}10^{-4}]$ & $2.77\mathord{\times}10^{-5}\,[2.38\mathord{\times}10^{-5},3.04\mathord{\times}10^{-5}]$ & $1.09\mathord{\times}10^{1}\,[1.09\mathord{\times}10^{1},1.10\mathord{\times}10^{1}]$ \\
$10^{2}$ & $2.81\mathord{\times}10^{-3}\,[2.51\mathord{\times}10^{-3},3.18\mathord{\times}10^{-3}]$ & $2.93\mathord{\times}10^{-5}\,[2.56\mathord{\times}10^{-5},3.29\mathord{\times}10^{-5}]$ & $9.76\mathord{\times}10^{1}\,[9.58\mathord{\times}10^{1},9.88\mathord{\times}10^{1}]$ \\
$10^{3}$ & $2.91\mathord{\times}10^{-2}\,[2.51\mathord{\times}10^{-2},3.20\mathord{\times}10^{-2}]$ & $2.85\mathord{\times}10^{-5}\,[2.44\mathord{\times}10^{-5},3.11\mathord{\times}10^{-5}]$ & $1.03\mathord{\times}10^{3}\,[1.02\mathord{\times}10^{3},1.03\mathord{\times}10^{3}]$ \\
$10^{4}$ & $2.88\mathord{\times}10^{-1}\,[2.51\mathord{\times}10^{-1},3.21\mathord{\times}10^{-1}]$ & $2.82\mathord{\times}10^{-5}\,[2.40\mathord{\times}10^{-5},3.06\mathord{\times}10^{-5}]$ & $1.04\mathord{\times}10^{4}\,[1.03\mathord{\times}10^{4},1.07\mathord{\times}10^{4}]$ \\
$10^{5}$ & $2.90\,[2.52,3.24]$ & $2.72\mathord{\times}10^{-5}\,[2.37\mathord{\times}10^{-5},3.10\mathord{\times}10^{-5}]$ & $1.06\mathord{\times}10^{5}\,[1.05\mathord{\times}10^{5},1.08\mathord{\times}10^{5}]$ \\
$10^{6}$ & $2.89\mathord{\times}10^{1}\,[2.54\mathord{\times}10^{1},3.23\mathord{\times}10^{1}]$ & $2.77\mathord{\times}10^{-5}\,[2.38\mathord{\times}10^{-5},3.03\mathord{\times}10^{-5}]$ & $1.07\mathord{\times}10^{6}\,[1.05\mathord{\times}10^{6},1.07\mathord{\times}10^{6}]$ \\
$10^{7}$ & $\approx2.89\mathord{\times}10^{2}$ & $2.79\mathord{\times}10^{-5}\,[2.38\mathord{\times}10^{-5},3.08\mathord{\times}10^{-5}]$ & $\approx1.06\mathord{\times}10^{7}$ \\
\bottomrule
\end{tabular}
\end{table}

\begin{table}[htbp]
\centering
\scriptsize
\caption{Simulator time for $(H,S,A)=(5, 5, 5)$, in seconds. Entries are medians with 95\% percentile bootstrap CIs over ten MDP instances; speedup is the median of paired time ratios. Conventional entries beyond the last measured $10^{6}$ budget are marked $\approx$ and extrapolated by a factor of ten per additional power of ten.}
\label{tab:planning-timing-s2}
\setlength{\tabcolsep}{3pt}
\begin{tabular}{rlll}
\toprule
Budget & Conventional (s) & Fast (s) & Speedup \\
\midrule
$10^{1}$ & $2.25\mathord{\times}10^{-3}\,[1.81\mathord{\times}10^{-3},2.33\mathord{\times}10^{-3}]$ & $1.70\mathord{\times}10^{-4}\,[1.64\mathord{\times}10^{-4},1.93\mathord{\times}10^{-4}]$ & $1.14\mathord{\times}10^{1}\,[1.12\mathord{\times}10^{1},1.35\mathord{\times}10^{1}]$ \\
$10^{2}$ & $2.17\mathord{\times}10^{-2}\,[1.76\mathord{\times}10^{-2},2.24\mathord{\times}10^{-2}]$ & $1.90\mathord{\times}10^{-4}\,[1.84\mathord{\times}10^{-4},2.21\mathord{\times}10^{-4}]$ & $9.82\mathord{\times}10^{1}\,[9.50\mathord{\times}10^{1},1.17\mathord{\times}10^{2}]$ \\
$10^{3}$ & $2.18\mathord{\times}10^{-1}\,[1.75\mathord{\times}10^{-1},2.25\mathord{\times}10^{-1}]$ & $1.81\mathord{\times}10^{-4}\,[1.76\mathord{\times}10^{-4},2.16\mathord{\times}10^{-4}]$ & $1.03\mathord{\times}10^{3}\,[1.00\mathord{\times}10^{3},1.22\mathord{\times}10^{3}]$ \\
$10^{4}$ & $2.20\,[1.77,2.26]$ & $1.76\mathord{\times}10^{-4}\,[1.70\mathord{\times}10^{-4},2.05\mathord{\times}10^{-4}]$ & $1.07\mathord{\times}10^{4}\,[1.05\mathord{\times}10^{4},1.28\mathord{\times}10^{4}]$ \\
$10^{5}$ & $2.19\mathord{\times}10^{1}\,[1.77\mathord{\times}10^{1},2.26\mathord{\times}10^{1}]$ & $1.73\mathord{\times}10^{-4}\,[1.68\mathord{\times}10^{-4},1.99\mathord{\times}10^{-4}]$ & $1.07\mathord{\times}10^{5}\,[1.06\mathord{\times}10^{5},1.29\mathord{\times}10^{5}]$ \\
$10^{6}$ & $2.12\mathord{\times}10^{2}\,[1.79\mathord{\times}10^{2},2.22\mathord{\times}10^{2}]$ & $1.73\mathord{\times}10^{-4}\,[1.67\mathord{\times}10^{-4},1.98\mathord{\times}10^{-4}]$ & $1.07\mathord{\times}10^{6}\,[1.06\mathord{\times}10^{6},1.24\mathord{\times}10^{6}]$ \\
$10^{7}$ & $\approx2.12\mathord{\times}10^{3}$ & $1.74\mathord{\times}10^{-4}\,[1.68\mathord{\times}10^{-4},1.96\mathord{\times}10^{-4}]$ & $\approx1.08\mathord{\times}10^{7}$ \\
\bottomrule
\end{tabular}
\end{table}

\begin{table}[htbp]
\centering
\scriptsize
\caption{Simulator time for $(H,S,A)=(7, 8, 6)$, in seconds. Entries are medians with 95\% percentile bootstrap CIs over ten MDP instances; speedup is the median of paired time ratios. Conventional entries beyond the last measured $10^{6}$ budget are marked $\approx$ and extrapolated by a factor of ten per additional power of ten.}
\label{tab:planning-timing-s3}
\setlength{\tabcolsep}{3pt}
\begin{tabular}{rlll}
\toprule
Budget & Conventional (s) & Fast (s) & Speedup \\
\midrule
$10^{1}$ & $5.44\mathord{\times}10^{-3}\,[5.29\mathord{\times}10^{-3},5.72\mathord{\times}10^{-3}]$ & $5.04\mathord{\times}10^{-4}\,[4.92\mathord{\times}10^{-4},5.31\mathord{\times}10^{-4}]$ & $1.08\mathord{\times}10^{1}\,[1.05\mathord{\times}10^{1},1.12\mathord{\times}10^{1}]$ \\
$10^{2}$ & $5.10\mathord{\times}10^{-2}\,[5.01\mathord{\times}10^{-2},5.34\mathord{\times}10^{-2}]$ & $6.40\mathord{\times}10^{-4}\,[6.19\mathord{\times}10^{-4},6.70\mathord{\times}10^{-4}]$ & $7.94\mathord{\times}10^{1}\,[7.77\mathord{\times}10^{1},8.49\mathord{\times}10^{1}]$ \\
$10^{3}$ & $5.16\mathord{\times}10^{-1}\,[5.10\mathord{\times}10^{-1},5.35\mathord{\times}10^{-1}]$ & $6.10\mathord{\times}10^{-4}\,[5.93\mathord{\times}10^{-4},6.27\mathord{\times}10^{-4}]$ & $8.64\mathord{\times}10^{2}\,[8.33\mathord{\times}10^{2},8.87\mathord{\times}10^{2}]$ \\
$10^{4}$ & $5.28\,[5.26,5.51]$ & $5.66\mathord{\times}10^{-4}\,[5.47\mathord{\times}10^{-4},5.88\mathord{\times}10^{-4}]$ & $9.44\mathord{\times}10^{3}\,[9.17\mathord{\times}10^{3},9.91\mathord{\times}10^{3}]$ \\
$10^{5}$ & $5.27\mathord{\times}10^{1}\,[5.18\mathord{\times}10^{1},5.48\mathord{\times}10^{1}]$ & $5.50\mathord{\times}10^{-4}\,[5.37\mathord{\times}10^{-4},5.67\mathord{\times}10^{-4}]$ & $9.61\mathord{\times}10^{4}\,[9.41\mathord{\times}10^{4},1.00\mathord{\times}10^{5}]$ \\
$10^{6}$ & $5.27\mathord{\times}10^{2}\,[5.17\mathord{\times}10^{2},5.51\mathord{\times}10^{2}]$ & $5.42\mathord{\times}10^{-4}\,[5.33\mathord{\times}10^{-4},5.73\mathord{\times}10^{-4}]$ & $9.83\mathord{\times}10^{5}\,[9.38\mathord{\times}10^{5},9.94\mathord{\times}10^{5}]$ \\
$10^{7}$ & $\approx5.27\mathord{\times}10^{3}$ & $5.53\mathord{\times}10^{-4}\,[5.36\mathord{\times}10^{-4},5.70\mathord{\times}10^{-4}]$ & $\approx9.58\mathord{\times}10^{6}$ \\
\bottomrule
\end{tabular}
\end{table}

\begin{table}[htbp]
\centering
\scriptsize
\caption{Simulator time for $(H,S,A)=(10, 15, 10)$, in seconds. Entries are medians with 95\% percentile bootstrap CIs over ten MDP instances; speedup is the median of paired time ratios. Conventional entries beyond the last measured $10^{6}$ budget are marked $\approx$ and extrapolated by a factor of ten per additional power of ten.}
\label{tab:planning-timing-s4}
\setlength{\tabcolsep}{3pt}
\begin{tabular}{rlll}
\toprule
Budget & Conventional (s) & Fast (s) & Speedup \\
\midrule
$10^{1}$ & $2.69\mathord{\times}10^{-2}\,[2.26\mathord{\times}10^{-2},2.76\mathord{\times}10^{-2}]$ & $3.17\mathord{\times}10^{-3}\,[2.62\mathord{\times}10^{-3},3.29\mathord{\times}10^{-3}]$ & $8.53\,[8.36,8.58]$ \\
$10^{2}$ & $2.57\mathord{\times}10^{-1}\,[2.21\mathord{\times}10^{-1},2.67\mathord{\times}10^{-1}]$ & $4.58\mathord{\times}10^{-3}\,[3.74\mathord{\times}10^{-3},4.75\mathord{\times}10^{-3}]$ & $5.70\mathord{\times}10^{1}\,[5.63\mathord{\times}10^{1},5.85\mathord{\times}10^{1}]$ \\
$10^{3}$ & $2.57\,[2.27,2.72]$ & $5.19\mathord{\times}10^{-3}\,[4.23\mathord{\times}10^{-3},5.35\mathord{\times}10^{-3}]$ & $5.13\mathord{\times}10^{2}\,[5.02\mathord{\times}10^{2},5.25\mathord{\times}10^{2}]$ \\
$10^{4}$ & $2.57\mathord{\times}10^{1}\,[2.19\mathord{\times}10^{1},2.71\mathord{\times}10^{1}]$ & $4.26\mathord{\times}10^{-3}\,[3.47\mathord{\times}10^{-3},4.40\mathord{\times}10^{-3}]$ & $6.27\mathord{\times}10^{3}\,[6.13\mathord{\times}10^{3},6.30\mathord{\times}10^{3}]$ \\
$10^{5}$ & $2.64\mathord{\times}10^{2}\,[2.23\mathord{\times}10^{2},2.71\mathord{\times}10^{2}]$ & $3.85\mathord{\times}10^{-3}\,[3.17\mathord{\times}10^{-3},4.00\mathord{\times}10^{-3}]$ & $6.90\mathord{\times}10^{4}\,[6.77\mathord{\times}10^{4},7.01\mathord{\times}10^{4}]$ \\
$10^{6}$ & $2.61\mathord{\times}10^{3}\,[2.16\mathord{\times}10^{3},2.69\mathord{\times}10^{3}]$ & $3.71\mathord{\times}10^{-3}\,[3.03\mathord{\times}10^{-3},3.86\mathord{\times}10^{-3}]$ & $6.98\mathord{\times}10^{5}\,[6.94\mathord{\times}10^{5},7.13\mathord{\times}10^{5}]$ \\
$10^{7}$ & $\approx2.61\mathord{\times}10^{4}$ & $3.68\mathord{\times}10^{-3}\,[3.02\mathord{\times}10^{-3},3.83\mathord{\times}10^{-3}]$ & $\approx7.04\mathord{\times}10^{6}$ \\
\bottomrule
\end{tabular}
\end{table}

\newpage
\subsection{Offline Vanilla Pessimistic Value Iteration}
\label{exp-vpvi}
We use a uniform behavior policy over actions to generate an offline dataset. Under conventional simulation, the offline dataset is generated trajectory by trajectory. In contrast, the direct batch simulation mode of our framework samples the aggregate Markov flow for the full batch of $K$ trajectories and directly recovers the visitation and transition counts required by VPVI. The same VPVI update is then applied to the resulting empirical statistics. The pessimism multiplier $c$ and the log-confidence factor $\iota$ in \Cref{alg:review-vpvi} are both set to one.

Learning performance is evaluated by the policy gap in the true MDP, averaged over the uniform initial-state distribution:
\[
G_\pi(K)=\mathbb E_{s\sim\nu}\big[V_1^\star(s)-V_1^{\widehat\pi}(s)\big].
\]
The policy-gap results are reported in Tables \ref{tab:vpvi-final_policy_gap-s1}--\ref{tab:vpvi-final_policy_gap-s4}. Across the reported budgets, the conventional and fast simulators produce nearly identical policy gaps at the same $K$: for most settings, the confidence intervals for their differences either contain zero or collapse to zero (31 out of 32 settings). This behavior is consistent with the distributional equivalence established in Corollary~\ref{cor:flow}. Indeed, since VPVI depends on the offline dataset only through the visitation and transition statistics contained in the Markov flow, equality in distribution of the generated flows induces equality in distribution of the resulting learned policies and hence of their policy gaps.

The policy gap, however, can become exactly zero at relatively modest budgets, particularly for smaller MDPs, once the learned policy is already optimal. Beyond this point, it no longer distinguishes between empirical models of different accuracies. We therefore additionally examine the visitation-weighted transition error
\[
E_P(K)=\frac{\sum_{h,s,a}C_h(s,a)\left\|\widehat P_h(\cdot\mid s,a)-P_h(\cdot\mid s,a)\right\|_1}{\sum_{h,s,a}C_h(s,a)}.\]
This quantity measures the discrepancy between the empirical transition model passed to VPVI and the true transition model, weighted by how frequently each row is represented in the sampled dataset. It therefore provides a more fine-grained diagnostic of the sampled empirical model, complementing the policy gap, which measures the quality of the final learned policy.

The transition-error results are reported in Tables \ref{tab:vpvi-weighted_transition_l1_error-s1}--\ref{tab:vpvi-weighted_transition_l1_error-s4}. The conventional and fast simulators exhibit very similar weighted transition errors across budgets, with almost all confidence intervals for their differences containing zero. Since $E_P(K)$ is itself a function of the sampled visitation and transition counts, this agreement is also consistent with the distributional equivalence in Corollary~\ref{cor:flow}.

The timing results in Tables~\ref{tab:vpvi-timing-s1}--\ref{tab:vpvi-timing-s4} show different computational scaling. Conventional simulation generates the $K$ trajectories individually, and its runtime grows approximately linearly with $K$. In contrast, our fast method uses direct batch simulation to generate the aggregate Markov flow, eliminating the simulator-side dependence on $K$. Beyond the small-budget regime, its runtime remains approximately flat over moderate-to-large budgets, consistent with Proposition~\ref{prop:direct-complexity}. Moreover, the learner time remains below $0.002$ seconds in all tested configurations and is substantially smaller than the conventional simulation time at large scales and budgets. This shows that simulation is the dominant computational bottleneck in this regime, making the reduction in simulator-side complexity practically significant.

\begin{table}[htbp]
\centering
\tiny
\caption{Expected policy gap for $(H,S,A)=(2, 3, 3)$. Entries report the mean and 95\% percentile bootstrap confidence interval over ten independent simulation seeds on a fixed MDP. The difference is defined as fast minus conventional.}
\label{tab:vpvi-final_policy_gap-s1}
\setlength{\tabcolsep}{3pt}
\begin{tabular}{rlll}
\toprule
Budget & Conventional & Fast & Difference \\
\midrule
$10^{1}$ & $9.20\mathord{\times}10^{-1}\,[9.20\mathord{\times}10^{-1},9.20\mathord{\times}10^{-1}]$ & $9.20\mathord{\times}10^{-1}\,[9.20\mathord{\times}10^{-1},9.20\mathord{\times}10^{-1}]$ & $0\,[0,0]$ \\
$10^{2}$ & $2.89\mathord{\times}10^{-2}\,[9.48\mathord{\times}10^{-3},5.29\mathord{\times}10^{-2}]$ & $6.39\mathord{\times}10^{-2}\,[4.98\mathord{\times}10^{-2},7.89\mathord{\times}10^{-2}]$ & $3.50\mathord{\times}10^{-2}\,[6.42\mathord{\times}10^{-3},6.15\mathord{\times}10^{-2}]$ \\
$10^{3}$ & $0\,[0,0]$ & $0\,[0,0]$ & $0\,[0,0]$ \\
$10^{4}$ & $0\,[0,0]$ & $0\,[0,0]$ & $0\,[0,0]$ \\
$10^{5}$ & $0\,[0,0]$ & $0\,[0,0]$ & $0\,[0,0]$ \\
$10^{6}$ & $0\,[0,0]$ & $0\,[0,0]$ & $0\,[0,0]$ \\
$10^{7}$ & $0\,[0,0]$ & $0\,[0,0]$ & $0\,[0,0]$ \\
$10^{8}$ & $0\,[0,0]$ & $0\,[0,0]$ & $0\,[0,0]$ \\
\bottomrule
\end{tabular}
\end{table}

\begin{table}[htbp]
\centering
\tiny
\caption{Expected policy gap for $(H,S,A)=(5, 5, 5)$. Entries report the mean and 95\% percentile bootstrap confidence interval over ten independent simulation seeds on a fixed MDP. The difference is defined as fast minus conventional.}
\label{tab:vpvi-final_policy_gap-s2}
\setlength{\tabcolsep}{2pt}
\begin{tabular}{rlll}
\toprule
Budget & Conventional & Fast & Difference \\
\midrule
$10^{1}$ & $1.79\,[1.79,1.79]$ & $1.79\,[1.79,1.79]$ & $0\,[0,0]$ \\
$10^{2}$ & $1.79\,[1.79,1.79]$ & $1.79\,[1.79,1.79]$ & $0\,[0,0]$ \\
$10^{3}$ & $1.21\mathord{\times}10^{-1}\,[9.11\mathord{\times}10^{-2},1.49\mathord{\times}10^{-1}]$ & $9.35\mathord{\times}10^{-2}\,[5.11\mathord{\times}10^{-2},1.39\mathord{\times}10^{-1}]$ & $-2.72\mathord{\times}10^{-2}\,[-7.89\mathord{\times}10^{-2},2.71\mathord{\times}10^{-2}]$ \\
$10^{4}$ & $4.14\mathord{\times}10^{-4}\,[1.04\mathord{\times}10^{-4},7.25\mathord{\times}10^{-4}]$ & $3.11\mathord{\times}10^{-4}\,[0,6.21\mathord{\times}10^{-4}]$ & $-1.04\mathord{\times}10^{-4}\,[-5.18\mathord{\times}10^{-4},3.11\mathord{\times}10^{-4}]$ \\
$10^{5}$ & $0\,[0,0]$ & $0\,[0,0]$ & $0\,[0,0]$ \\
$10^{6}$ & $0\,[0,0]$ & $0\,[0,0]$ & $0\,[0,0]$ \\
$10^{7}$ & $0\,[0,0]$ & $0\,[0,0]$ & $0\,[0,0]$ \\
$10^{8}$ & $0\,[0,0]$ & $0\,[0,0]$ & $0\,[0,0]$ \\
\bottomrule
\end{tabular}
\end{table}

\begin{table}[htbp]
\centering
\tiny
\caption{Expected policy gap for $(H,S,A)=(7, 8, 6)$. Entries report the mean and 95\% percentile bootstrap confidence interval over ten independent simulation seeds on a fixed MDP. The difference is defined as fast minus conventional.}
\label{tab:vpvi-final_policy_gap-s3}
\setlength{\tabcolsep}{2pt}
\begin{tabular}{rlll}
\toprule
Budget & Conventional & Fast & Difference \\
\midrule
$10^{1}$ & $2.60\,[2.60,2.60]$ & $2.60\,[2.60,2.60]$ & $0\,[0,0]$ \\
$10^{2}$ & $2.60\,[2.60,2.60]$ & $2.60\,[2.60,2.60]$ & $0\,[0,0]$ \\
$10^{3}$ & $2.60\,[2.60,2.60]$ & $2.60\,[2.60,2.60]$ & $0\,[0,0]$ \\
$10^{4}$ & $8.57\mathord{\times}10^{-3}\,[5.52\mathord{\times}10^{-3},1.18\mathord{\times}10^{-2}]$ & $8.36\mathord{\times}10^{-3}\,[5.73\mathord{\times}10^{-3},1.11\mathord{\times}10^{-2}]$ & $-2.10\mathord{\times}10^{-4}\,[-4.35\mathord{\times}10^{-3},3.82\mathord{\times}10^{-3}]$ \\
$10^{5}$ & $0\,[0,0]$ & $0\,[0,0]$ & $0\,[0,0]$ \\
$10^{6}$ & $0\,[0,0]$ & $0\,[0,0]$ & $0\,[0,0]$ \\
$10^{7}$ & $0\,[0,0]$ & $0\,[0,0]$ & $0\,[0,0]$ \\
$10^{8}$ & $0\,[0,0]$ & $0\,[0,0]$ & $0\,[0,0]$ \\
\bottomrule
\end{tabular}
\end{table}

\begin{table}[htbp]
\centering
\tiny
\caption{Expected policy gap for $(H,S,A)=(10, 15, 10)$. Entries report the mean and 95\% percentile bootstrap confidence interval over ten independent simulation seeds on a fixed MDP. The difference is defined as fast minus conventional.}
\label{tab:vpvi-final_policy_gap-s4}
\setlength{\tabcolsep}{2pt}
\begin{tabular}{rlll}
\toprule
Budget & Conventional & Fast & Difference \\
\midrule
$10^{1}$ & $4.06\,[4.06,4.06]$ & $4.06\,[4.06,4.06]$ & $0\,[0,0]$ \\
$10^{2}$ & $4.06\,[4.06,4.06]$ & $4.06\,[4.06,4.06]$ & $0\,[0,0]$ \\
$10^{3}$ & $4.06\,[4.06,4.06]$ & $4.06\,[4.06,4.06]$ & $0\,[0,0]$ \\
$10^{4}$ & $4.06\,[4.06,4.06]$ & $4.06\,[4.06,4.06]$ & $0\,[0,0]$ \\
$10^{5}$ & $4.01\mathord{\times}10^{-3}\,[2.93\mathord{\times}10^{-3},5.13\mathord{\times}10^{-3}]$ & $4.25\mathord{\times}10^{-3}\,[3.26\mathord{\times}10^{-3},5.27\mathord{\times}10^{-3}]$ & $2.32\mathord{\times}10^{-4}\,[-1.19\mathord{\times}10^{-3},1.69\mathord{\times}10^{-3}]$ \\
$10^{6}$ & $1.62\mathord{\times}10^{-4}\,[7.37\mathord{\times}10^{-5},2.54\mathord{\times}10^{-4}]$ & $6.09\mathord{\times}10^{-5}\,[2.39\mathord{\times}10^{-5},1.17\mathord{\times}10^{-4}]$ & $-1.01\mathord{\times}10^{-4}\,[-2.06\mathord{\times}10^{-4},1.16\mathord{\times}10^{-6}]$ \\
$10^{7}$ & $6.82\mathord{\times}10^{-6}\,[0,1.70\mathord{\times}10^{-5}]$ & $3.41\mathord{\times}10^{-6}\,[0,1.02\mathord{\times}10^{-5}]$ & $-3.41\mathord{\times}10^{-6}\,[-1.36\mathord{\times}10^{-5},6.82\mathord{\times}10^{-6}]$ \\
$10^{8}$ & $0\,[0,0]$ & $0\,[0,0]$ & $0\,[0,0]$ \\
\bottomrule
\end{tabular}
\end{table}

\begin{table}[htbp]
\centering
\tiny
\caption{Visit-weighted transition $L_1$ error for $(H,S,A)=(2, 3, 3)$. Entries report the mean and 95\% percentile bootstrap confidence interval over ten independent simulation seeds on a fixed MDP. The error difference is defined as fast minus conventional.}
\label{tab:vpvi-weighted_transition_l1_error-s1}
\setlength{\tabcolsep}{2pt}
\begin{tabular}{rlll}
\toprule
Budget & Conventional & Fast & Difference \\
\midrule
$10^{1}$ & $7.64\mathord{\times}10^{-1}\,[6.87\mathord{\times}10^{-1},8.42\mathord{\times}10^{-1}]$ & $8.20\mathord{\times}10^{-1}\,[7.24\mathord{\times}10^{-1},9.13\mathord{\times}10^{-1}]$ & $5.62\mathord{\times}10^{-2}\,[-6.30\mathord{\times}10^{-2},1.77\mathord{\times}10^{-1}]$ \\
$10^{2}$ & $2.64\mathord{\times}10^{-1}\,[2.40\mathord{\times}10^{-1},2.90\mathord{\times}10^{-1}]$ & $2.91\mathord{\times}10^{-1}\,[2.68\mathord{\times}10^{-1},3.12\mathord{\times}10^{-1}]$ & $2.65\mathord{\times}10^{-2}\,[-7.67\mathord{\times}10^{-3},5.93\mathord{\times}10^{-2}]$ \\
$10^{3}$ & $8.54\mathord{\times}10^{-2}\,[7.81\mathord{\times}10^{-2},9.21\mathord{\times}10^{-2}]$ & $9.52\mathord{\times}10^{-2}\,[9.01\mathord{\times}10^{-2},1.00\mathord{\times}10^{-1}]$ & $9.84\mathord{\times}10^{-3}\,[1.32\mathord{\times}10^{-3},1.85\mathord{\times}10^{-2}]$ \\
$10^{4}$ & $3.00\mathord{\times}10^{-2}\,[2.84\mathord{\times}10^{-2},3.15\mathord{\times}10^{-2}]$ & $2.94\mathord{\times}10^{-2}\,[2.73\mathord{\times}10^{-2},3.15\mathord{\times}10^{-2}]$ & $-5.82\mathord{\times}10^{-4}\,[-3.22\mathord{\times}10^{-3},2.06\mathord{\times}10^{-3}]$ \\
$10^{5}$ & $9.12\mathord{\times}10^{-3}\,[8.31\mathord{\times}10^{-3},9.96\mathord{\times}10^{-3}]$ & $8.42\mathord{\times}10^{-3}\,[7.79\mathord{\times}10^{-3},9.10\mathord{\times}10^{-3}]$ & $-6.99\mathord{\times}10^{-4}\,[-1.73\mathord{\times}10^{-3},3.56\mathord{\times}10^{-4}]$ \\
$10^{6}$ & $2.76\mathord{\times}10^{-3}\,[2.57\mathord{\times}10^{-3},2.96\mathord{\times}10^{-3}]$ & $2.56\mathord{\times}10^{-3}\,[2.31\mathord{\times}10^{-3},2.84\mathord{\times}10^{-3}]$ & $-2.07\mathord{\times}10^{-4}\,[-5.31\mathord{\times}10^{-4},1.33\mathord{\times}10^{-4}]$ \\
\bottomrule
\end{tabular}
\end{table}

\begin{table}[htbp]
\centering
\tiny
\caption{Visit-weighted transition $L_1$ error for $(H,S,A)=(5, 5, 5)$. Entries report the mean and 95\% percentile bootstrap confidence interval over ten independent simulation seeds on a fixed MDP. The error difference is defined as fast minus conventional.}
\label{tab:vpvi-weighted_transition_l1_error-s2}
\setlength{\tabcolsep}{2pt}
\begin{tabular}{rlll}
\toprule
Budget & Conventional & Fast & Difference \\
\midrule
$10^{1}$ & $1.23\,[1.19,1.27]$ & $1.24\,[1.21,1.26]$ & $3.90\mathord{\times}10^{-3}\,[-4.22\mathord{\times}10^{-2},5.25\mathord{\times}10^{-2}]$ \\
$10^{2}$ & $6.72\mathord{\times}10^{-1}\,[6.65\mathord{\times}10^{-1},6.78\mathord{\times}10^{-1}]$ & $6.90\mathord{\times}10^{-1}\,[6.76\mathord{\times}10^{-1},7.05\mathord{\times}10^{-1}]$ & $1.85\mathord{\times}10^{-2}\,[2.22\mathord{\times}10^{-3},3.42\mathord{\times}10^{-2}]$ \\
$10^{3}$ & $2.15\mathord{\times}10^{-1}\,[2.10\mathord{\times}10^{-1},2.20\mathord{\times}10^{-1}]$ & $2.20\mathord{\times}10^{-1}\,[2.17\mathord{\times}10^{-1},2.24\mathord{\times}10^{-1}]$ & $4.95\mathord{\times}10^{-3}\,[-1.14\mathord{\times}10^{-3},1.09\mathord{\times}10^{-2}]$ \\
$10^{4}$ & $7.10\mathord{\times}10^{-2}\,[6.96\mathord{\times}10^{-2},7.25\mathord{\times}10^{-2}]$ & $6.92\mathord{\times}10^{-2}\,[6.73\mathord{\times}10^{-2},7.11\mathord{\times}10^{-2}]$ & $-1.89\mathord{\times}10^{-3}\,[-4.21\mathord{\times}10^{-3},5.84\mathord{\times}10^{-4}]$ \\
$10^{5}$ & $2.18\mathord{\times}10^{-2}\,[2.13\mathord{\times}10^{-2},2.25\mathord{\times}10^{-2}]$ & $2.20\mathord{\times}10^{-2}\,[2.15\mathord{\times}10^{-2},2.25\mathord{\times}10^{-2}]$ & $1.42\mathord{\times}10^{-4}\,[-6.62\mathord{\times}10^{-4},9.08\mathord{\times}10^{-4}]$ \\
$10^{6}$ & $6.81\mathord{\times}10^{-3}\,[6.63\mathord{\times}10^{-3},6.99\mathord{\times}10^{-3}]$ & $6.75\mathord{\times}10^{-3}\,[6.61\mathord{\times}10^{-3},6.91\mathord{\times}10^{-3}]$ & $-5.27\mathord{\times}10^{-5}\,[-2.96\mathord{\times}10^{-4},1.80\mathord{\times}10^{-4}]$ \\
$10^{7}$ & $2.17\mathord{\times}10^{-3}\,[2.09\mathord{\times}10^{-3},2.24\mathord{\times}10^{-3}]$ & $2.18\mathord{\times}10^{-3}\,[2.15\mathord{\times}10^{-3},2.21\mathord{\times}10^{-3}]$ & $1.22\mathord{\times}10^{-5}\,[-6.61\mathord{\times}10^{-5},9.26\mathord{\times}10^{-5}]$ \\
\bottomrule
\end{tabular}
\end{table}

\begin{table}[htbp]
\centering
\tiny
\caption{Visit-weighted transition $L_1$ error for $(H,S,A)=(7, 8, 6)$. Entries report the mean and 95\% percentile bootstrap confidence interval over ten independent simulation seeds on a fixed MDP. The error difference is defined as fast minus conventional.}
\label{tab:vpvi-weighted_transition_l1_error-s3}
\setlength{\tabcolsep}{1.5pt}
\begin{tabular}{rlll}
\toprule
Budget & Conventional & Fast & Difference \\
\midrule
$10^{1}$ & $1.51\,[1.48,1.54]$ & $1.50\,[1.48,1.52]$ & $-1.14\mathord{\times}10^{-2}\,[-5.21\mathord{\times}10^{-2},2.70\mathord{\times}10^{-2}]$ \\
$10^{2}$ & $1.10\,[1.08,1.11]$ & $1.11\,[1.10,1.12]$ & $9.43\mathord{\times}10^{-3}\,[-9.93\mathord{\times}10^{-3},2.79\mathord{\times}10^{-2}]$ \\
$10^{3}$ & $4.03\mathord{\times}10^{-1}\,[3.98\mathord{\times}10^{-1},4.09\mathord{\times}10^{-1}]$ & $4.06\mathord{\times}10^{-1}\,[4.02\mathord{\times}10^{-1},4.11\mathord{\times}10^{-1}]$ & $2.97\mathord{\times}10^{-3}\,[-4.40\mathord{\times}10^{-3},1.02\mathord{\times}10^{-2}]$ \\
$10^{4}$ & $1.28\mathord{\times}10^{-1}\,[1.27\mathord{\times}10^{-1},1.29\mathord{\times}10^{-1}]$ & $1.29\mathord{\times}10^{-1}\,[1.28\mathord{\times}10^{-1},1.30\mathord{\times}10^{-1}]$ & $3.99\mathord{\times}10^{-4}\,[-8.50\mathord{\times}10^{-4},1.58\mathord{\times}10^{-3}]$ \\
$10^{5}$ & $4.12\mathord{\times}10^{-2}\,[4.07\mathord{\times}10^{-2},4.17\mathord{\times}10^{-2}]$ & $4.04\mathord{\times}10^{-2}\,[3.98\mathord{\times}10^{-2},4.09\mathord{\times}10^{-2}]$ & $-7.26\mathord{\times}10^{-4}\,[-1.54\mathord{\times}10^{-3},-1.61\mathord{\times}10^{-5}]$ \\
$10^{6}$ & $1.29\mathord{\times}10^{-2}\,[1.28\mathord{\times}10^{-2},1.30\mathord{\times}10^{-2}]$ & $1.28\mathord{\times}10^{-2}\,[1.27\mathord{\times}10^{-2},1.30\mathord{\times}10^{-2}]$ & $-2.16\mathord{\times}10^{-5}\,[-2.17\mathord{\times}10^{-4},1.64\mathord{\times}10^{-4}]$ \\
$10^{7}$ & $4.03\mathord{\times}10^{-3}\,[3.99\mathord{\times}10^{-3},4.07\mathord{\times}10^{-3}]$ & $4.06\mathord{\times}10^{-3}\,[4.02\mathord{\times}10^{-3},4.09\mathord{\times}10^{-3}]$ & $3.19\mathord{\times}10^{-5}\,[-1.88\mathord{\times}10^{-5},8.16\mathord{\times}10^{-5}]$ \\
\bottomrule
\end{tabular}
\end{table}

\begin{table}[htbp]
\centering
\tiny
\caption{Visit-weighted transition $L_1$ error for $(H,S,A)=(10, 15, 10)$. Entries report the mean and 95\% percentile bootstrap confidence interval over ten independent simulation seeds on a fixed MDP. The error difference is defined as fast minus conventional.}
\label{tab:vpvi-weighted_transition_l1_error-s4}
\setlength{\tabcolsep}{2pt}
\begin{tabular}{rlll}
\toprule
Budget & Conventional & Fast & Difference \\
\midrule
$10^{1}$ & $1.74\,[1.74,1.75]$ & $1.73\,[1.72,1.74]$ & $-9.56\mathord{\times}10^{-3}\,[-2.11\mathord{\times}10^{-2},1.65\mathord{\times}10^{-3}]$ \\
$10^{2}$ & $1.62\,[1.62,1.62]$ & $1.62\,[1.61,1.62]$ & $-4.61\mathord{\times}10^{-3}\,[-9.54\mathord{\times}10^{-3},1.63\mathord{\times}10^{-4}]$ \\
$10^{3}$ & $9.70\mathord{\times}10^{-1}\,[9.67\mathord{\times}10^{-1},9.73\mathord{\times}10^{-1}]$ & $9.72\mathord{\times}10^{-1}\,[9.70\mathord{\times}10^{-1},9.75\mathord{\times}10^{-1}]$ & $2.89\mathord{\times}10^{-3}\,[-1.85\mathord{\times}10^{-3},6.95\mathord{\times}10^{-3}]$ \\
$10^{4}$ & $3.20\mathord{\times}10^{-1}\,[3.19\mathord{\times}10^{-1},3.21\mathord{\times}10^{-1}]$ & $3.21\mathord{\times}10^{-1}\,[3.20\mathord{\times}10^{-1},3.22\mathord{\times}10^{-1}]$ & $7.83\mathord{\times}10^{-4}\,[-4.90\mathord{\times}10^{-4},2.21\mathord{\times}10^{-3}]$ \\
$10^{5}$ & $1.02\mathord{\times}10^{-1}\,[1.01\mathord{\times}10^{-1},1.02\mathord{\times}10^{-1}]$ & $1.02\mathord{\times}10^{-1}\,[1.01\mathord{\times}10^{-1},1.02\mathord{\times}10^{-1}]$ & $1.04\mathord{\times}10^{-4}\,[-4.42\mathord{\times}10^{-4},6.80\mathord{\times}10^{-4}]$ \\
$10^{6}$ & $3.23\mathord{\times}10^{-2}\,[3.22\mathord{\times}10^{-2},3.24\mathord{\times}10^{-2}]$ & $3.21\mathord{\times}10^{-2}\,[3.20\mathord{\times}10^{-2},3.22\mathord{\times}10^{-2}]$ & $-1.44\mathord{\times}10^{-4}\,[-2.89\mathord{\times}10^{-4},2.50\mathord{\times}10^{-6}]$ \\
$10^{7}$ & $1.02\mathord{\times}10^{-2}\,[1.01\mathord{\times}10^{-2},1.02\mathord{\times}10^{-2}]$ & $1.01\mathord{\times}10^{-2}\,[1.01\mathord{\times}10^{-2},1.02\mathord{\times}10^{-2}]$ & $-1.36\mathord{\times}10^{-5}\,[-6.97\mathord{\times}10^{-5},4.30\mathord{\times}10^{-5}]$ \\
\bottomrule
\end{tabular}
\end{table}

\begin{table}[htbp]
\centering
\scriptsize
\caption{Simulator time for $(H,S,A)=(2, 3, 3)$, in seconds. Entries are medians with 95\% percentile bootstrap CIs over ten MDP instances; speedup is the median of paired time ratios. Conventional entries beyond the last measured $10^{8}$ budget are marked $\approx$ and extrapolated by a factor of ten per additional power of ten.}
\label{tab:vpvi-timing-s1}
\setlength{\tabcolsep}{2pt}
\begin{tabular}{rlll}
\toprule
Budget & Conventional (s) & Fast (s) & Speedup \\
\midrule
$10^{1}$ & $1.05\mathord{\times}10^{-4}\,[9.41\mathord{\times}10^{-5},1.07\mathord{\times}10^{-4}]$ & $4.71\mathord{\times}10^{-5}\,[4.15\mathord{\times}10^{-5},4.88\mathord{\times}10^{-5}]$ & $2.21\,[2.15,2.25]$ \\
$10^{2}$ & $9.80\mathord{\times}10^{-4}\,[8.74\mathord{\times}10^{-4},1.00\mathord{\times}10^{-3}]$ & $5.95\mathord{\times}10^{-5}\,[5.13\mathord{\times}10^{-5},5.98\mathord{\times}10^{-5}]$ & $1.68\mathord{\times}10^{1}\,[1.64\mathord{\times}10^{1},1.70\mathord{\times}10^{1}]$ \\
$10^{3}$ & $9.93\mathord{\times}10^{-3}\,[8.72\mathord{\times}10^{-3},9.98\mathord{\times}10^{-3}]$ & $6.11\mathord{\times}10^{-5}\,[5.33\mathord{\times}10^{-5},6.20\mathord{\times}10^{-5}]$ & $1.63\mathord{\times}10^{2}\,[1.59\mathord{\times}10^{2},1.64\mathord{\times}10^{2}]$ \\
$10^{4}$ & $1.00\mathord{\times}10^{-1}\,[8.77\mathord{\times}10^{-2},1.01\mathord{\times}10^{-1}]$ & $5.95\mathord{\times}10^{-5}\,[5.17\mathord{\times}10^{-5},6.06\mathord{\times}10^{-5}]$ & $1.67\mathord{\times}10^{3}\,[1.64\mathord{\times}10^{3},1.71\mathord{\times}10^{3}]$ \\
$10^{5}$ & $9.74\mathord{\times}10^{-1}\,[8.81\mathord{\times}10^{-1},9.97\mathord{\times}10^{-1}]$ & $5.88\mathord{\times}10^{-5}\,[5.08\mathord{\times}10^{-5},5.95\mathord{\times}10^{-5}]$ & $1.67\mathord{\times}10^{4}\,[1.64\mathord{\times}10^{4},1.74\mathord{\times}10^{4}]$ \\
$10^{6}$ & $9.82\,[8.92,1.01\mathord{\times}10^{1}]$ & $5.89\mathord{\times}10^{-5}\,[5.08\mathord{\times}10^{-5},5.94\mathord{\times}10^{-5}]$ & $1.70\mathord{\times}10^{5}\,[1.68\mathord{\times}10^{5},1.74\mathord{\times}10^{5}]$ \\
$10^{7}$ & $9.75\mathord{\times}10^{1}\,[8.76\mathord{\times}10^{1},9.91\mathord{\times}10^{1}]$ & $5.83\mathord{\times}10^{-5}\,[5.09\mathord{\times}10^{-5},5.90\mathord{\times}10^{-5}]$ & $1.67\mathord{\times}10^{6}\,[1.66\mathord{\times}10^{6},1.74\mathord{\times}10^{6}]$ \\
\bottomrule
\end{tabular}
\end{table}

\begin{table}[htbp]
\centering
\scriptsize
\caption{Simulator time for $(H,S,A)=(5, 5, 5)$, in seconds. Entries are medians with 95\% percentile bootstrap CIs over ten MDP instances; speedup is the median of paired time ratios. Conventional entries beyond the last measured $10^{8}$ budget are marked $\approx$ and extrapolated by a factor of ten per additional power of ten.}
\label{tab:vpvi-timing-s2}
\setlength{\tabcolsep}{3pt}
\begin{tabular}{rlll}
\toprule
Budget & Conventional (s) & Fast (s) & Speedup \\
\midrule
$10^{1}$ & $1.94\mathord{\times}10^{-4}\,[1.92\mathord{\times}10^{-4},2.09\mathord{\times}10^{-4}]$ & $1.29\mathord{\times}10^{-4}\,[1.20\mathord{\times}10^{-4},1.45\mathord{\times}10^{-4}]$ & $1.49\,[1.42,1.63]$ \\
$10^{2}$ & $1.88\mathord{\times}10^{-3}\,[1.86\mathord{\times}10^{-3},2.05\mathord{\times}10^{-3}]$ & $2.70\mathord{\times}10^{-4}\,[2.62\mathord{\times}10^{-4},3.06\mathord{\times}10^{-4}]$ & $6.81\,[6.62,7.12]$ \\
$10^{3}$ & $1.86\mathord{\times}10^{-2}\,[1.84\mathord{\times}10^{-2},2.03\mathord{\times}10^{-2}]$ & $3.00\mathord{\times}10^{-4}\,[2.83\mathord{\times}10^{-4},3.31\mathord{\times}10^{-4}]$ & $6.18\mathord{\times}10^{1}\,[6.05\mathord{\times}10^{1},6.51\mathord{\times}10^{1}]$ \\
$10^{4}$ & $1.87\mathord{\times}10^{-1}\,[1.83\mathord{\times}10^{-1},2.04\mathord{\times}10^{-1}]$ & $3.09\mathord{\times}10^{-4}\,[2.89\mathord{\times}10^{-4},3.41\mathord{\times}10^{-4}]$ & $6.23\mathord{\times}10^{2}\,[5.95\mathord{\times}10^{2},6.32\mathord{\times}10^{2}]$ \\
$10^{5}$ & $1.89\,[1.85,2.01]$ & $2.90\mathord{\times}10^{-4}\,[2.78\mathord{\times}10^{-4},3.34\mathord{\times}10^{-4}]$ & $6.43\mathord{\times}10^{3}\,[6.04\mathord{\times}10^{3},6.79\mathord{\times}10^{3}]$ \\
$10^{6}$ & $1.91\mathord{\times}10^{1}\,[1.86\mathord{\times}10^{1},2.06\mathord{\times}10^{1}]$ & $2.87\mathord{\times}10^{-4}\,[2.71\mathord{\times}10^{-4},3.27\mathord{\times}10^{-4}]$ & $6.60\mathord{\times}10^{4}\,[6.25\mathord{\times}10^{4},6.79\mathord{\times}10^{4}]$ \\
$10^{7}$ & $1.86\mathord{\times}10^{2}\,[1.82\mathord{\times}10^{2},2.03\mathord{\times}10^{2}]$ & $2.83\mathord{\times}10^{-4}\,[2.73\mathord{\times}10^{-4},3.21\mathord{\times}10^{-4}]$ & $6.49\mathord{\times}10^{5}\,[6.18\mathord{\times}10^{5},6.75\mathord{\times}10^{5}]$ \\
\bottomrule
\end{tabular}
\end{table}

\begin{table}[htbp]
\centering
\scriptsize
\caption{Simulator time for $(H,S,A)=(7, 8, 6)$, in seconds. Entries are medians with 95\% percentile bootstrap CIs over ten MDP instances; speedup is the median of paired time ratios. Conventional entries beyond the last measured $10^{8}$ budget are marked $\approx$ and extrapolated by a factor of ten per additional power of ten.}
\label{tab:vpvi-timing-s3}
\setlength{\tabcolsep}{3pt}
\begin{tabular}{rlll}
\toprule
Budget & Conventional (s) & Fast (s) & Speedup \\
\midrule
$10^{1}$ & $3.07\mathord{\times}10^{-4}\,[2.77\mathord{\times}10^{-4},3.08\mathord{\times}10^{-4}]$ & $2.72\mathord{\times}10^{-4}\,[2.54\mathord{\times}10^{-4},2.82\mathord{\times}10^{-4}]$ & $1.10\,[1.07,1.13]$ \\
$10^{2}$ & $3.01\mathord{\times}10^{-3}\,[2.70\mathord{\times}10^{-3},3.03\mathord{\times}10^{-3}]$ & $7.98\mathord{\times}10^{-4}\,[7.44\mathord{\times}10^{-4},8.06\mathord{\times}10^{-4}]$ & $3.73\,[3.66,3.78]$ \\
$10^{3}$ & $3.01\mathord{\times}10^{-2}\,[2.69\mathord{\times}10^{-2},3.02\mathord{\times}10^{-2}]$ & $1.04\mathord{\times}10^{-3}\,[9.73\mathord{\times}10^{-4},1.05\mathord{\times}10^{-3}]$ & $2.87\mathord{\times}10^{1}\,[2.77\mathord{\times}10^{1},2.90\mathord{\times}10^{1}]$ \\
$10^{4}$ & $3.00\mathord{\times}10^{-1}\,[2.67\mathord{\times}10^{-1},3.01\mathord{\times}10^{-1}]$ & $1.17\mathord{\times}10^{-3}\,[1.10\mathord{\times}10^{-3},1.19\mathord{\times}10^{-3}]$ & $2.52\mathord{\times}10^{2}\,[2.43\mathord{\times}10^{2},2.55\mathord{\times}10^{2}]$ \\
$10^{5}$ & $3.05\,[2.68,3.06]$ & $1.07\mathord{\times}10^{-3}\,[1.01\mathord{\times}10^{-3},1.08\mathord{\times}10^{-3}]$ & $2.79\mathord{\times}10^{3}\,[2.66\mathord{\times}10^{3},2.85\mathord{\times}10^{3}]$ \\
$10^{6}$ & $3.05\mathord{\times}10^{1}\,[2.72\mathord{\times}10^{1},3.06\mathord{\times}10^{1}]$ & $1.01\mathord{\times}10^{-3}\,[9.46\mathord{\times}10^{-4},1.01\mathord{\times}10^{-3}]$ & $3.01\mathord{\times}10^{4}\,[2.94\mathord{\times}10^{4},3.03\mathord{\times}10^{4}]$ \\
$10^{7}$ & $3.04\mathord{\times}10^{2}\,[2.64\mathord{\times}10^{2},3.06\mathord{\times}10^{2}]$ & $9.94\mathord{\times}10^{-4}\,[9.33\mathord{\times}10^{-4},9.99\mathord{\times}10^{-4}]$ & $3.02\mathord{\times}10^{5}\,[2.83\mathord{\times}10^{5},3.07\mathord{\times}10^{5}]$ \\
\bottomrule
\end{tabular}
\end{table}

\begin{table}[htbp]
\centering
\tiny
\caption{Simulator time for $(H,S,A)=(10, 15, 10)$, in seconds. Entries are medians with 95\% percentile bootstrap CIs over ten MDP instances; speedup is the median of paired time ratios. Conventional entries beyond the last measured $10^{8}$ budget are marked $\approx$ and extrapolated by a factor of ten per additional power of ten.}
\label{tab:vpvi-timing-s4}
\setlength{\tabcolsep}{3pt}
\begin{tabular}{rlll}
\toprule
Budget & Conventional (s) & Fast (s) & Speedup \\
\midrule
$10^{1}$ & $3.43\mathord{\times}10^{-4}\,[3.36\mathord{\times}10^{-4},4.17\mathord{\times}10^{-4}]$ & $5.90\mathord{\times}10^{-4}\,[5.80\mathord{\times}10^{-4},6.22\mathord{\times}10^{-4}]$ & $5.98\mathord{\times}10^{-1}\,[5.72\mathord{\times}10^{-1},6.62\mathord{\times}10^{-1}]$ \\
$10^{2}$ & $3.34\mathord{\times}10^{-3}\,[3.26\mathord{\times}10^{-3},3.82\mathord{\times}10^{-3}]$ & $2.09\mathord{\times}10^{-3}\,[2.05\mathord{\times}10^{-3},2.18\mathord{\times}10^{-3}]$ & $1.64\,[1.59,1.77]$ \\
$10^{3}$ & $3.32\mathord{\times}10^{-2}\,[3.25\mathord{\times}10^{-2},3.93\mathord{\times}10^{-2}]$ & $4.27\mathord{\times}10^{-3}\,[4.17\mathord{\times}10^{-3},4.40\mathord{\times}10^{-3}]$ & $8.02\,[7.76,8.75]$ \\
$10^{4}$ & $3.33\mathord{\times}10^{-1}\,[3.27\mathord{\times}10^{-1},3.93\mathord{\times}10^{-1}]$ & $5.53\mathord{\times}10^{-3}\,[5.37\mathord{\times}10^{-3},5.65\mathord{\times}10^{-3}]$ & $6.23\mathord{\times}10^{1}\,[6.02\mathord{\times}10^{1},6.75\mathord{\times}10^{1}]$ \\
$10^{5}$ & $3.38\,[3.29,3.84]$ & $6.59\mathord{\times}10^{-3}\,[6.34\mathord{\times}10^{-3},6.73\mathord{\times}10^{-3}]$ & $5.29\mathord{\times}10^{2}\,[5.05\mathord{\times}10^{2},5.71\mathord{\times}10^{2}]$ \\
$10^{6}$ & $3.43\mathord{\times}10^{1}\,[3.39\mathord{\times}10^{1},4.04\mathord{\times}10^{1}]$ & $5.53\mathord{\times}10^{-3}\,[5.19\mathord{\times}10^{-3},5.70\mathord{\times}10^{-3}]$ & $6.67\mathord{\times}10^{3}\,[6.22\mathord{\times}10^{3},7.03\mathord{\times}10^{3}]$ \\
$10^{7}$ & $3.49\mathord{\times}10^{2}\,[3.43\mathord{\times}10^{2},3.91\mathord{\times}10^{2}]$ & $4.97\mathord{\times}10^{-3}\,[4.71\mathord{\times}10^{-3},5.26\mathord{\times}10^{-3}]$ & $7.33\mathord{\times}10^{4}\,[6.95\mathord{\times}10^{4},7.62\mathord{\times}10^{4}]$ \\
\bottomrule
\end{tabular}
\end{table}

\subsection{Online UCB-Advantage}
\label{exp-ucb}
We next consider online UCB-Advantage, where the policy changes at data-dependent update boundaries. Under conventional simulation, episodes are generated sequentially, and learner updates are triggered as soon as the corresponding stage or reference-value thresholds are reached. In contrast, the adaptive mode of our fast simulation framework generates an aggregate Markov flow for a candidate batch of episodes under the current policy and uses hypergeometric splitting to locate the first update boundary. The aggregate statistics before the boundary are processed in batch, the boundary episode is handled according to the original learner update, and simulation then restarts under the updated policy. Between two consecutive update boundaries, the quantities entering a $Q$-update are determined by the learner state fixed at the beginning of the batch, including the relevant value functions and visit counts, together with the additive accumulators collected over the current stage. Hence, these quantities can be recovered from the aggregate flow without materializing the individual episodes. Both implementations exploit the deterministic action selection of UCB-Advantage when simulating transitions. The confidence factor $\iota$ and bonus confidence constants are fixed to one in \Cref{alg:review-ucb-advantage}, and the reference-update threshold is set to $HSA$.

Learning performance is evaluated with regret. For each batch, let $N_1(s)$ denote the number of episodes whose initial state is $s$. The regret contributed by this batch is $\sum_{s\in\cS} N_1(s)\bigl(V_1^\star(s)-V_1^\pi(s)\bigr)$, where $V_1^\star(s)$ is the optimal initial-step value in state $s$, and $V_1^\pi(s)$ is the value of policy $\pi$ used throughout that batch. At each batch endpoint, including a possible terminal partial batch, the cumulative regret is obtained by summing the regret contributions from all batches up to that point.

The regret results are shown in \Cref{fig:ucb-regret}. For each method, we use ten independent replicates on the same fixed MDP, each consisting of $K=10^8$ episodes, and record cumulative regret at every completed batch endpoint; straight segments are used only to interpolate between recorded endpoints for visualization. In each plot, the solid curve is the median over the ten replicates, with the shaded region showing the empirical 10th--90th percentile range. Across all four problem scales, the conventional and fast regret curves closely track each other over the common episode range. At the largest scale, $(H,S,A)=(10,15,10)$, the final median regrets at $K=10^8$ are approximately $511861$ and $511803$, respectively. This close agreement is consistent with the distributional equivalence of the adaptive fast simulation procedure and conventional online simulation established by \Cref{thm:adaptive}.

We further examine the number of processed adaptive batches in Figure 2. This statistic includes all processed batches, including a possible terminal partial batch at the prescribed episode budget, and provides a direct diagnostic of whether the fast simulator reproduces the batching behavior of the original algorithm. The conventional and fast methods again exhibit closely matched trajectories. At the largest scale, their final median batch counts are approximately 48773 and 48749, respectively.

The timing results in Tables~\ref{tab:ucb-timing-s1}--\ref{tab:ucb-timing-s4} illustrate the cost introduced by adaptive splitting. Unlike direct batch simulation, the adaptive method must repeatedly locate data-dependent boundaries and restart simulation after learner updates, so it is not uniformly faster at small budgets. Conventional simulation grows approximately linearly with $K$, whereas the fast method replaces long episode-by-episode segments between updates with aggregate flow generation and boundary refinement, resulting in slower runtime growth at large budgets. 

For each problem scale, the boldfaced budget in the corresponding timing table denotes the smallest episode count on the tested budget grid at which the mean value of $\operatorname{Regret}(K)/K$ for the conventional method falls below $0.01$. At these accuracy-relevant budgets and beyond, the fast simulator is consistently faster than conventional simulation. For the largest scale, conventional simulation remains faster from $K=10^2$ to $10^7$, but at the last measured budget $K=10^8$, its median simulator time is $3543$ seconds compared with $2032$ seconds for fast simulation, a speedup of about $1.72$. The crossover point and runtime growth of the adaptive fast method vary with the MDP size and budget. At $K=10^{15}$ for the largest scale $(10,15,10)$, the fast simulator remains directly measurable, with a median runtime of $7420$ seconds, or approximately $2.06$ hours. In contrast, the extrapolated conventional runtime at the same budget is approximately $3.54\times10^{10}$ seconds, or about $1120$ years, making direct conventional simulation computationally prohibitive. This simulator cost is far higher than the maximum learner time of about $38$ seconds across all tested configurations, showing that simulation overwhelmingly dominates the computational cost in the large-budget regime. Overall, these results demonstrate that, despite the overhead of adaptive boundary refinement, our method yields substantial computational gains at large budgets by avoiding episode-by-episode simulation.

\begin{figure}[htbp]
 \centering
 \includegraphics[width=0.993\textwidth]{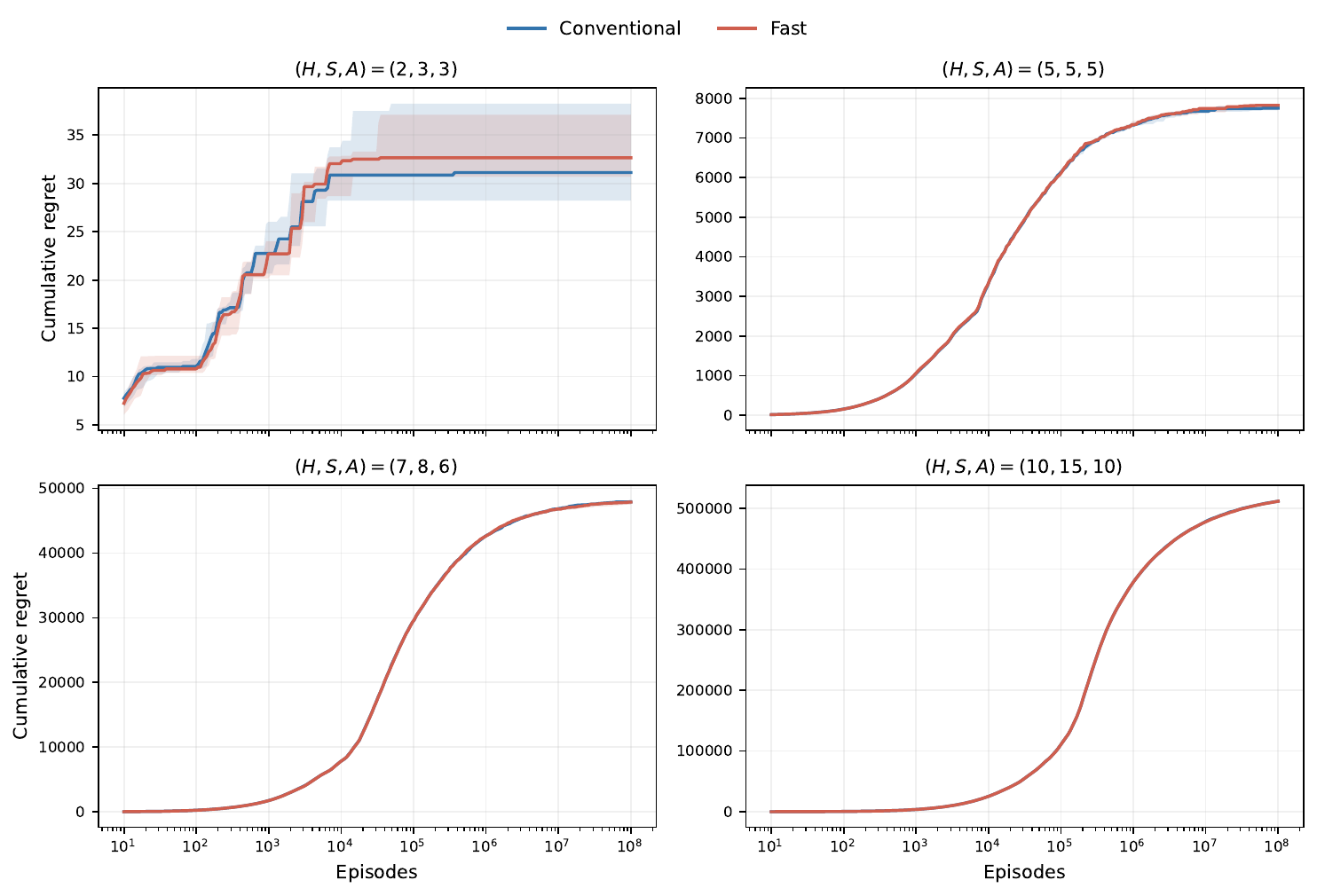}
 \caption{Cumulative UCB-Advantage regret on four fixed MDPs, one per scale. Conventional and fast simulation each use ten independent simulation seeds. Lines are seed medians and shaded bands are empirical 10th--90th percentiles. Intermediate regret values are visually interpolated between recorded batch endpoints.}
 \label{fig:ucb-regret}
\end{figure}

\begin{figure}[H]
 \centering
 \includegraphics[width=0.993\textwidth]{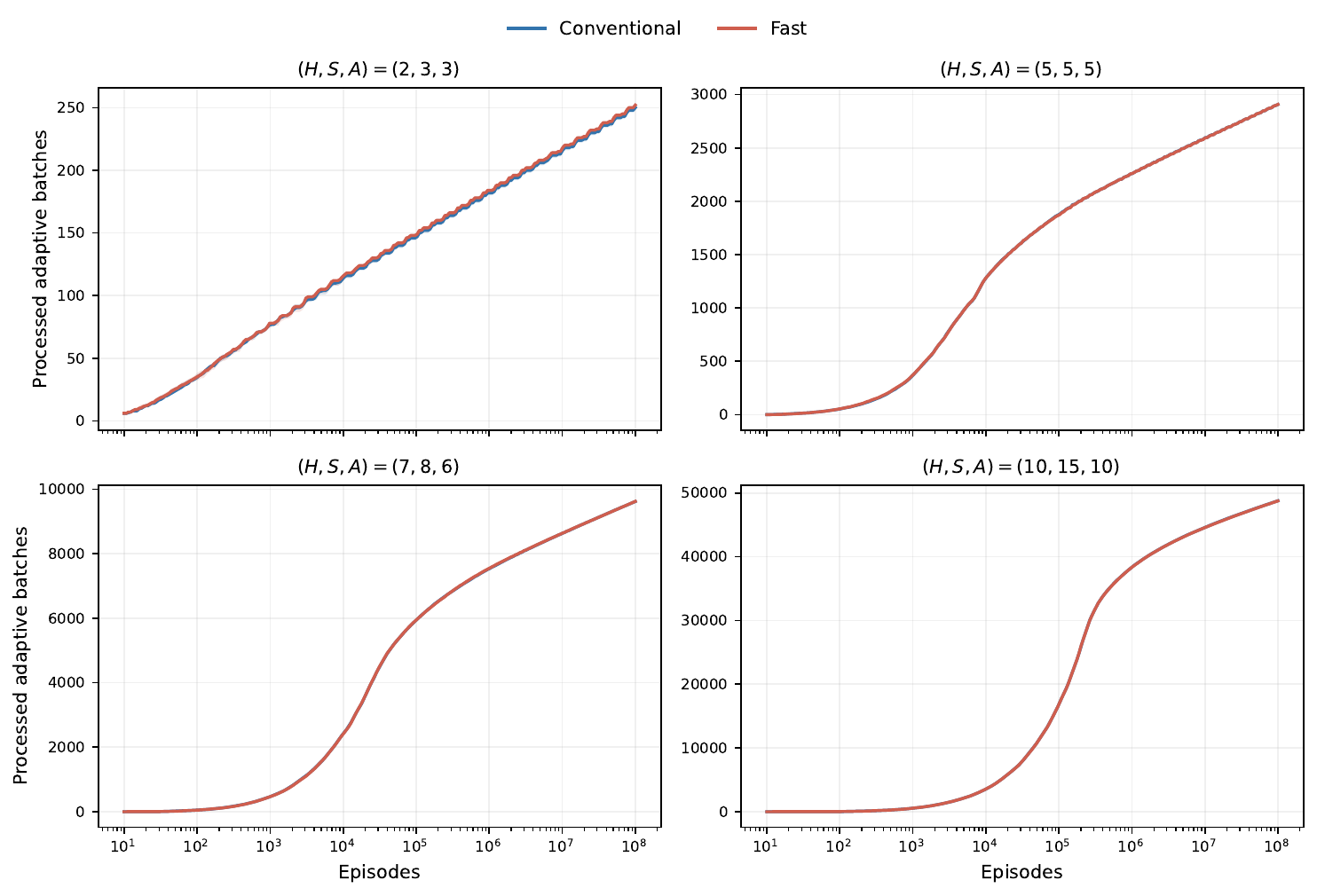}
 \caption{Processed adaptive batch count on the same four fixed MDPs. Lines are medians and shading is the empirical 10th--90th percentile range over ten simulation seeds per method. Counts change only at recorded batch endpoints.}
 \label{fig:ucb-batches}
\end{figure}

\begin{table}[htbp]
\centering
\tiny
\caption{Simulator time for $(H,S,A)=(2, 3, 3)$, in seconds. Entries are medians with 95\% percentile bootstrap CIs over ten MDP instances; speedup is the median of paired time ratios. Conventional entries beyond the last measured $10^{8}$ budget are marked $\approx$ and extrapolated by a factor of ten per additional power of ten.}
\label{tab:ucb-timing-s1}
\setlength{\tabcolsep}{0.5pt}
\begin{tabular}{rlll}
\toprule
Budget & Conventional (s) & Fast (s) & Speedup \\
\midrule
$10^{1}$ & $1.64\mathord{\times}10^{-4}\,[1.56\mathord{\times}10^{-4},1.81\mathord{\times}10^{-4}]$ & $8.18\mathord{\times}10^{-4}\,[7.76\mathord{\times}10^{-4},9.48\mathord{\times}10^{-4}]$ & $1.97\mathord{\times}10^{-1}\,[1.83\mathord{\times}10^{-1},2.27\mathord{\times}10^{-1}]$ \\
$10^{2}$ & $1.41\mathord{\times}10^{-3}\,[1.38\mathord{\times}10^{-3},1.58\mathord{\times}10^{-3}]$ & $1.41\mathord{\times}10^{-2}\,[1.25\mathord{\times}10^{-2},1.62\mathord{\times}10^{-2}]$ & $1.04\mathord{\times}10^{-1}\,[9.63\mathord{\times}10^{-2},1.09\mathord{\times}10^{-1}]$ \\
$10^{3}$ & $1.24\mathord{\times}10^{-2}\,[1.19\mathord{\times}10^{-2},1.38\mathord{\times}10^{-2}]$ & $5.60\mathord{\times}10^{-2}\,[5.25\mathord{\times}10^{-2},5.89\mathord{\times}10^{-2}]$ & $2.37\mathord{\times}10^{-1}\,[2.17\mathord{\times}10^{-1},2.47\mathord{\times}10^{-1}]$ \\
$\boldsymbol{10^{4}}$ & $\boldsymbol{1.20\mathord{\times}10^{-1}\,[1.16\mathord{\times}10^{-1},1.34\mathord{\times}10^{-1}]}$ & $\boldsymbol{1.16\mathord{\times}10^{-1}\,[1.07\mathord{\times}10^{-1},1.25\mathord{\times}10^{-1}]}$ & $\boldsymbol{1.10\,[1.01,1.14]}$ \\
$10^{5}$ & $1.20\,[1.16,1.33]$ & $1.96\mathord{\times}10^{-1}\,[1.79\mathord{\times}10^{-1},2.04\mathord{\times}10^{-1}]$ & $6.58\,[5.90,6.85]$ \\
$10^{6}$ & $1.19\mathord{\times}10^{1}\,[1.14\mathord{\times}10^{1},1.33\mathord{\times}10^{1}]$ & $2.82\mathord{\times}10^{-1}\,[2.65\mathord{\times}10^{-1},2.94\mathord{\times}10^{-1}]$ & $4.40\mathord{\times}10^{1}\,[4.03\mathord{\times}10^{1},4.65\mathord{\times}10^{1}]$ \\
$10^{7}$ & $1.20\mathord{\times}10^{2}\,[1.16\mathord{\times}10^{2},1.33\mathord{\times}10^{2}]$ & $3.89\mathord{\times}10^{-1}\,[3.69\mathord{\times}10^{-1},4.06\mathord{\times}10^{-1}]$ & $3.19\mathord{\times}10^{2}\,[3.02\mathord{\times}10^{2},3.36\mathord{\times}10^{2}]$ \\
$10^{8}$ & $1.19\mathord{\times}10^{3}\,[1.18\mathord{\times}10^{3},1.36\mathord{\times}10^{3}]$ & $5.11\mathord{\times}10^{-1}\,[4.82\mathord{\times}10^{-1},5.36\mathord{\times}10^{-1}]$ & $2.44\mathord{\times}10^{3}\,[2.33\mathord{\times}10^{3},2.55\mathord{\times}10^{3}]$ \\
$10^{9}$ & $\approx1.19\mathord{\times}10^{4}$ & $6.52\mathord{\times}10^{-1}\,[6.29\mathord{\times}10^{-1},6.88\mathord{\times}10^{-1}]$ & $\approx1.88\mathord{\times}10^{4}$ \\
$10^{10}$ & $\approx1.19\mathord{\times}10^{5}$ & $8.05\mathord{\times}10^{-1}\,[7.74\mathord{\times}10^{-1},8.48\mathord{\times}10^{-1}]$ & $\approx1.53\mathord{\times}10^{5}$ \\
$10^{11}$ & $\approx1.19\mathord{\times}10^{6}$ & $9.65\mathord{\times}10^{-1}\,[9.23\mathord{\times}10^{-1},1.02]$ & $\approx1.28\mathord{\times}10^{6}$ \\
$10^{12}$ & $\approx1.19\mathord{\times}10^{7}$ & $1.14\,[1.10,1.20]$ & $\approx1.07\mathord{\times}10^{7}$ \\
$10^{13}$ & $\approx1.19\mathord{\times}10^{8}$ & $1.32\,[1.30,1.40]$ & $\approx9.16\mathord{\times}10^{7}$ \\
$10^{14}$ & $\approx1.19\mathord{\times}10^{9}$ & $1.52\,[1.48,1.60]$ & $\approx7.99\mathord{\times}10^{8}$ \\
$10^{15}$ & $\approx1.19\mathord{\times}10^{10}$ & $1.74\,[1.71,1.85]$ & $\approx6.96\mathord{\times}10^{9}$ \\
\bottomrule
\end{tabular}
\end{table}

\begin{table}[htbp]
\centering
\tiny
\caption{Simulator time for $(H,S,A)=(5, 5, 5)$, in seconds. Entries are medians with 95\% percentile bootstrap CIs over ten MDP instances; speedup is the median of paired time ratios. Conventional entries beyond the last measured $10^{8}$ budget are marked $\approx$ and extrapolated by a factor of ten per additional power of ten.}
\label{tab:ucb-timing-s2}
\setlength{\tabcolsep}{3pt}
\begin{tabular}{rlll}
\toprule
Budget & Conventional (s) & Fast (s) & Speedup \\
\midrule
$10^{1}$ & $2.41\mathord{\times}10^{-4}\,[2.22\mathord{\times}10^{-4},2.52\mathord{\times}10^{-4}]$ & $5.98\mathord{\times}10^{-4}\,[2.98\mathord{\times}10^{-4},7.18\mathord{\times}10^{-4}]$ & $4.08\mathord{\times}10^{-1}\,[3.44\mathord{\times}10^{-1},1.96]$ \\
$10^{2}$ & $2.63\mathord{\times}10^{-3}\,[2.43\mathord{\times}10^{-3},2.72\mathord{\times}10^{-3}]$ & $5.11\mathord{\times}10^{-2}\,[4.82\mathord{\times}10^{-2},5.31\mathord{\times}10^{-2}]$ & $5.14\mathord{\times}10^{-2}\,[4.93\mathord{\times}10^{-2},5.36\mathord{\times}10^{-2}]$ \\
$10^{3}$ & $2.47\mathord{\times}10^{-2}\,[2.33\mathord{\times}10^{-2},2.63\mathord{\times}10^{-2}]$ & $6.60\mathord{\times}10^{-1}\,[6.31\mathord{\times}10^{-1},7.25\mathord{\times}10^{-1}]$ & $3.63\mathord{\times}10^{-2}\,[3.54\mathord{\times}10^{-2},3.87\mathord{\times}10^{-2}]$ \\
$10^{4}$ & $2.31\mathord{\times}10^{-1}\,[2.15\mathord{\times}10^{-1},2.44\mathord{\times}10^{-1}]$ & $3.65\,[3.52,3.85]$ & $6.24\mathord{\times}10^{-2}\,[6.15\mathord{\times}10^{-2},6.35\mathord{\times}10^{-2}]$ \\
$10^{5}$ & $2.22\,[2.16,2.35]$ & $7.45\,[7.15,7.84]$ & $2.98\mathord{\times}10^{-1}\,[2.93\mathord{\times}10^{-1},3.04\mathord{\times}10^{-1}]$ \\
$\boldsymbol{10^{6}}$ & $\boldsymbol{2.20\mathord{\times}10^{1}\,[2.13\mathord{\times}10^{1},2.33\mathord{\times}10^{1}]}$ & $\boldsymbol{1.11\mathord{\times}10^{1}\,[1.06\mathord{\times}10^{1},1.19\mathord{\times}10^{1}]}$ & $\boldsymbol{1.99\,[1.95,2.03]}$ \\
$10^{7}$ & $2.21\mathord{\times}10^{2}\,[2.19\mathord{\times}10^{2},2.32\mathord{\times}10^{2}]$ & $1.51\mathord{\times}10^{1}\,[1.46\mathord{\times}10^{1},1.62\mathord{\times}10^{1}]$ & $1.48\mathord{\times}10^{1}\,[1.44\mathord{\times}10^{1},1.51\mathord{\times}10^{1}]$ \\
$10^{8}$ & $2.16\mathord{\times}10^{3}\,[2.12\mathord{\times}10^{3},2.31\mathord{\times}10^{3}]$ & $1.98\mathord{\times}10^{1}\,[1.91\mathord{\times}10^{1},2.09\mathord{\times}10^{1}]$ & $1.10\mathord{\times}10^{2}\,[1.08\mathord{\times}10^{2},1.14\mathord{\times}10^{2}]$ \\
$10^{9}$ & $\approx2.16\mathord{\times}10^{4}$ & $2.51\mathord{\times}10^{1}\,[2.40\mathord{\times}10^{1},2.64\mathord{\times}10^{1}]$ & $\approx8.65\mathord{\times}10^{2}$ \\
$10^{10}$ & $\approx2.16\mathord{\times}10^{5}$ & $3.10\mathord{\times}10^{1}\,[2.96\mathord{\times}10^{1},3.22\mathord{\times}10^{1}]$ & $\approx7.08\mathord{\times}10^{3}$ \\
$10^{11}$ & $\approx2.16\mathord{\times}10^{6}$ & $3.70\mathord{\times}10^{1}\,[3.55\mathord{\times}10^{1},3.88\mathord{\times}10^{1}]$ & $\approx5.88\mathord{\times}10^{4}$ \\
$10^{12}$ & $\approx2.16\mathord{\times}10^{7}$ & $4.39\mathord{\times}10^{1}\,[4.23\mathord{\times}10^{1},4.59\mathord{\times}10^{1}]$ & $\approx4.98\mathord{\times}10^{5}$ \\
$10^{13}$ & $\approx2.16\mathord{\times}10^{8}$ & $5.20\mathord{\times}10^{1}\,[4.96\mathord{\times}10^{1},5.39\mathord{\times}10^{1}]$ & $\approx4.23\mathord{\times}10^{6}$ \\
$10^{14}$ & $\approx2.16\mathord{\times}10^{9}$ & $5.95\mathord{\times}10^{1}\,[5.72\mathord{\times}10^{1},6.19\mathord{\times}10^{1}]$ & $\approx3.70\mathord{\times}10^{7}$ \\
$10^{15}$ & $\approx2.16\mathord{\times}10^{10}$ & $6.79\mathord{\times}10^{1}\,[6.49\mathord{\times}10^{1},7.04\mathord{\times}10^{1}]$ & $\approx3.25\mathord{\times}10^{8}$ \\
\bottomrule
\end{tabular}
\end{table}

\begin{table}[H]
\centering
\tiny
\caption{Simulator time for $(H,S,A)=(7, 8, 6)$, in seconds. Entries are medians with 95\% percentile bootstrap CIs over ten MDP instances; speedup is the median of paired time ratios. Conventional entries beyond the last measured $10^{8}$ budget are marked $\approx$ and extrapolated by a factor of ten per additional power of ten.}
\label{tab:ucb-timing-s3}
\setlength{\tabcolsep}{2.5pt}
\begin{tabular}{rlll}
\toprule
Budget & Conventional (s) & Fast (s) & Speedup \\
\midrule
$10^{1}$ & $2.53\mathord{\times}10^{-4}\,[2.50\mathord{\times}10^{-4},2.76\mathord{\times}10^{-4}]$ & $9.84\mathord{\times}10^{-5}\,[9.43\mathord{\times}10^{-5},1.11\mathord{\times}10^{-4}]$ & $2.59\,[2.40,2.70]$ \\
$10^{2}$ & $2.78\mathord{\times}10^{-3}\,[2.74\mathord{\times}10^{-3},3.00\mathord{\times}10^{-3}]$ & $6.05\mathord{\times}10^{-2}\,[5.90\mathord{\times}10^{-2},7.26\mathord{\times}10^{-2}]$ & $4.53\mathord{\times}10^{-2}\,[4.22\mathord{\times}10^{-2},4.70\mathord{\times}10^{-2}]$ \\
$10^{3}$ & $2.74\mathord{\times}10^{-2}\,[2.71\mathord{\times}10^{-2},2.99\mathord{\times}10^{-2}]$ & $1.33\,[1.32,1.66]$ & $2.05\mathord{\times}10^{-2}\,[1.87\mathord{\times}10^{-2},2.06\mathord{\times}10^{-2}]$ \\
$10^{4}$ & $2.59\mathord{\times}10^{-1}\,[2.55\mathord{\times}10^{-1},2.82\mathord{\times}10^{-1}]$ & $1.10\mathord{\times}10^{1}\,[1.06\mathord{\times}10^{1},1.34\mathord{\times}10^{1}]$ & $2.33\mathord{\times}10^{-2}\,[2.18\mathord{\times}10^{-2},2.42\mathord{\times}10^{-2}]$ \\
$10^{5}$ & $2.47\,[2.43,2.67]$ & $4.02\mathord{\times}10^{1}\,[3.87\mathord{\times}10^{1},4.85\mathord{\times}10^{1}]$ & $6.07\mathord{\times}10^{-2}\,[5.62\mathord{\times}10^{-2},6.30\mathord{\times}10^{-2}]$ \\
$10^{6}$ & $2.42\mathord{\times}10^{1}\,[2.39\mathord{\times}10^{1},2.54\mathord{\times}10^{1}]$ & $6.53\mathord{\times}10^{1}\,[6.24\mathord{\times}10^{1},7.88\mathord{\times}10^{1}]$ & $3.67\mathord{\times}10^{-1}\,[3.37\mathord{\times}10^{-1},3.85\mathord{\times}10^{-1}]$ \\
$\boldsymbol{10^{7}}$ & $\boldsymbol{2.41\mathord{\times}10^{2}\,[2.39\mathord{\times}10^{2},2.82\mathord{\times}10^{2}]}$ & $\boldsymbol{9.07\mathord{\times}10^{1}\,[8.62\mathord{\times}10^{1},1.09\mathord{\times}10^{2}]}$ & $\boldsymbol{2.66\,[2.52,2.77]}$ \\
$10^{8}$ & $2.31\mathord{\times}10^{3}\,[2.29\mathord{\times}10^{3},2.79\mathord{\times}10^{3}]$ & $1.17\mathord{\times}10^{2}\,[1.12\mathord{\times}10^{2},1.43\mathord{\times}10^{2}]$ & $1.96\mathord{\times}10^{1}\,[1.91\mathord{\times}10^{1},2.06\mathord{\times}10^{1}]$ \\
$10^{9}$ & $\approx2.31\mathord{\times}10^{4}$ & $1.47\mathord{\times}10^{2}\,[1.40\mathord{\times}10^{2},1.81\mathord{\times}10^{2}]$ & $\approx1.55\mathord{\times}10^{2}$ \\
$10^{10}$ & $\approx2.31\mathord{\times}10^{5}$ & $1.80\mathord{\times}10^{2}\,[1.73\mathord{\times}10^{2},2.18\mathord{\times}10^{2}]$ & $\approx1.28\mathord{\times}10^{3}$ \\
$10^{11}$ & $\approx2.31\mathord{\times}10^{6}$ & $2.19\mathord{\times}10^{2}\,[2.09\mathord{\times}10^{2},2.69\mathord{\times}10^{2}]$ & $\approx1.05\mathord{\times}10^{4}$ \\
$10^{12}$ & $\approx2.31\mathord{\times}10^{7}$ & $2.60\mathord{\times}10^{2}\,[2.50\mathord{\times}10^{2},3.20\mathord{\times}10^{2}]$ & $\approx8.83\mathord{\times}10^{4}$ \\
$10^{13}$ & $\approx2.31\mathord{\times}10^{8}$ & $3.15\mathord{\times}10^{2}\,[2.95\mathord{\times}10^{2},3.78\mathord{\times}10^{2}]$ & $\approx7.43\mathord{\times}10^{5}$ \\
$10^{14}$ & $\approx2.31\mathord{\times}10^{9}$ & $3.82\mathord{\times}10^{2}\,[3.59\mathord{\times}10^{2},4.41\mathord{\times}10^{2}]$ & $\approx6.23\mathord{\times}10^{6}$ \\
$10^{15}$ & $\approx2.31\mathord{\times}10^{10}$ & $4.39\mathord{\times}10^{2}\,[4.17\mathord{\times}10^{2},5.09\mathord{\times}10^{2}]$ & $\approx5.40\mathord{\times}10^{7}$ \\
\bottomrule
\end{tabular}
\end{table}

\begin{table}[H]
\centering
\tiny
\caption{Simulator time for $(H,S,A)=(10, 15, 10)$, in seconds. Entries are medians with 95\% percentile bootstrap CIs over ten MDP instances; speedup is the median of paired time ratios. Conventional entries beyond the last measured $10^{8}$ budget are marked $\approx$ and extrapolated by a factor of ten per additional power of ten.}
\label{tab:ucb-timing-s4}
\setlength{\tabcolsep}{3pt}
\begin{tabular}{rlll}
\toprule
Budget & Conventional (s) & Fast (s) & Speedup \\
\midrule
$10^{1}$ & $3.68\mathord{\times}10^{-4}\,[3.42\mathord{\times}10^{-4},3.91\mathord{\times}10^{-4}]$ & $2.14\mathord{\times}10^{-4}\,[2.01\mathord{\times}10^{-4},2.23\mathord{\times}10^{-4}]$ & $1.75\,[1.63,1.81]$ \\
$10^{2}$ & $3.75\mathord{\times}10^{-3}\,[3.47\mathord{\times}10^{-3},3.89\mathord{\times}10^{-3}]$ & $3.98\mathord{\times}10^{-2}\,[3.41\mathord{\times}10^{-2},4.37\mathord{\times}10^{-2}]$ & $9.52\mathord{\times}10^{-2}\,[8.86\mathord{\times}10^{-2},1.02\mathord{\times}10^{-1}]$ \\
$10^{3}$ & $3.98\mathord{\times}10^{-2}\,[3.71\mathord{\times}10^{-2},4.22\mathord{\times}10^{-2}]$ & $4.39\,[4.03,4.47]$ & $9.28\mathord{\times}10^{-3}\,[9.00\mathord{\times}10^{-3},9.57\mathord{\times}10^{-3}]$ \\
$10^{4}$ & $3.86\mathord{\times}10^{-1}\,[3.56\mathord{\times}10^{-1},4.02\mathord{\times}10^{-1}]$ & $5.01\mathord{\times}10^{1}\,[4.62\mathord{\times}10^{1},5.11\mathord{\times}10^{1}]$ & $7.77\mathord{\times}10^{-3}\,[7.69\mathord{\times}10^{-3},7.94\mathord{\times}10^{-3}]$ \\
$10^{5}$ & $3.67\,[3.43,3.79]$ & $3.38\mathord{\times}10^{2}\,[3.09\mathord{\times}10^{2},3.44\mathord{\times}10^{2}]$ & $1.10\mathord{\times}10^{-2}\,[1.08\mathord{\times}10^{-2},1.12\mathord{\times}10^{-2}]$ \\
$10^{6}$ & $3.56\mathord{\times}10^{1}\,[3.33\mathord{\times}10^{1},3.67\mathord{\times}10^{1}]$ & $1.06\mathord{\times}10^{3}\,[9.92\mathord{\times}10^{2},1.08\mathord{\times}10^{3}]$ & $3.39\mathord{\times}10^{-2}\,[3.32\mathord{\times}10^{-2},3.46\mathord{\times}10^{-2}]$ \\
$10^{7}$ & $3.52\mathord{\times}10^{2}\,[3.27\mathord{\times}10^{2},3.64\mathord{\times}10^{2}]$ & $1.57\mathord{\times}10^{3}\,[1.48\mathord{\times}10^{3},1.63\mathord{\times}10^{3}]$ & $2.22\mathord{\times}10^{-1}\,[2.15\mathord{\times}10^{-1},2.31\mathord{\times}10^{-1}]$ \\
$\boldsymbol{10^{8}}$ & $\boldsymbol{3.54\mathord{\times}10^{3}\,[3.27\mathord{\times}10^{3},3.62\mathord{\times}10^{3}]}$ & $\boldsymbol{2.03\mathord{\times}10^{3}\,[1.91\mathord{\times}10^{3},2.11\mathord{\times}10^{3}]}$ & $\boldsymbol{1.72\,[1.66,1.78]}$ \\
$10^{9}$ & $\approx3.54\mathord{\times}10^{4}$ & $2.54\mathord{\times}10^{3}\,[2.37\mathord{\times}10^{3},2.63\mathord{\times}10^{3}]$ & $\approx1.38\mathord{\times}10^{1}$ \\
$10^{10}$ & $\approx3.54\mathord{\times}10^{5}$ & $3.07\mathord{\times}10^{3}\,[2.95\mathord{\times}10^{3},3.13\mathord{\times}10^{3}]$ & $\approx1.15\mathord{\times}10^{2}$ \\
$10^{11}$ & $\approx3.54\mathord{\times}10^{6}$ & $3.73\mathord{\times}10^{3}\,[3.61\mathord{\times}10^{3},3.91\mathord{\times}10^{3}]$ & $\approx9.35\mathord{\times}10^{2}$ \\
$10^{12}$ & $\approx3.54\mathord{\times}10^{7}$ & $4.56\mathord{\times}10^{3}\,[4.32\mathord{\times}10^{3},4.72\mathord{\times}10^{3}]$ & $\approx7.69\mathord{\times}10^{3}$ \\
$10^{13}$ & $\approx3.54\mathord{\times}10^{8}$ & $5.42\mathord{\times}10^{3}\,[5.16\mathord{\times}10^{3},5.60\mathord{\times}10^{3}]$ & $\approx6.48\mathord{\times}10^{4}$ \\
$10^{14}$ & $\approx3.54\mathord{\times}10^{9}$ & $6.37\mathord{\times}10^{3}\,[6.14\mathord{\times}10^{3},6.56\mathord{\times}10^{3}]$ & $\approx5.52\mathord{\times}10^{5}$ \\
$10^{15}$ & $\approx3.54\mathord{\times}10^{10}$ & $7.42\mathord{\times}10^{3}\,[7.25\mathord{\times}10^{3},7.62\mathord{\times}10^{3}]$ & $\approx4.74\mathord{\times}10^{6}$ \\
\bottomrule
\end{tabular}
\end{table}

\subsubsection{Large-Population Multivariate-Hypergeometric Sampling}

The adaptive fast simulator requires multivariate-hypergeometric (MHG) sampling when splitting an aggregate flow at a data-dependent boundary. An MHG draw can be generated through a sequence of conditional scalar hypergeometric draws. Specifically, let $b_1,\ldots,b_q$ denote the color counts, with $N_{\mathrm{pop}}=\sum_{i=1}^q b_i$, and suppose that $n$ items are drawn without replacement. Define
\[
R_i=\sum_{j=i}^q b_j,\qquad n_1=n,\qquad n_{i+1}=n_i-x_i.
\]
We sequentially sample
\[
x_i\sim\operatorname{Hypergeom}(R_i,b_i,n_i),\qquad i=1,\ldots,q-1,
\]
and set $x_q=n_q$. The resulting joint probability is
\[
\prod_{i=1}^{q-1}
\frac{\binom{b_i}{x_i}\binom{R_i-b_i}{n_i-x_i}}
{\binom{R_i}{n_i}}
=
\frac{\prod_{i=1}^{q}\binom{b_i}{x_i}}
{\binom{N_{\mathrm{pop}}}{n}},
\]
which is exactly the MHG probability mass function. Therefore, provided that each scalar hypergeometric draw is exact, this sequential construction produces the same MHG distribution as direct multivariate-hypergeometric sampling.

In our implementation, for total population size $N_{\mathrm{pop}}<10^9$, we use NumPy's MHG sampler with \texttt{method="marginals"}, which implements a sequential construction based on scalar hypergeometric draws. Since this method requires $N_{\mathrm{pop}}<10^9$, we use a custom large-population implementation when $N_{\mathrm{pop}}\geq 10^9$, retaining the same conditional factorization. Each scalar draw is generated using an HRUA-style ratio-of-uniforms rejection procedure. To reduce precision loss from subtracting large log-factorial terms, the compiled implementation evaluates the required log-probability ratios through stable log-factorial differences, with numerically ambiguous acceptance decisions recomputed using higher-precision arithmetic. The exactness result above concerns the probabilistic construction under exact scalar sampling; the custom sampler is its finite-precision numerical implementation. Thus, the large-population modification leaves the MHG factorization and distributional argument unchanged.
\section{Proof of \texorpdfstring{\Cref{thm:forward}}{Theorem 3.1}}
\label{app:proof-forward}
\begin{proof}
We prove equality of the joint flow law by induction over the step index $h$.

Under conventional episode-by-episode simulation, the initial states $X_{1,1},\ldots,X_{m,1}$ are i.i.d.\ from $\nu$. Hence
\[
N_1^{\iid}(\cdot)\sim \Mult(m,\nu),
\]
which is exactly the law used in~\Cref{eq:forward1}.

Now fix $h\in[H]$ and suppose that the joint law of the aggregate flow up to the beginning of step $h$, and in particular the law of $N_h$, agrees under the two constructions. We compare the conditional laws used to extend this flow through step $h$.

Let
\[
\mathcal H_h^{-}
=
\sigma(N_1,C_1,E_1,\ldots,C_{h-1},E_{h-1}),
\]
with $\mathcal H_1^{-}=\sigma(N_1)$, denote the aggregate-flow history up to the beginning of step $h$, and let
\[
\mathcal G_h^{-}
=
\sigma(X_{i,1},A_{i,1},X_{i,2},\ldots,A_{i,h-1},X_{i,h}:i\in[m])
\]
denote the corresponding finer labeled history. Then $\mathcal H_h^{-}\subseteq\mathcal G_h^{-}$, and $N_h$ is $\mathcal H_h^{-}$-measurable.

For conventional simulation, conditional on $\mathcal G_h^{-}$, for each $s\in\cS$, the labels satisfying $X_{i,h}=s$ choose their actions independently according to $\pi_h(\cdot\mid s)$. Therefore
\[
C_h(s,\cdot)\mid\mathcal G_h^{-}
\sim
\Mult\!\left(N_h(s),\pi_h(\cdot\mid s)\right),
\]
independently across $s$. Since this conditional law depends on $\mathcal G_h^{-}$ only through the $\mathcal H_h^{-}$-measurable count vector $N_h$, the tower property yields
\[
C_h(s,\cdot)\mid\mathcal H_h^{-}
\sim
\Mult\!\left(N_h(s),\pi_h(\cdot\mid s)\right),
\]
independently across $s$. This is exactly the conditional law in~\Cref{eq:forward2}.

Next, let
\[
\mathcal G_h
=
\sigma(\mathcal G_h^{-},A_{1,h},\ldots,A_{m,h})
\qquad\text{and}\qquad
\mathcal H_h
=
\sigma(\mathcal H_h^{-},C_h).
\]
Then $\mathcal H_h\subseteq\mathcal G_h$. Conditional on $\mathcal G_h$, for each $(s,a)\in\cS\times\cA$, the labels satisfying $(X_{i,h},A_{i,h})=(s,a)$ transition independently according to $P_h(\cdot\mid s,a)$. Thus
\[
E_h(s,a,\cdot)\mid\mathcal G_h
\sim
\Mult\!\left(C_h(s,a),P_h(\cdot\mid s,a)\right),
\]
independently across $(s,a)$. Since this conditional law depends on $\mathcal G_h$ only through the $\mathcal H_h$-measurable count array $C_h$, another application of the tower property gives
\[
E_h(s,a,\cdot)\mid\mathcal H_h
\sim
\Mult\!\left(C_h(s,a),P_h(\cdot\mid s,a)\right),
\]
independently across $(s,a)$, which is exactly the conditional law in~\Cref{eq:forward3}.

Finally, in both constructions the next state counts are the same deterministic function of the transition-flow counts,
\[
N_{h+1}(s')=\sum_{s,a}E_h(s,a,s'),\qquad s'\in\cS,
\]
as in~\Cref{eq:forward4}. Therefore, the conditional law of $(C_h,E_h,N_{h+1})$ given the previously generated aggregate flow is identical under conventional simulation and direct Markov-flow sampling. Induction over $h=1,\ldots,H$ gives equality of the joint law of $(N_1,C_1,E_1,\ldots,C_H,E_H)$, and hence
\[
F_m^{\fc}\stackrel{d}{=}F_m^{\iid}.
\]
\end{proof}

\section{Proof of \texorpdfstring{\Cref{thm:split}}{Theorem 3.3}}
\label{app:proof-split}

We first record the conditional combinatorial structure of a labeled batch given its aggregate flow. Throughout this section, let $\omega_1,\ldots,\omega_m\stackrel{\iid}{\sim}\mathbb P_\pi$ and let $F_m$ denote their aggregate Markov flow. Fix $I_L=\{1,\ldots,m_L\}$, and let $F_{m_L}^{\trueflow,L}$ be the flow of the trajectories whose labels belong to $I_L$.

\begin{lemma}[Conditional uniformity given the full flow]
\label{lem:conditional-uniformity}
Fix a flow realization $f=(N_1,C,E)$ such that $\mathbb P(F_m=f)>0$. Conditional on $F_m=f$, all ordered labeled trajectory sequences $(\omega_1,\ldots,\omega_m)$ compatible with $f$ are equally likely.
\end{lemma}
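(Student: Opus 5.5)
The plan is to show that the joint probability of an ordered labeled sequence depends on the sequence only through its aggregate flow, and then to condition on the flow. Fix a compatible sequence $(\omega_1,\ldots,\omega_m)$, meaning one whose aggregate counts equal $f=(N_1,C,E)$. By independence and \Cref{eq:traj-prob}, its probability is
\[
\prod_{i=1}^m \nu(x_{i,1})\prod_{h=1}^H \pi_h(a_{i,h}\mid x_{i,h})P_h(x_{i,h+1}\mid x_{i,h},a_{i,h}).
\]
We will regroup the factors by the value they take. The factor $\nu(s)$ appears once for each label with $x_{i,1}=s$, hence exactly $N_1(s)$ times. The factor $\pi_h(a\mid s)$ appears once for each $(i,h)$ with $(x_{i,h},a_{i,h})=(s,a)$, hence $C_h(s,a)$ times. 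The factor $P_h(s'\mid s,a)$ appears $E_h(s,a,s')$ times. This gives
\[
\mathbb P(\omega_1,\ldots,\omega_m)=\prod_s\nu(s)^{N_1(s)}\prod_{h,s,a}\pi_h(a\mid s)^{C_h(s,a)}\prod_{h,s,a,s'}P_h(s'\mid s,a)^{E_h(s,a,s')}=:w(f),
\]
which is a function of $f$ alone.

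Next we condition on the flow. Let $\mathcal T(f)$ denote the finite set of ordered labeled sequences compatible with $f$. The event $\{F_m=f\}$ is the disjoint union of the singleton events over $\mathcal T(f)$, so $\mathbb P(F_m=f)=|\mathcal T(f)|\,w(f)$. Because this probability is positive, $w(f)>0$. For any compatible sequence the conditional probability is therefore $w(f)/(|\mathcal T(f)|w(f))=1/|\mathcal T(f)|$. Any incompatible sequence has conditional probability zero. This is exactly the uniformity claim.

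There is no real obstacle here. The only care needed is in the regrouping step: each $(i,h)$ pair must contribute exactly one $\pi$ factor and one $P$ factor, and the exponents must match the definitions of $C_h$ and $E_h$. The normal convention $0^0=1$ handles zero counts. The other point to check is that positivity of $\mathbb P(F_m=f)$ rules out dividing by zero.
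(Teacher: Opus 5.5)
Your proposal is correct and follows essentially the same route as the paper: you factor the joint probability into $\prod_s\nu(s)^{N_1(s)}\prod\pi_h(a\mid s)^{C_h(s,a)}\prod P_h(s'\mid s,a)^{E_h(s,a,s')}$ with the convention $0^0=1$, observe that it depends only on $f$, and condition. Your explicit step $\mathbb P(F_m=f)=|\mathcal T(f)|\,w(f)$, giving conditional probability $1/|\mathcal T(f)|$, just spells out the conditioning that the paper states in one line.
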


\begin{proof}
For any ordered labeled trajectory sequence, independence of the episodes and~\Cref{eq:traj-prob} give
\[
\prod_{i=1}^m \mathbb P_\pi(\omega_i)=\prod_{s\in\cS}\nu(s)^{N_1(s)}\prod_{h=1}^H\prod_{s,a}\pi_h(a\mid s)^{C_h(s,a)}\prod_{h=1}^H\prod_{s,a,s'}P_h(s'\mid s,a)^{E_h(s,a,s')}.
\]
Here and below, we use the convention $0^0=1$.

For every sequence compatible with $f$, the right-hand side is the same because it depends only on the aggregate counts in $f$. Since $\mathbb P(F_m=f)>0$, conditioning on $F_m=f$ therefore produces the uniform distribution over all compatible ordered labeled trajectory sequences.
\end{proof}

\begin{lemma}[Conditional labeling representation]
\label{lem:conditional-labeling}
Fix $f=(N_1,C,E)$ with $\mathbb P(F_m=f)>0$. Conditional on $F_m=f$, the
labeled trajectories can equivalently be generated by the following
combinatorial procedure:
\begin{enumerate}
    \item assign the $m$ labels to initial states uniformly among all assignments having counts $N_1(s)$;
    \item for each $h\in[H]$ and each state $s$, once the set of labels currently at $s$ is known, assign those labels uniformly to outgoing categories $(a,s')\in\cA\times\cS$ with prescribed counts $E_h(s,a,s')$.
\end{enumerate}
Conditional on the label sets entering the rows at step $h$, the outgoing assignments are independent across states $s$.
\end{lemma}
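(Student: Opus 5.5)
The plan is to show that the two-stage combinatorial procedure induces exactly the uniform distribution on compatible ordered labeled sequences, which by Lemma~\ref{lem:conditional-uniformity} is the conditional law given $F_m=f$. It therefore suffices to prove two things. First, the procedure outputs only sequences compatible with $f$. Second, every compatible sequence is produced with the same probability.

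Compatibility is immediate by construction. Step 1 gives initial counts $N_1$. At step $h$, the labels at $s$ are split into categories $(a,s')$ with counts $E_h(s,a,s')$. This reproduces $C_h(s,a)=\sum_{s'}E_h(s,a,s')$ and $N_{h+1}(s')=\sum_{s,a}E_h(s,a,s')$. The procedure is well defined because every flow with positive probability satisfies the conservation identities $\sum_{a,s'}E_h(s,a,s')=N_h(s)$, where $N_h$ is propagated as in \eqref{eq:forward4}. Hence the set of labels at $s$ at step $h$ has exactly the size required to fill the prescribed categories.

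Next I would set up a bijection. A compatible ordered sequence $(\omega_1,\ldots,\omega_m)$ is the same thing as the following data: the map from labels to initial states, and, for each $(h,s)$, the map sending the labels at $s$ at step $h$ to their pair $(A_{i,h},X_{i,h+1})$. Conversely, such a chain of assignments determines every trajectory uniquely. So the procedure's choices are in one-to-one correspondence with compatible sequences.

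Uniformity then follows by counting. Step 1 selects each assignment with probability $\prod_s N_1(s)!/m!$. At step $h$ and state $s$, given the label set of size $N_h(s)$, a uniform assignment with counts $E_h(s,a,\cdot)$ has probability $\prod_{a,s'}E_h(s,a,s')!/N_h(s)!$. Since the sizes $N_h(s)$ are fixed by $f$, the product of these factors over all steps is a constant depending only on $f$, not on which sequence is produced. Thus all compatible sequences receive equal probability, and the procedure's law matches the conditional law given $F_m=f$. The independence across $s$ at step $h$ holds by design, since the row assignments are drawn separately.

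The main thing to get right is that the multiplicative factors really are path-independent. This holds because the row sizes $N_h(s)$ are deterministic functions of $f$, even though the identities of the labels in each row vary. The rest is bookkeeping.
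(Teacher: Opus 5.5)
Your proposal is correct and follows the paper's proof: both show that the procedure outputs only $f$-compatible sequences, and that each one receives the probability $\bigl(\prod_s N_1(s)!/m!\bigr)\prod_{h,s}\bigl(\prod_{a,s'}E_h(s,a,s')!/N_h(s)!\bigr)$, which depends only on $f$; hence the output law is the uniform conditional law of Lemma~\ref{lem:conditional-uniformity}. Your explicit bijection between choice paths and compatible sequences, and your check of the conservation identity $\sum_{a,s'}E_h(s,a,s')=N_h(s)$, are details the paper leaves implicit.
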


\begin{proof}
The procedure generates only labeled trajectory sequences compatible with $f$. For any fixed compatible sequence, its probability under this combinatorial procedure is
\[
\frac{1}{m!/\prod_s N_1(s)!}\prod_{h=1}^H\prod_s\frac{1}{N_h(s)!/\prod_{a,s'}E_h(s,a,s')!},
\]
which depends only on $f$ and not on the particular compatible sequence. Hence the procedure is uniform over all compatible sequences and is therefore identical to the conditional law in Lemma~\ref{lem:conditional-uniformity}. The row-wise conditional independence follows directly from the construction, because after the entering label sets are fixed, each state row is assigned separately subject only to its prescribed row counts.
\end{proof}

For the true left batch, write $N_1^L,C_h^L,E_h^L$ for its corresponding counts. Before processing step $h$, define the aggregate left-flow sigma-field
\[
\mathcal H_{h-1}^L=\sigma\!\left(N_1^L,E_1^L,\ldots,E_{h-1}^L\right),
\]
with $\mathcal H_0^L=\sigma(N_1^L)$. Thus $\mathcal H_{h-1}^L$ determines $N_h^L(s)$ for every $s$. We will also use the finer labeled-history sigma-field
\[
\mathcal G_h^{-}=\sigma\!\left(X_{i,1},A_{i,1},X_{i,2},\ldots,A_{i,h-1},X_{i,h}:\ i\in[m]\right),
\]
where for $h=1$ this means $\mathcal G_1^{-}=\sigma(X_{1,1},\ldots,X_{m,1})$.

\begin{lemma}[Initial-state hypergeometric split]
\label{lem:initial-split}
Conditional on $F_m$,
\[
N_1^L(\cdot)\sim\MHG\!\left(N_1(\cdot),m_L\right).
\]
\end{lemma}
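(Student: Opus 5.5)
We derive the claim from the conditional labeling representation of Lemma~\ref{lem:conditional-labeling}. Conditional on $F_m=f$, the first step of that procedure assigns the $m$ labels to initial states uniformly among all assignments with counts $N_1(s)$. Equivalently, we take the multiset of initial states containing $N_1(s)$ copies of each $s$ and randomly permute it among the labels $1,\ldots,m$. We only need the marginal law of the step-1 assignment, which is exactly this uniform assignment. Since $N_1^L(s)=\#\{i\in I_L:X_{i,1}=s\}$ is a function of the step-1 assignment alone, the later steps of the procedure can be ignored for this lemma.

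The key computation is counting assignments. For a candidate vector $(n_s)_{s\in\cS}$ with $\sum_s n_s=m_L$ and $0\le n_s\le N_1(s)$, the number of assignments in which the labels of $I_L$ carry exactly $n_s$ copies of state $s$ is
\[
\binom{m_L}{(n_s)_s}\binom{m-m_L}{(N_1(s)-n_s)_s}.
\]
The total number of assignments is $\binom{m}{(N_1(s))_s}$. Dividing and simplifying the factorials, the conditional probability becomes
\[
\prod_s\binom{N_1(s)}{n_s}\Big/\binom{m}{m_L},
\]
which is the $\MHG(N_1(\cdot),m_L)$ probability mass function. Alternatively, we can argue more conceptually. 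By exchangeability of the uniform permutation, the set $I_L$ behaves like a uniformly random $m_L$-subset of the $m$ labelled items. The category counts of such a subset are multivariate hypergeometric by definition.

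The only subtle point is justifying that the conditional law given $F_m$ is this uniform initial assignment, rather than something distorted by the later-step constraints in $f$. Lemma~\ref{lem:conditional-labeling} handles this directly. Its factorized probability shows that the first factor, $1/(m!/\prod_s N_1(s)!)$, is the marginal of the initial assignment. The remaining factors sum to one for each fixed initial assignment, because every later row assignment has the same number of completions regardless of which labels occupy the row. I expect this marginalization to be the step that needs the most care, though it is immediate from the product form. Finally, the case $\mathbb P(F_m=f)=0$ is excluded, and the edge cases $m_L\in\{0,m\}$ are trivial.
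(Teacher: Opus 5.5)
Your proposal is correct and follows the paper's proof essentially step for step. Like the paper, you condition on $F_m=f$, take the uniform initial-state assignment from Lemma~\ref{lem:conditional-labeling}, count the assignments in which the labels of $I_L$ have the prescribed counts, and simplify the ratio of multinomial coefficients to the $\MHG(N_1(\cdot),m_L)$ mass function. Your marginalization remark is valid but redundant, since the first step of the labeling procedure already defines the initial assignment to be uniform.
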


\begin{proof}
Fix a realization $F_m=f$ and write $n_s=N_1(s)$. By Lemma~\ref{lem:conditional-labeling}, conditional on $F_m=f$, the $m$ labels are assigned uniformly to the initial states among all assignments having counts $(n_s)_{s\in\cS}$. The total number of such assignments is
\[
\frac{m!}{\prod_{s\in\cS}n_s!}.
\]
Fix a feasible vector $y=(y_s)_{s\in\cS}$ satisfying $0\le y_s\le n_s$ and $\sum_s y_s=m_L$, and let $m_R=m-m_L$. The number of initial-state assignments for which the first $m_L$ labels have category counts $y_s$ is
\[
\frac{m_L!}{\prod_s y_s!}\frac{m_R!}{\prod_s(n_s-y_s)!}.
\]
Consequently,
\[
\begin{aligned}
\mathbb P\!\left(N_1^L(s)=y_s,\ \forall s\mid F_m=f\right)
&=\frac{\frac{m_L!}{\prod_s y_s!}\frac{m_R!}{\prod_s(n_s-y_s)!}}{\frac{m!}{\prod_s n_s!}}=
\frac{m_L!m_R!}{m!}\prod_s\frac{n_s!}{y_s!(n_s-y_s)!}\\
&=\frac{\prod_{s\in\cS}\binom{n_s}{y_s}}{\binom{m}{m_L}}.
\end{aligned}
\]
Since $n_s=N_1(s)$ under $F_m=f$, this is exactly
$\MHG(N_1(\cdot),m_L)$.
\end{proof}

\begin{lemma}[Row-wise hypergeometric split]
\label{lem:row-split}
Fix $h\in[H]$. Conditional on $F_m$ and $\mathcal H_{h-1}^L$, for each
$s\in\cS$,
\[
E_h^L(s,\cdot,\cdot)\sim\MHG\!\left(\{E_h(s,a,s')\}_{a\in\cA,\ s'\in\cS},N_h^L(s)\right),
\]
and these arrays are conditionally independent across $s$.
\end{lemma}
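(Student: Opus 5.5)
We will derive the row-wise split from the conditional labeling representation (Lemma~\ref{lem:conditional-labeling}), conditioning first on the finer labeled history and then coarsening to $\mathcal H_{h-1}^L$ by the tower property, exactly as in the proof of \Cref{thm:forward}.

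\textbf{Step 1: conditional law given the finer history.} Fix $F_m=f$ and condition further on $\mathcal G_h^{-}$, the labeled history up to the state at step $h$. By Lemma~\ref{lem:conditional-labeling}, the set $\mathcal L_h(s)=\{i:X_{i,h}=s\}$ of labels at state $s$ is $\mathcal G_h^{-}$-measurable. Given these sets, the step-$h$ outgoing assignments are uniform over assignments of $\mathcal L_h(s)$ to categories $(a,s')$ with counts $E_h(s,a,s')$, independently across $s$. Here the step-$h$ assignments are also independent of the earlier history beyond $\mathcal L_h(\cdot)$. This holds because the combinatorial procedure generates step $h$ using only the entering label sets.

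\textbf{Step 2: hypergeometric count for one row.} Within row $s$, set $k=N_h^L(s)=|\mathcal L_h(s)\cap I_L|$, which is $\mathcal G_h^{-}$-measurable. Counting as in Lemma~\ref{lem:initial-split}, with labels in $\mathcal L_h(s)$ in place of $[m]$ and categories $(a,s')$ in place of initial states, gives the probability that the left labels fill category $(a,s')$ with $y_{a,s'}$ of them:
\[
\frac{\frac{k!}{\prod y_{a,s'}!}\frac{(N_h(s)-k)!}{\prod (E_h(s,a,s')-y_{a,s'})!}}{\frac{N_h(s)!}{\prod E_h(s,a,s')!}}=\frac{\prod_{a,s'}\binom{E_h(s,a,s')}{y_{a,s'}}}{\binom{N_h(s)}{k}}.
\]
This is the $\MHG(\{E_h(s,a,s')\},N_h^L(s))$ mass function, because $\sum_{a,s'}E_h(s,a,s')=N_h(s)$. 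Conditional independence across $s$ is inherited from Step 1.

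\textbf{Step 3: coarsening.} The conditional law obtained given $(F_m,\mathcal G_h^{-})$ depends on $\mathcal G_h^{-}$ only through $(N_h^L(s))_s$. These counts are measurable with respect to $\sigma(F_m)\vee\mathcal H_{h-1}^L$, since $N_h^L(s')=\sum_{s,a}E_{h-1}^L(s,a,s')$ for $h\ge2$, and $N_1^L$ itself when $h=1$. We also have $\sigma(F_m)\vee\mathcal H_{h-1}^L\subseteq\sigma(F_m)\vee\mathcal G_h^{-}$, because $E_j^L$ for $j<h$ and $N_1^L$ are functions of the labeled history. The tower property then transfers both the product-MHG law and the cross-row independence to conditioning on $F_m$ and $\mathcal H_{h-1}^L$.

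\textbf{Main obstacle.} The delicate point is Step 1. We must justify that, conditional on $F_m$ and the entire labeled history through $X_{\cdot,h}$, the step-$h$ assignment is still uniform within rows. Lemma~\ref{lem:conditional-labeling} states this only as a sequential generating procedure. To close the gap, we will read the procedure as specifying a sequence of conditional laws. The procedure's joint mass factorizes into a product of per-step uniform factors. Hence conditioning on the first $h-1$ stages, which determine $\mathcal G_h^{-}$, leaves exactly the product of the remaining uniform factors. Marginalizing over steps after $h$ yields the claimed uniform law for step $h$.
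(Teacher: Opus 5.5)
Your proposal is correct and follows essentially the same route as the paper. Both condition on $F_m$ and the finer labeled history $\mathcal G_h^{-}$, use the conditional labeling representation to get a uniform, row-independent outgoing assignment, count assignments to obtain the product of multivariate-hypergeometric masses, and then pass to $\mathcal H_{h-1}^L$ by the tower property, because the result depends on the past only through $N_h^L(s)$. Your product-form argument that the step-$h$ assignment stays uniform given the whole labeled past is a point the paper uses without comment, so it adds rigor rather than changing the approach.
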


\begin{proof}
Fix realizations $F_m=f$ and $\mathcal H_{h-1}^L=\eta$. These realizations determine the aggregate left state counts $N_h^L(s)$, but in general they do \emph{not} determine which particular labels in $I_L$ are currently at each state. This is the point at which it is useful to condition temporarily on the finer labeled history $\mathcal G_h^{-}$.

For each $s\in\cS$, define
\[
\mathcal I_h(s)=\{i\in[m]:X_{i,h}=s\},\quad L_h(s)=\mathcal I_h(s)\cap I_L.
\]
Conditional on $\mathcal G_h^{-}$, both sets are fixed, with
\[
|\mathcal I_h(s)|=N_h(s),\qquad|L_h(s)|=N_h^L(s).
\]
By Lemma~\ref{lem:conditional-labeling}, conditional on $F_m=f$ and the entering label sets $\{\mathcal I_h(s)\}_{s\in\cS}$, the labels in row $(h,s)$ are assigned uniformly to the outgoing categories $(a,s')$ with prescribed category counts $\{E_h(s,a,s')\}_{a,s'}$. The assignments for distinct states are conditionally independent.

Fix a state $s$ and a feasible array $y_s=\{y_s(a,s')\}_{a,s'}$ satisfying
\[
0\le y_s(a,s')\le E_h(s,a,s'),\qquad\sum_{a,s'}y_s(a,s')=N_h^L(s).
\]
The number of row assignments having the prescribed full row counts is
\[
\frac{N_h(s)!}{\prod_{a,s'}E_h(s,a,s')!}.
\]
Among them, the number for which the fixed left subset $L_h(s)$ receives category counts $y_s(a,s')$ is
\[
\frac{N_h^L(s)!}{\prod_{a,s'}y_s(a,s')!}\frac{(N_h(s)-N_h^L(s))!}{\prod_{a,s'}(E_h(s,a,s')-y_s(a,s'))!}.
\]
Therefore,
\[
\begin{aligned}
\mathbb P\!\left(E_h^L(s,a,s')=y_s(a,s')\ \forall a,s'\mid F_m=f,\mathcal G_h^{-}\right)
& =\frac{\frac{N_h^L(s)!}{\prod_{a,s'}y_s(a,s')!}\frac{(N_h(s)-N_h^L(s))!}{\prod_{a,s'}(E_h(s,a,s')-y_s(a,s'))!}}{\frac{N_h(s)!}{\prod_{a,s'}E_h(s,a,s')!}}\\
& =\frac{\prod_{a,s'}\binom{E_h(s,a,s')}{y_s(a,s')}}{\binom{N_h(s)}{N_h^L(s)}}.
\end{aligned}
\]
Thus, conditional on $F_m=f$ and $\mathcal G_h^{-}$,
\[
E_h^L(s,\cdot,\cdot)\sim\MHG\!\left(\{E_h(s,a,s')\}_{a,s'},N_h^L(s)\right).
\]
Because the current outgoing row assignments are product-form across states, for any collection of feasible arrays $\{y_s\}_{s\in\cS}$,
\[
\begin{aligned}
&\mathbb P\!\left(E_h^L(s,\cdot,\cdot)=y_s,\ \forall s\mid F_m=f,\mathcal G_h^{-}\right) =\prod_{s\in\cS}\frac{\prod_{a,s'}\binom{E_h(s,a,s')}{y_s(a,s')}}{\binom{N_h(s)}{N_h^L(s)}}.
\end{aligned}
\]
Crucially, the right-hand side does not depend on the identities of the entering labels or any other aspect of $\mathcal G_h^{-}$; it depends on the past only through $N_h^L(s)$, which is already measurable with respect to
$\mathcal H_{h-1}^L$. Since
$\mathcal H_{h-1}^L\subseteq\mathcal G_h^{-}$, the tower property gives
\[
\begin{aligned}
\mathbb P\!\left(E_h^L(s,\cdot,\cdot)=y_s,\ \forall s\mid F_m,\mathcal H_{h-1}^L\right)
&=\mathbb E\!\left[\mathbb P\!\left(E_h^L(s,\cdot,\cdot)=y_s,\ \forall s\mid F_m,\mathcal G_h^{-}\right)\middle|F_m,\mathcal H_{h-1}^L\right]\\
&=\prod_{s\in\cS}\frac{\prod_{a,s'}\binom{E_h(s,a,s')}{y_s(a,s')}}{\binom{N_h(s)}{N_h^L(s)}}.
\end{aligned}
\]
This is exactly the claimed row-wise multivariate-hypergeometric law and establishes the conditional independence across states.
\end{proof}

\begin{lemma}[Equivalence of row-wise and action-then-transition splitting]
\label{lem:split-equivalence}
Fix $h\in[H]$ and $s\in\cS$. Conditional on $F_m$ and $\mathcal H_{h-1}^L$, the row-wise draw in Lemma~\ref{lem:row-split} is equivalent in distribution to first sampling
\[
C_h^L(s,\cdot)\sim\MHG\!\left(C_h(s,\cdot),N_h^L(s)\right)
\]
and then, independently across $a\in\cA$, sampling
\[
E_h^L(s,a,\cdot)\sim\MHG\!\left(E_h(s,a,\cdot),C_h^L(s,a)\right).
\]
The two-stage implementation also preserves the conditional independence across states $s$.
\end{lemma}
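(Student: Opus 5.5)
The plan is to compare probability mass functions directly. Lemma~\ref{lem:row-split} gives the target law. Fix $h$, $s$ and a feasible array $y_s=\{y_s(a,s')\}_{a,s'}$. The one-shot row draw assigns it probability
\[
\frac{\prod_{a,s'}\binom{E_h(s,a,s')}{y_s(a,s')}}{\binom{N_h(s)}{N_h^L(s)}}.
\]
Throughout, I use the identity $C_h(s,a)=\sum_{s'}E_h(s,a,s')$. This holds for every flow realization with positive probability, because each visit to $(s,a)$ at step $h$ is followed by exactly one next state.

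Under the two-stage procedure, the array $y_s$ arises only through the intermediate action counts $c(a)=\sum_{s'}y_s(a,s')$, which are therefore determined by $y_s$. The probability of $y_s$ is thus a single product: the first-stage probability of $c$, times the second-stage probabilities of each row given $c$. That product is
\[
\frac{\prod_a\binom{C_h(s,a)}{c(a)}}{\binom{N_h(s)}{N_h^L(s)}}\cdot\prod_a\frac{\prod_{s'}\binom{E_h(s,a,s')}{y_s(a,s')}}{\binom{C_h(s,a)}{c(a)}}.
\]
Here I use that the first stage is $\MHG(C_h(s,\cdot),N_h^L(s))$, which requires $N_h(s)=\sum_a C_h(s,a)$. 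The factors $\binom{C_h(s,a)}{c(a)}$ cancel, and what remains is exactly the one-shot expression. This is just the standard aggregation/refinement property of the multivariate hypergeometric distribution, namely drawing without replacement first by coarse category and then by fine category within each coarse category.

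I also need to check feasibility. If $y_s$ satisfies $0\le y_s(a,s')\le E_h(s,a,s')$ and sums to $N_h^L(s)$, then $0\le c(a)\le C_h(s,a)$. Conversely, the two-stage procedure only outputs arrays satisfying these constraints. The two supports therefore coincide, and the probabilities are zero off the support under both procedures.

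For conditional independence across states, the two-stage draws for distinct $s$ use independent randomness, given $F_m$ and $\mathcal H_{h-1}^L$, which fix $N_h^L(s)$. The joint mass function therefore factors over $s$. Applying the per-state identity to each factor recovers the product form in Lemma~\ref{lem:row-split}.

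The main obstacle is only bookkeeping: confirming the consistency identities ($N_h(s)=\sum_a C_h(s,a)$ and $C_h(s,a)=\sum_{s'}E_h(s,a,s')$) on positive-probability flows, and handling zero counts via the conventions $\binom{0}{0}=1$ and $0^0=1$. No probabilistic difficulty remains once Lemma~\ref{lem:row-split} is available.
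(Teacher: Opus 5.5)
Your proof is correct and is essentially the paper's: both compare probability mass functions and rely on the factors $\binom{C_h(s,a)}{c(a)}$ cancelling, and both get cross-state independence from the product form in Lemma~\ref{lem:row-split}. The only difference is direction. The paper starts from the row-wise law, marginalizes to the action totals via Vandermonde's identity, and then identifies the conditional law given those totals. You instead multiply the two normalized stage probabilities forward, which reaches the same identity without Vandermonde.
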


\begin{proof}
Fix $h$ and $s$, and abbreviate $e_{a,s'}=E_h(s,a,s')$ and $c_a=C_h(s,a)=\sum_{s'}e_{a,s'}$. Under the row-wise split, for every feasible array $y=\{y(a,s')\}_{a,s'}$ with $\sum_{a,s'}y(a,s')=N_h^L(s)$,
\[
\mathbb P(Y=y)=\frac{\prod_{a,s'}\binom{e_{a,s'}}{y(a,s')}}{\binom{N_h(s)}{N_h^L(s)}}.
\]
Let $Y_A(a)=\sum_{s'}Y(a,s')$. For any feasible action-total vector $r=(r_a)_a$, Vandermonde's identity gives
\[
\begin{aligned}
\mathbb P\!\left(Y_A(a)=r_a,\ \forall a\right)&=
\frac{\prod_a\sum_{\{y(a,s'):\,\sum_{s'}y(a,s')=r_a\}}\prod_{s'}\binom{e_{a,s'}}{y(a,s')}}{\binom{N_h(s)}{N_h^L(s)}}\\
&=\frac{\prod_a\binom{c_a}{r_a}}{\binom{N_h(s)}{N_h^L(s)}}.
\end{aligned}
\]
Hence $Y_A\sim\MHG(C_h(s,\cdot),N_h^L(s))$. Moreover, conditional on $Y_A(a)=r_a$ for every $a$,
\[
\mathbb P\!\left(Y(a,s')=y(a,s'),\ \forall a,s'\mid Y_A=r\right)=\prod_a\frac{\prod_{s'}\binom{e_{a,s'}}{y(a,s')}}{\binom{c_a}{r_a}},
\]
which is exactly the product of the conditional draws $\MHG(E_h(s,a,\cdot),r_a)$ across $a$. Thus the row-wise and two-stage procedures have the same law. The independence across states follows from Lemma~\ref{lem:row-split}.
\end{proof}

\begin{proof}[Proof of \Cref{thm:split}]
We induct over $h$ on the conditional law of the left flow.

By Lemma~\ref{lem:initial-split}, the split rule~\Cref{eq:split1} generates exactly the conditional law of the true left initial-state counts $N_1^L$ given $F_m$. Suppose now that the split has generated the correct conditional law of $\mathcal H_{h-1}^L$. Conditional on $(F_m,\mathcal H_{h-1}^L)$, Lemma~\ref{lem:row-split} gives the correct row-wise law of $E_h^L$, and Lemma~\ref{lem:split-equivalence} shows that this law is exactly implemented by first drawing~\Cref{eq:split2} and then drawing~\Cref{eq:split3}. After $E_h^L$ is sampled, flow conservation gives
\[
N_{h+1}^L(s')=\sum_{s,a}E_h^L(s,a,s'),
\]
which is the same deterministic relation satisfied by the true left sub-batch. Therefore the induction continues through all $h\in[H]$, yielding
\[
F_{m_L}^{\splflow,L}\mid F_m\stackrel{d}{=}F_{m_L}^{\trueflow,L}\mid F_m.
\]

For the right child, conditional on the parent flow and the left child, all right counts are determined by subtraction:
\[
F_{m_R}^{\splflow,R}=F_m-F_{m_L}^{\splflow,L}.
\]
The same identity holds for the true partition of the first $m_L$ and last $m_R$ labeled trajectories. Therefore
\[
\left(F_{m_L}^{\splflow,L},F_{m_R}^{\splflow,R}\right)\mid F_m
\stackrel{d}{=}
\left(F_{m_L}^{\trueflow,L},F_{m_R}^{\trueflow,R}\right)\mid F_m,
\]
which proves the claimed joint conditional correctness.
\end{proof}

\section{Proof of \texorpdfstring{\Cref{thm:adaptive}}{Theorem 3.5}}
\label{app:proof-adaptive}

\begin{proof}
Fix $Z$ and consider the sequence $\omega_1,\omega_2,\ldots$ generated by conventional episode-by-episode simulation while the policy remains equal to $\pi(Z)$. Let $G_j$ be the Markov flow of its first $j$ episodes and let
\[
\tau=\inf\{j\ge1:g(Z,G_j)=1\}.
\]
We first consider the no-trigger case. By prefix monotonicity, $g(Z,G_m)=0$ if and only if $\tau>m$. By~\Cref{thm:forward}, the sampled candidate flow $F_m$ has exactly the same law as $G_m$. Therefore, the event $g(Z,F_m)=0$ and the returned flow have the same joint distribution as the corresponding no-trigger event and prefix flow under conventional simulation.

Now suppose $g(Z,F_m)=1$, corresponding under the common root-flow law to $\tau\le m$. We justify the recursive use of \Cref{thm:split} by maintaining the following invariant for the conventional i.i.d.\ episode sequence: conditional on the entire revealed-flow history, the labeled trajectories in the current retained node are uniform over all sequences compatible with that node's flow. At the root, this follows from Lemma~\ref{lem:conditional-uniformity}.

Suppose the invariant holds for a retained node with index interval $I$, and write $F_I$ for its flow. Conditional on the revealed history, $I$ and $F_I$ are fixed, and the trajectories in $I$ are uniform over all labeled sequences compatible with $F_I$. Thus \Cref{thm:split} gives the exact joint conditional law of the flows of its left and right children. Conditional on the revealed history and both child flows, the labeled trajectories in the two children are independent, and each child is uniform over the sequences compatible with its own flow. The choice of which child to retain is determined by the revealed flows and the accepted prefix flow; it uses no additional information about the trajectories' labels. Consequently, revealing the child flows and making this choice leaves the invariant valid for the retained child. Since the splitting procedure uses fresh randomness at each refinement, induction over the refinement depth shows that every adaptive split has the same conditional law as the corresponding split of the conventional episode sequence.

It remains to verify that the binary search retains exactly the batch containing the first trigger. Throughout the search, maintain the invariant that (i) $F^{\mathrm{pre}}$ is the flow of a contiguous prefix that has not triggered, and (ii) the current node immediately follows this prefix and contains the first triggering episode. The invariant holds initially because $F^{\mathrm{pre}}=0$ and the root batch contains a trigger. Suppose a current node is split into contiguous children $(F^L,n_L)$ and $(F^R,n_R)$. If $g(Z,F^{\mathrm{pre}}+F^L)=1$, prefix monotonicity implies that the first trigger lies in the left child, so the algorithm retains $F^L$. Otherwise, $F^{\mathrm{pre}}+F^L$ has not triggered, and because the current node contains a trigger, the first trigger must lie in the right child; the algorithm therefore accepts $F^L$ into $F^{\mathrm{pre}}$ and retains $F^R$. Thus the invariant is preserved.

When the current node has size one, it corresponds to the boundary episode at index $\tau$, while $F^{\mathrm{pre}}$ is the flow of the first $\tau-1$ episodes. Consequently, the returned boundary flow satisfies
\[
F^{\mathrm{pre}}+F\stackrel{d}{=}G_\tau.
\]
Thus Algorithm~\ref{alg:fastforward} returns the correct boundary-inclusive aggregate flow without reconstructing the boundary trajectory.

Each binary refinement replaces a node of size $n$ by a child of size at most $\lceil n/2\rceil$. After $q$ splits, the retained node therefore has size at most $\lceil m/2^q\rceil$, so at most $\lceil\log_2m\rceil$ splits are required to reach the boundary episode.

Finally, suppose the boundary compatibility condition in~\Cref{sec:problem} holds. Since the returned flow $\widetilde G_\tau$ has the same distribution as $G_\tau$ under conventional simulation, applying the same boundary update produces the same post-boundary learner state $Z'$ and hence the same next policy. The unaccepted suffix of the candidate batch was generated under the old policy $\pi(Z)$ and is discarded. Fresh episodes generated under the updated policy $\pi(Z')$ therefore have the same conditional law as the episodes generated after the boundary by conventional simulation. Repeating this argument over successive boundaries proves equality in distribution of the sequence of post-boundary learner states.
\end{proof}

\section{Review of Algorithms Used in \texorpdfstring{\Cref{sec:examples}}{Section 5}}
\label{app:algorithm-review}

In this section, we briefly review the three RL algorithms used in Section~\ref{sec:examples}. We rewrite their notation to match the notation used throughout the paper whenever possible.

\subsection{Model-Based Planning with a Generative Model}
\label{app:model-based-planning}

We first review the finite-horizon model-based planning with a generative model considered in~\citet{li2024breaking}. A generative model allows the learner to query any state-action-step triple $(s,a,h)$ and obtain an independent next-state sample from $P_h(\cdot\mid s,a)$. Suppose each triple is queried $J$ times. Let $Y_{h,j}(s,a)$ denote the $j$-th sampled next state for $(s,a,h)$. The empirical transition kernel is
\[
\widehat P_h(s'\mid s,a)=\frac{1}{J}\sum_{j=1}^{J}\ind\{Y_{h,j}(s,a)=s'\}.
\]
Equivalently, if $E_h(s,a,s')$ denotes the resulting transition count, then $$\widehat P_h(s'\mid s,a)=E_h(s,a,s')/J.$$
The algorithm constructs the empirical MDP using $\widehat P$ and the original reward function $r$, and then computes an optimal policy of this empirical MDP by backward dynamic programming.

\begin{algorithm}[ht]
\caption{Model-based planning with a generative model}
\label{alg:review-generative}
\begin{algorithmic}[1]
\Require Generative model, query count $J$
\For{$h=1,\ldots,H$, $s\in\cS$, $a\in\cA$}
    \State Draw $Y_{h,1}(s,a),\ldots,Y_{h,J}(s,a)\stackrel{\mathrm{iid}}{\sim}P_h(\cdot\mid s,a)$
    \State Set $\widehat P_h(s'\mid s,a)\gets J^{-1}\sum_{j=1}^{J}\ind\{Y_{h,j}(s,a)=s'\}$ for all $s'\in\cS$
\EndFor
\State $\widehat V_{H+1}(s)\gets0$ for all $s\in\cS$
\For{$h=H,H-1,\ldots,1$}
    \State $\widehat Q_h(s,a)\gets r_h(s,a)+\sum_{s'}\widehat P_h(s'\mid s,a)\widehat V_{h+1}(s')$
    \State $\widehat V_h(s)\gets\max_{a\in\cA}\widehat Q_h(s,a)$
    \State $\widehat\pi_h(s)\gets\argmax_{a\in\cA}\widehat Q_h(s,a)$
\EndFor
\State \Return $\widehat\pi=\{\widehat\pi_h\}_{h=1}^H$
\end{algorithmic}
\end{algorithm}

Thus, after sampling, the planning stage depends on the generative-model observations only through the empirical transition counts $E_h(s,a,s')$.

\subsection{Vanilla Pessimistic Value Iteration}
\label{app:vpvi}

We next review Vanilla Pessimistic Value Iteration (VPVI)~\citep{yin2021towards}. Let the offline dataset consist of $K$ trajectories generated by a fixed behavior policy $\mu$. Using the notation of Section~\ref{sec:problem}, $C_h(s,a)$ is the number of visits to $(s,a)$ at step $h$, and $E_h(s,a,s')$ is the corresponding transition count. For $C_h(s,a)>0$, VPVI forms the empirical transition kernel
\[
\widehat P_h(s'\mid s,a)=\frac{E_h(s,a,s')}{C_h(s,a)}.
\]
When $C_h(s,a)=0$, the original algorithm sets $\widehat P_h(\cdot\mid s,a)$ to the uniform distribution. Under the deterministic-reward setting considered in this paper, the empirical reward equals $r_h(s,a)$ whenever $C_h(s,a)>0$; for an unvisited pair, the original algorithm sets the empirical reward to zero.

Let $\iota=\log(HSA/\delta)$. VPVI uses a Hoeffding-type pessimistic penalty of the form
\[
\Gamma_h(s,a)
=
\begin{cases}
cH\iota/\sqrt{C_h(s,a)}, & C_h(s,a)>0,\\
cH\iota, & C_h(s,a)=0,
\end{cases}
\]
where $c>0$ is an absolute constant. Starting from $\widehat V_{H+1}=0$, it performs backward value iteration, subtracting $\Gamma_h$ from the empirical Bellman update.

\begin{algorithm}[ht]
\caption{Vanilla Pessimistic Value Iteration}
\label{alg:review-vpvi}
\begin{algorithmic}[1]
\Require Offline dataset of $K$ trajectories, failure probability $\delta$
\State Construct $C_h(s,a)$ and $E_h(s,a,s')$ from the dataset
\For{$h\in[H]$, $(s,a)\in\cS\times\cA$}
    \If{$C_h(s,a)>0$}
        \State $\widehat P_h(s'\mid s,a)\gets E_h(s,a,s')/C_h(s,a)$
        \State $\widehat r_h(s,a)\gets r_h(s,a)$
    \Else
        \State $\widehat P_h(s'\mid s,a)\gets1/S$ and $\widehat r_h(s,a)\gets0$
    \EndIf
\EndFor
\State $\widehat V_{H+1}(s)\gets0$ for all $s\in\cS$
\State $\iota\gets\log(HSA/\delta)$
\For{$h=H,H-1,\ldots,1$}
    \State $\overline Q_h(s,a)\gets\widehat r_h(s,a)+\sum_{s'}\widehat P_h(s'\mid s,a)\widehat V_{h+1}(s')$
    \State Set $\Gamma_h(s,a)\gets cH\iota/\sqrt{C_h(s,a)}$ if $C_h(s,a)>0$, and $\Gamma_h(s,a)\gets cH\iota$ otherwise
    \State $Q_h(s,a)\gets\max\!\left\{0,\min\!\left\{\overline Q_h(s,a)-\Gamma_h(s,a),H-h+1\right\}\right\}$
    \State $\widehat\pi_h(s)\gets\argmax_{a\in\cA}Q_h(s,a)$
    \State $\widehat V_h(s)\gets Q_h(s,\widehat\pi_h(s))$
\EndFor
\State \Return $\widehat\pi=\{\widehat\pi_h\}_{h=1}^H$
\end{algorithmic}
\end{algorithm}

In particular, under our deterministic-reward setting, the dataset enters VPVI only through the visitation and transition counts $(C,E)$.

\subsection{UCB-Advantage}
\label{app:ucb-advantage}

Finally, we review UCB-Advantage~\citep{zhang2020almost}, a model-free online algorithm based on reference-advantage decomposition and stage-wise updates. For each $(s,a,h)$, let $n_h(s,a)$ denote its cumulative number of visits and $\check n_h(s,a)$ the number of visits in the current stage. The stage lengths satisfy
\[
e_1=H,\quad e_{j+1}=\left\lfloor\left(1+\frac{1}{H}\right)e_j\right\rfloor,
\]
and we write
\[
\mathcal L=\left\{\sum_{i=1}^j e_i:j=1,2,\ldots\right\}
\]
for the cumulative visit counts at which stage updates are triggered.

Besides $Q_h$ and $V_h$, UCB-Advantage maintains a reference value $V_h^{\mathrm{ref}}$. For a transition $(s,a,s')$ at step $h$, the current-stage accumulators are updated as
\begin{align*}
\check\mu_h(s,a)\overset{+}{=}V_{h+1}(s')-V_{h+1}^{\mathrm{ref}}(s'),\ 
\check\nu_h(s,a) \overset{+}{=}V_{h+1}(s'),\ 
\check\sigma_h(s,a)\overset{+}{=}\left(V_{h+1}(s')-V_{h+1}^{\mathrm{ref}}(s')\right)^2,
\end{align*}
while the global reference accumulators satisfy
\begin{align*}
\mu_h^{\mathrm{ref}}(s,a)
&\overset{+}{=}
V_{h+1}^{\mathrm{ref}}(s'),
&
\sigma_h^{\mathrm{ref}}(s,a)
&\overset{+}{=}
\left(V_{h+1}^{\mathrm{ref}}(s')\right)^2.
\end{align*}
The checked quantities are reset after each stage, whereas the reference accumulators use all samples collected so far.

At the end of a stage, writing $n=n_h(s,a)$ and $\check n=\check n_h(s,a)$, the algorithm constructs a Bernstein-type reference-advantage bonus
\begin{align*}
b_h(s,a)
={}&
c_1\sqrt{\frac{\sigma_h^{\mathrm{ref}}(s,a)/n-\left(\mu_h^{\mathrm{ref}}(s,a)/n\right)^2}{n}\,\iota}+
c_2\sqrt{\frac{\check\sigma_h(s,a)/\check n-\left(\check\mu_h(s,a)/\check n\right)^2}{\check n}\,\iota}
\\
&+c_3\left(\frac{H\iota}{n}+\frac{H\iota}{\check n}+\frac{H\iota^{3/4}}{n^{3/4}}+\frac{H\iota^{3/4}}{\check n^{3/4}}\right),
\end{align*}
together with the Hoeffding-type bonus
\[
\overline b_h(s,a)=2\sqrt{\frac{H^2\iota}{\check n}}.
\]
Here $\iota$ is a logarithmic confidence factor and $c_1,c_2,c_3>0$ are universal constants.

The $Q$-value is updated by taking the minimum of the standard stage-wise estimate, the reference-advantage estimate, and its previous value:
\[
\begin{split}
Q_h(s,a)\leftarrow\min\Bigg\{&r_h(s,a)+\frac{\check\nu_h(s,a)}{\check n}+\overline b_h(s,a),\\
&r_h(s,a)+\frac{\mu_h^{\mathrm{ref}}(s,a)}{n}+\frac{\check\mu_h(s,a)}{\check n}+b_h(s,a),\;Q_h(s,a)\Bigg\}.
\end{split}
\]
Afterward, $V_h(s)=\max_a Q_h(s,a)$ and the current-stage accumulators are reset. When the state-level visitation count reaches the threshold $N_0$, the reference value is refreshed as $V_h^{\mathrm{ref}}(s)\leftarrow V_h(s)$.

Two features are particularly relevant to Section~\ref{sec:examples}. First, between stage or reference-value updates, the functions appearing in the accumulated target and moment statistics remain fixed. Second, within a boundary episode, a step-$h$ update modifies only the step-$h$ value tables, whereas subsequent decisions use tables indexed by $h+1,h+2,\ldots$. Hence, the stage-wise statistics can be accumulated using the corresponding pre-boundary functions while preserving the update order of the original algorithm.
\begin{algorithm}[ht]
\caption{UCB-Advantage}
\label{alg:review-ucb-advantage}
\begin{algorithmic}[1]
\Require Number of episodes $K$, stage schedule $\mathcal L$, reference threshold $N_0$
\State Initialize all accumulators to zero
\State For $h\in[H]$, set $Q_h(s,a)\gets H-h+1$, $V_h(s)\gets H-h+1$, and $V_h^{\mathrm{ref}}(s)\gets H$
\State Set $V_{H+1}(s)\gets0$ and $V_{H+1}^{\mathrm{ref}}(s)\gets0$ for all $s\in\cS$
\For{$k=1,\ldots,K$}
    \State Observe $X_{k,1}$
    \For{$h=1,\ldots,H$}
        \State $A_{k,h}\gets\argmax_{a\in\cA}Q_h(X_{k,h},a)$
        \State Observe $X_{k,h+1}\sim P_h(\cdot\mid X_{k,h},A_{k,h})$
        \State Set $(s,a,s')\gets(X_{k,h},A_{k,h},X_{k,h+1})$
        \State $n_h(s,a)\gets n_h(s,a)+1$, $\check n_h(s,a)\gets\check n_h(s,a)+1$
        \State Update $\check\mu_h,\check\nu_h,\check\sigma_h,\mu_h^{\mathrm{ref}},\sigma_h^{\mathrm{ref}}$ using $(s,a,s')$
        \If{$n_h(s,a)\in\mathcal L$}
            \State Compute $b_h(s,a)$ and $\overline b_h(s,a)$
            \State Update $Q_h(s,a)$ using the stage-wise and reference-advantage estimates
            \State $V_h(s)\gets\max_{a'\in\cA}Q_h(s,a')$
            \State Reset $\check n_h(s,a),\check\mu_h(s,a),\check\nu_h(s,a),\check\sigma_h(s,a)$ to zero
        \EndIf
        \If{$\sum_{a'\in\cA}n_h(s,a')=N_0$}
            \State $V_h^{\mathrm{ref}}(s)\gets V_h(s)$
        \EndIf
    \EndFor
\EndFor
\end{algorithmic}
\end{algorithm}

\section{Further Applications of the Exact Fast-Simulation Framework}
\label{app:further-extensions}

The representative applications in Section~\ref{sec:examples} illustrate our framework in simulator-based, offline, and online RL. All learner-side arithmetic costs in this appendix are measured under the same unit-cost arithmetic convention as in Section~\ref{sec:complexity}. We next discuss two further extensions that highlight additional algorithmic structures compatible with fast Markov-flow simulation. The first considers a round-based single-agent specialization of FedQ-Bernstein~\citep{zheng2023federated}, showing that visit-dependent learning rates, trajectory-dependent moments, and Bernstein confidence bonuses can be implemented without learner-side computation scaling with the round size. The second extends the framework to federated RL, where multiple agents independently collect local experience between learning rounds.

\subsection{Round-Based Q-Learning with Visit-Dependent Learning Rates}
\label{app:visit-dependent-learning-rate}

The visit-dependent learning rate
\[
\alpha_t=\frac{H+1}{H+t}
\]
was introduced in model-free episodic Q-learning by~\citet{jin2018q} and is used by a broad class of subsequent Q-learning methods. A direct implementation updates the Q-value after individual visits and therefore appears to retain learner-side computation linear in the number of trajectories in a batch. We use the single-agent specialization of FedQ-Bernstein~\citep{zheng2023federated} to show that this dependence is not inherent. When $M=1$, the agent and server operations of the federated algorithm collapse into a single round-based learner: each round executes a fixed policy until an event-triggered visitation threshold is reached, accumulates first- and second-moment statistics, and performs round-end Q-updates, using single-visit updates only in the bounded low-count regime and aggregate updates thereafter. Importantly, the Bernstein bonus is also represented through aggregate endpoint quantities rather than a visit-indexed cumulative sum.

\subsubsection{Algorithm Review}

We specialize FedQ-Bernstein~\citep{zheng2023federated} to $M=1$. The learner proceeds in rounds indexed by $k$. At the beginning of round $k$, it maintains $Q^k$, $V^k$, cumulative visit counts $N_h^k(s,a)$, and historical first- and second-moment accumulators. The deterministic greedy policy
\[
\pi_h^k(s)\in\argmax_{a\in\cA}Q_h^k(s,a)
\]
and the value function $V^k$ remain fixed throughout the round.

For each $(s,a,h)$, define the round threshold
\begin{equation}
\label{eq:qbern-threshold}
c_h^k(s,a)=\max\left\{1,\left\lfloor\frac{N_h^k(s,a)}{H(H+1)}\right\rfloor\right\}.
\end{equation}
The learner repeatedly executes $\pi^k$ until, at the end of an episode, the within-round visitation count of some $(s,a,h)$ reaches $c_h^k(s,a)$. Let $C_h^k(s,a)$ denote the number of visits to $(s,a,h)$ during round $k$. For the next states observed after these visits, define
\[
S_{1,h}^k(s,a)=\sum_{i\in\mathcal I_h^k(s,a)}V_{h+1}^k(X_{i,h+1}),\qquad S_{2,h}^k(s,a)=\sum_{i\in\mathcal I_h^k(s,a)}\left(V_{h+1}^k(X_{i,h+1})\right)^2,
\]
where $\mathcal I_h^k(s,a)$ is the set of trajectories in round $k$ that visit $(s,a,h)$. When $C_h^k(s,a)>0$, write
\[
\overline V_{h+1}^k(s,a)=\frac{S_{1,h}^k(s,a)}{C_h^k(s,a)}.
\]

Following FedQ-Bernstein, the learner maintains cumulative first- and second-moment sums
\[
W_{1,k+1}(s,a,h)=W_{1,k}(s,a,h)+S_{2,h}^k(s,a),\qquad W_{2,k+1}(s,a,h)=W_{2,k}(s,a,h)+S_{1,h}^k(s,a),
\]
and forms the empirical variance
\[
W_{k+1}(s,a,h)=\frac{W_{1,k+1}(s,a,h)}{N_h^{k+1}(s,a)}-\left(\frac{W_{2,k+1}(s,a,h)}{N_h^{k+1}(s,a)}\right)^2
\]
whenever $N_h^{k+1}(s,a)>0$.

Let $\iota$ denote the confidence parameter and $c'>0$ the absolute constant used in FedQ-Bernstein. Specializing its Bernstein endpoint confidence quantity to $M=1$, for $N_h^k(s,a)>0$ define
\[\beta_h^{\mathrm{B},k}(s,a)=c'\min\left\{\sqrt{\frac{H\iota}{N_h^k(s,a)}
\left(W_k(s,a,h)+H\right)}+\frac{(\sqrt{H^7SA}+\sqrt{SAH^6})\iota}{N_h^k(s,a)},\sqrt{\frac{H^3\iota}{N_h^k(s,a)}}\right\},\]
and set $\beta_h^{\mathrm{B},k}(s,a)=0$ when $N_h^k(s,a)=0$. Define $\beta_h^{\mathrm{B},k+1}(s,a)$ analogously using the post-round count $N_h^{k+1}(s,a)$ and variance statistic $W_{k+1}(s,a,h)$. Thus, each endpoint quantity is a deterministic function of the corresponding aggregate visit count and empirical variance and can be evaluated using a constant number of scalar arithmetic operations once these quantities are available.

For integers $n_1\leq n_2$, let $\alpha^{\mathrm{c}}(n_1,n_2)=\prod_{t=n_1}^{n_2}(1-\alpha_t)$. If a tuple has $n=N_h^k(s,a)$ visits before round $k$ and receives $c=C_h^k(s,a)$ new visits, define
\begin{equation}
\label{eq:qbern-alpha-agg}
\alpha_{\mathrm{agg}}(n,c)=1-\alpha^{\mathrm{c}}(n+1,n+c).
\end{equation}
The corresponding aggregate Bernstein bonus is
\begin{equation}
\label{eq:qbern-beta-agg}
\widetilde\beta_h^k(s,a)=\beta_h^{\mathrm{B},k+1}(s,a)-\alpha^{\mathrm{c}}(n+1,n+c)\beta_h^{\mathrm{B},k}(s,a).
\end{equation}

As in the original algorithm, let $i_0=2H(H+1)$ and distinguish two regimes. If $n<i_0$,~\Cref{eq:qbern-threshold} gives $c_h^k(s,a)=1$, so a fixed $(s,a,h)$ can be visited at most once during the round. If such a visit occurs and its next state is $s'$, writing $t=n+1$, the corresponding single-visit bonus is
\[
\widetilde b_t(s,a,h)=\frac{\beta_h^{\mathrm{B},k+1}(s,a)-(1-\alpha_t)\beta_h^{\mathrm{B},k}(s,a)}{2\alpha_t},
\]
and the update is
\begin{equation}
\label{eq:qbern-low-update}
Q_h^{k+1}(s,a)=(1-\alpha_t)Q_h^k(s,a)+\alpha_t\left(r_h(s,a)+V_{h+1}^k(s')+\widetilde b_t(s,a,h)\right).
\end{equation}
If $n\geq i_0$, all new visits in the round receive equal weight and the aggregate update becomes
\begin{equation}
\label{eq:qbern-high-update}
Q_h^{k+1}(s,a)=(1-\alpha_{\mathrm{agg}})Q_h^k(s,a)+\alpha_{\mathrm{agg}}\left(r_h(s,a)+\overline V_{h+1}^k(s,a)\right)+\frac{\widetilde\beta_h^k(s,a)}{2}.
\end{equation}
Tuples not visited during the round retain their previous Q-values. Finally,
\[
V_h^{k+1}(s)=\min\left\{H,\max_{a\in\cA}Q_h^{k+1}(s,a)\right\},\qquad \pi_h^{k+1}(s)\in\argmax_{a\in\cA}Q_h^{k+1}(s,a).
\]

Algorithm~\ref{alg:review-qbern} summarizes this single-agent specialization. Since $M=1$, local trajectory collection and global aggregation are performed by the same learner, and no central server or communication operation is required.

\begin{algorithm}[ht]
\caption{Single-Agent Specialization of FedQ-Bernstein}
\label{alg:review-qbern}
\begin{algorithmic}[1]
\State Initialize $Q_h^1(s,a)=V_h^1(s)=H$, $N_h^1(s,a)=0$, and $W_{1,1}(s,a,h)=W_{2,1}(s,a,h)=0$
\State Set $V_{H+1}^k(s)=0$ for all $s\in\cS$ and all $k$
\For{learning rounds $k=1,2,\ldots$}
    \State Set $\pi_h^k(s)\in\argmax_{a\in\cA}Q_h^k(s,a)$
    \State Set $c_h^k(s,a)\gets\max\{1,\lfloor N_h^k(s,a)/(H(H+1))\rfloor\}$ for all $(s,a,h)$
    \State Initialize $C_h^k(s,a)$, $S_{1,h}^k(s,a)$, and $S_{2,h}^k(s,a)$ to zero
    \Repeat
        \State Generate one trajectory under the fixed policy $\pi^k$
        \State Update $C_h^k(s,a)$, $S_{1,h}^k(s,a)$, and $S_{2,h}^k(s,a)$ along the trajectory
    \Until{some $C_h^k(s,a)=c_h^k(s,a)$ at the end of an episode}
    \State Set $N_h^{k+1}(s,a)\gets N_h^k(s,a)+C_h^k(s,a)$ and update $W_{1,k+1}$, $W_{2,k+1}$, and $W_{k+1}$
    \State Set $Q^{k+1}\gets Q^k$
    \For{each $(s,a,h)$ with $C_h^k(s,a)>0$}
        \State Set $n\gets N_h^k(s,a)$ and $c\gets C_h^k(s,a)$
        \If{$n<i_0$}
            \State Apply the single-visit update in~\Cref{eq:qbern-low-update}
        \Else
            \State Compute $\alpha_{\mathrm{agg}}$ and $\widetilde\beta_h^k(s,a)$ from~\Cref{eq:qbern-alpha-agg,eq:qbern-beta-agg}
            \State Apply the aggregate update in~\Cref{eq:qbern-high-update}
        \EndIf
    \EndFor
    \State Set $V_h^{k+1}(s)\gets\min\{H,\max_aQ_h^{k+1}(s,a)\}$ and $\pi_h^{k+1}(s)\in\argmax_aQ_h^{k+1}(s,a)$
\EndFor
\end{algorithmic}
\end{algorithm}
\subsubsection{Application of Our Framework and Computational Cost}

We now apply adaptive Markov-flow simulation to the single-agent FedQ-Bernstein specialization. The round-based learner itself continues according to its original event-triggered rule. For the computational comparison, we consider the first $K$ episodes of this learning process. If the $K$-th episode occurs before the end of a communication round, we truncate the computational accounting at episode $K$ without applying a round-end update that has not yet been triggered.

Let $Z_k$ denote the complete learner state at the beginning of round $k$, including $\pi^k$, $V^k$, the cumulative visit counts, the historical moment accumulators, and the round thresholds. Let $G_j$ denote the Markov flow of the first $j$ trajectories generated during the round, and let $C_h^{(j)}(s,a)$ denote its state-action counts. The round terminates at the first prefix satisfying
\[
g_{\mathrm{QB}}(Z_k,G_j)=\ind\{\exists(s,a,h):C_h^{(j)}(s,a)\geq c_h^k(s,a)\}.
\]
Since the thresholds are determined before the round and each $C_h^{(j)}(s,a)$ is nondecreasing in $j$, this trigger is prefix monotone. Because $\pi^k$ is deterministic, only $C_h^{(j)}(s,\pi_h^k(s))$ can increase for each $(s,h)$, so each trigger evaluation costs $O(HS)$. If $\kappa$ episodes have already been accounted for, Algorithm~\ref{alg:fastforward} is called with candidate size $K-\kappa$. If no round boundary occurs before the remaining horizon, the algorithm returns the full remaining flow and the accounting stops at episode $K$ without a round-end update. Otherwise, adaptive binary refinement locates the exact terminating episode, with the trigger cost dominated by the $O(HS^2)$ deterministic-policy split cost.

All trajectory-dependent statistics required by a completed round update are flow measurable. If $E_h^k(s,a,s')$ denotes the transition counts of the recovered boundary flow, then
\[
S_{1,h}^k(s,a)=\sum_{s'\in\cS}E_h^k(s,a,s')V_{h+1}^k(s'),\qquad S_{2,h}^k(s,a)=\sum_{s'\in\cS}E_h^k(s,a,s')\left(V_{h+1}^k(s')\right)^2.
\]
Because $V^k$ is fixed throughout round $k$, the visitation counts, round means, cumulative first and second moments, and empirical variance statistics are deterministic functions of the pre-round learner state and the boundary flow.

The visit-dependent learning-rate coefficient can also be evaluated without processing the $c$ new visits individually. Since $1-\alpha_t=(t-1)/(H+t)$,
\[
\alpha^{\mathrm{c}}(n+1,n+c)=\prod_{t=n+1}^{n+c}\frac{t-1}{H+t}=\prod_{r=0}^{H}\frac{n+r}{n+c+r}.
\]
Thus,~\Cref{eq:qbern-alpha-agg} can be evaluated using only $H+1$ factors, independently of $c$.

The Bernstein bonus has the same aggregate structure. The boundary flow determines the post-round first and second moments and hence the endpoint quantity $\beta_h^{\mathrm{B},k+1}(s,a)$. \Cref{eq:qbern-beta-agg} then recovers the entire round bonus contribution from the pre-round and post-round endpoint quantities together with $\alpha^{\mathrm{c}}(n+1,n+c)$. No visit-indexed bonus sum needs to be evaluated.

The low-count regime is also compatible with the recovered flow. When $N_h^k(s,a)<i_0$,~\Cref{eq:qbern-threshold} gives $c_h^k(s,a)=1$. Hence, the round contains at most one new visit to a fixed $(s,a,h)$, and if such a visit occurs, its next state is uniquely determined by the transition-count vector $E_h^k(s,a,\cdot)$. The single-visit update in~\Cref{eq:qbern-low-update} can therefore be reproduced exactly. In the high-count regime,~\Cref{eq:qbern-high-update} depends only on the round mean, aggregate learning-rate coefficient, and endpoint Bernstein bonus, all of which are determined by the recovered flow.

All Q-, value-, and policy updates occur only after a round terminates, so the second requirement of the boundary compatibility condition holds automatically. By \Cref{thm:adaptive}, replacing conventional trajectory generation within completed rounds by \Cref{alg:fastforward}, together with the possible terminal truncation at episode $K$, preserves the joint distribution of the completed-round learner states and the aggregate statistics of the terminal partial round up to episode $K$.

We finally compare simulator-side and overall computational costs. Let $R_{\max}(K)$ be a deterministic upper bound on the number of communication-round batches intersecting the first $K$ episodes, with the possible terminal partial batch absorbed into the bound. Conventional episode-by-episode simulation has simulator-side cost $\Theta(HK)$. Maintaining the visit counts and first- and second-moment accumulators along the generated transitions also costs $O(HK)$. Because the within-round policy is deterministic, at most one action per $(s,h)$ receives new visits; the round-end aggregate coefficients therefore cost $O(H^2S)$, while recomputing values and the greedy policy costs $O(HSA)$. Thus, its overall computational cost is
\[
O\left(HK+R_{\max}(K)HS(H+A)\right).
\]

Under adaptive Markov-flow simulation, one forward flow sample and at most $O(\log K)$ conditional splits are required for each round batch intersecting the horizon. Therefore, the simulator-side cost is
\[
O\left(R_{\max}(K)C_{\mathrm{flow}}^{\mathrm{det}}\log K\right)
=
O\left(R_{\max}(K)HS^2\log K\right).
\]
Only the active action $\pi_h^k(s)$ can have nonzero within-round flow for a given $(s,h)$. Hence the first- and second-moment contractions require $O(HS^2)$ arithmetic per completed round, and the aggregate learning-rate coefficients require at most $O(H^2S)$ arithmetic per completed round. The variance, endpoint-bonus, and Q-updates over visited tuples cost at most $O(HS)$, while recomputing the value function and greedy policy costs $O(HSA)$. Thus, the overall cost is
\[
O\left(R_{\max}(K)HS^2\log K+R_{\max}(K)HS(H+A)\right).
\]

The communication-round bound of FedQ-Bernstein is the same as that of FedQ-Hoeffding~\citep{zheng2023federated}. Specializing this deterministic bound to $M=1$ gives
\[
R_{\max}(K)=O\left(H^3SA\log(1+HK)\right).
\]
Consequently, the simulator-side cost becomes
\[
O\left(H^4S^3A\log(1+HK)\log K\right),
\]
while the overall computational cost is
\[
O\left(H^4S^2A\log(1+HK)\left[S\log K+H+A\right]\right).
\]
Thus, conventional simulation has linear dependence on $K$ in both simulator-side and overall computation cost, whereas adaptive Markov-flow simulation reduces both to $O(\log^2K)$ dependence.

\subsection{Federated RL: FedQ-Advantage}
\label{app:fedq-advantage}

Federated RL provides a second extension of the framework. Between communication rounds, multiple agents independently interact with copies of the environment and subsequently transmit aggregate statistics to a central server. This structure is particularly compatible with Markov-flow simulation: if each agent uses a fixed policy during its local exploration batch and its communicated quantities depend only on aggregate transition statistics, the local trajectories need not be materialized individually.

We illustrate this through FedQ-Advantage~\citep{zheng2024federated}. The algorithm separates communication rounds from stage-wise policy updates and uses reference-advantage decomposition for variance reduction. During a communication round, each agent accumulates visitation counts and first- and second-moment statistics of value functions that remain fixed throughout that round. A stage for a fixed $(s,a,h)$ consists of consecutive complete communication rounds, so its statistics are obtained by accumulating the round-wise contributions evaluated with the value functions used when those transitions were collected.

\subsubsection{Algorithm Review}

FedQ-Advantage considers $M$ agents, each interacting with an independent copy of the same finite-horizon MDP. The learning process is divided into communication rounds indexed by $k$. At the beginning of round $k$, the server maintains the current Q-estimate $Q^k$, value function $V^k$, reference value $V^{\mathrm{ref},k}$, and the count and moment accumulators described below. It constructs the deterministic policy
\[
\pi_h^k(s)\in\argmax_{a\in\cA}Q_h^k(s,a)
\]
and broadcasts $\pi^k$, $V^k$, $V^{\mathrm{ref},k}$, and the relevant counters to the agents. Define
\[
V_h^{\mathrm{adv},k}=V_h^k-V_h^{\mathrm{ref},k}.
\]
These functions remain fixed throughout communication round $k$.

For each $(s,a,h)$, let $N_h^k(s,a)$ denote the cumulative number of visits in all completed stages before round $k$, let $n_h^k(s,a)$ denote the size of the previous completed stage, and let $\widetilde n_h^k(s,a)$ denote the number of visits already accumulated in the current stage before round $k$. Let $I_h^{\mathrm{ren},k}(s,a)$ indicate whether round $k$ is the first round of a new stage, with $I_h^{\mathrm{ren},1}(s,a)=1$. The communication threshold is
\begin{equation}
\label{eq:fedq-threshold}
c_h^k(s,a)=
\begin{cases}
\left\lfloor n_h^k(s,a)/(MH)\right\rfloor,
& n_h^k(s,a)>0\ \text{and}\ \widetilde n_h^k(s,a)>(1-1/H)n_h^k(s,a),
\\[1mm]
\max\left\{1,\left\lceil\bigl(n_h^k(s,a)-\widetilde n_h^k(s,a)\bigr)/M\right\rceil\right\},
&\text{otherwise}.
\end{cases}
\end{equation}
FedQ-Advantage allows both synchronization options. When $\mathrm{usyn}=\mathrm{FALSE}$, each agent $m$ independently executes $\pi^k$ until, at the end of an episode, its local visitation count for at least one $(s,a,h)$ reaches $c_h^k(s,a)$, and the server initiates aggregation after every agent has reached one such local threshold. When $\mathrm{usyn}=\mathrm{TRUE}$, we analyze the episode-synchronized specialization in which all agents advance under a common episode clock; the round terminates as soon as any agent reaches one of its local thresholds, at which point all agents transmit their aggregate statistics.

Let $C_{m,h}^k(s,a)$ denote agent $m$'s number of visits to $(s,a,h)$ during round $k$. For a fixed function $f$, define
\begin{equation}
\label{eq:fedq-local-sum}
A_{m,h}^k(s,a;f)=\sum_{i\in\mathcal I_{m,h}^k(s,a)}f(X_{m,i,h+1}).
\end{equation}
In addition to the visit counts, each agent transmits quantities of the form~\Cref{eq:fedq-local-sum} for
\begin{equation}
\label{eq:fedq-five-functions}
f\in\left\{V_{h+1}^{\mathrm{ref},k},V_{h+1}^{\mathrm{adv},k},V_{h+1}^{k},\left(V_{h+1}^{\mathrm{ref},k}\right)^2,\left(V_{h+1}^{\mathrm{adv},k}\right)^2\right\}.
\end{equation}

The server stores all-history reference accumulators $\mu_h^{\mathrm{ref},k},\sigma_h^{\mathrm{ref},k}$ and current-stage accumulators $\mu_h^{\mathrm{adv},k},\mu_h^{\mathrm{val},k},\sigma_h^{\mathrm{adv},k}$, initialized to zero. At the end of round $k$, these are updated pointwise in $(s,a,h)$ as
\begin{align}
(\mu,\sigma)_h^{\mathrm{ref},k+1}(s,a)
&=(\mu,\sigma)_h^{\mathrm{ref},k}(s,a)
+\sum_{m=1}^M\left(
A_{m,h}^k(s,a;V_{h+1}^{\mathrm{ref},k}),
A_{m,h}^k(s,a;(V_{h+1}^{\mathrm{ref},k})^2)
\right),
\label{eq:fedq-ref-moment-update}\\
(\mu^{\mathrm{adv}},\mu^{\mathrm{val}},&\sigma^{\mathrm{adv}})_h^{k+1}(s,a)
=(1-I_h^{\mathrm{ren},k}(s,a))
(\mu^{\mathrm{adv}},\mu^{\mathrm{val}},\sigma^{\mathrm{adv}})_h^{k}(s,a)\notag\\
&+\sum_{m=1}^M\left(
A_{m,h}^k(s,a;V_{h+1}^{\mathrm{adv},k}),
A_{m,h}^k(s,a;V_{h+1}^{k}),
A_{m,h}^k(s,a;(V_{h+1}^{\mathrm{adv},k})^2)
\right).
\label{eq:fedq-stage-moment-update}
\end{align}
Thus, when a stage spans multiple rounds, each round contributes the value or advantage function that was fixed in that round; no historical transition is reevaluated using a later value function.

After receiving the local messages, the server computes
\begin{equation}
\label{eq:fedq-stage-count}
\widehat n_h^{k+1}(s,a)=\widetilde n_h^k(s,a)+\sum_{m=1}^M C_{m,h}^k(s,a).
\end{equation}
A stage renewal occurs when
\begin{equation}
\label{eq:fedq-stage-renewal}
\widehat n_h^{k+1}(s,a)\geq\ind\{n_h^k(s,a)=0\}MH+\left(1+\frac1H\right)n_h^k(s,a).
\end{equation}
Equivalently, with $I_h^{\mathrm{ren},k+1}(s,a)$ denoting the renewal indicator,
\begin{equation}
\label{eq:fedq-count-recursion}
\bigl(n_h^{k+1},\widetilde n_h^{k+1},N_h^{k+1}\bigr)(s,a)=
\begin{cases}
\bigl(\widehat n_h^{k+1}(s,a),0,N_h^k(s,a)+\widehat n_h^{k+1}(s,a)\bigr),
& I_h^{\mathrm{ren},k+1}(s,a)=1,\\
\bigl(n_h^k(s,a),\widehat n_h^{k+1}(s,a),N_h^k(s,a)\bigr),
& I_h^{\mathrm{ren},k+1}(s,a)=0.
\end{cases}
\end{equation}

For a renewed stage, the empirical means entering the two Q-estimates are
\begin{equation*}
\overline V_{h,\mathrm{stage}}^{k+1}(s,a)
=\frac{\mu_h^{\mathrm{val},k+1}(s,a)}{n_h^{k+1}(s,a)},\
\overline V_{h,\mathrm{ref}}^{k+1}(s,a)
=\frac{\mu_h^{\mathrm{ref},k+1}(s,a)}{N_h^{k+1}(s,a)},
\end{equation*}
and
\begin{equation*}
\overline V_{h,\mathrm{adv}}^{k+1}(s,a)
=\frac{\mu_h^{\mathrm{adv},k+1}(s,a)}{n_h^{k+1}(s,a)}.
\end{equation*}
For confidence parameter $p\in(0,1)$, let $\iota=\log(2/p)$. Define
\begin{equation*}
\widehat{\operatorname{Var}}_{h,\mathrm{ref}}^{k+1}(s,a)=\frac{\sigma_h^{\mathrm{ref},k+1}(s,a)}{N_h^{k+1}(s,a)}-\left(\frac{\mu_h^{\mathrm{ref},k+1}(s,a)}{N_h^{k+1}(s,a)}\right)^2
\end{equation*}
and
\begin{equation*}
\widehat{\operatorname{Var}}_{h,\mathrm{adv}}^{k+1}(s,a)=\frac{\sigma_h^{\mathrm{adv},k+1}(s,a)}{n_h^{k+1}(s,a)}-\left(\frac{\mu_h^{\mathrm{adv},k+1}(s,a)}{n_h^{k+1}(s,a)}\right)^2.
\end{equation*}
The corresponding bonus terms are
\begin{align*}
b_h^{k+1,1}(s,a)&=\sqrt{\frac{2H^2\iota}{n_h^{k+1}(s,a)}},\\
b_h^{k+1,2}(s,a)&=2\sqrt{\frac{\widehat{\operatorname{Var}}_{h,\mathrm{ref}}^{k+1}(s,a)\iota}{N_h^{k+1}(s,a)}}+2\sqrt{\frac{\widehat{\operatorname{Var}}_{h,\mathrm{adv}}^{k+1}(s,a)\iota}{n_h^{k+1}(s,a)}} \\
&+10H\left[\left(\frac{\iota}{N_h^{k+1}(s,a)}\right)^{3/4}+\left(\frac{\iota}{n_h^{k+1}(s,a)}\right)^{3/4}+\frac{\iota}{N_h^{k+1}(s,a)}+\frac{\iota}{n_h^{k+1}(s,a)}\right].
\end{align*}
The resulting Hoeffding-type and reference-advantage estimates can be written as
\begin{align}
Q_h^{k+1,1}(s,a)
&=r_h(s,a)+\overline V_{h,\mathrm{stage}}^{k+1}(s,a)+b_h^{k+1,1}(s,a),
\label{eq:fedq-q1}\\
Q_h^{k+1,2}(s,a)
&=r_h(s,a)+\overline V_{h,\mathrm{ref}}^{k+1}(s,a)+\overline V_{h,\mathrm{adv}}^{k+1}(s,a)+b_h^{k+1,2}(s,a).
\label{eq:fedq-q2}
\end{align}
The Q-estimate is updated only when the stage renews:
\begin{equation}
\label{eq:fedq-q-update}
Q_h^{k+1}(s,a)=
\begin{cases}
\min\left\{Q_h^{k+1,1}(s,a),Q_h^{k+1,2}(s,a),Q_h^k(s,a)\right\},
&\text{if the stage renews},
\\
Q_h^k(s,a),
&\text{otherwise}.
\end{cases}
\end{equation}
The server then sets
\[
V_h^{k+1}(s)=\max_{a\in\cA}Q_h^{k+1}(s,a),\qquad \pi_h^{k+1}(s)\in\argmax_{a\in\cA}Q_h^{k+1}(s,a).
\]
The reference value at $(s,h)$ is refreshed at most once, using the updated $V_h^{k+1}(s)$ in the first round for which
\[
\sum_{a\in\cA}N_h^{k+1}(s,a)\geq N_0.
\]
Thus the reference threshold is based on visits from completed stages and does not include the unfinished-stage count $\widetilde n_h^{k+1}$.

Algorithm~\ref{alg:review-fedq-advantage} summarizes FedQ-Advantage under the synchronization conventions described above.
\begin{algorithm}[H]
\caption{FedQ-Advantage}
\label{alg:review-fedq-advantage}
\begin{algorithmic}[1]
\Require Number of agents $M$, synchronization flag $\mathrm{usyn}\in\{\mathrm{TRUE},\mathrm{FALSE}\}$, total interaction-budget target $MT$ with $T\geq H$, reference threshold $N_0$, confidence parameter $p$
\State Initialize $Q^1,V^1,V^{\mathrm{ref},1}$, $N^1,n^1,\widetilde n^1$, $I^{\mathrm{ren},1}$, and all moment accumulators
\State Set $U\gets0$ and $k\gets1$
\While{$U<MT$}
    \State Set $\pi_h^k(s)\in\argmax_{a\in\cA}Q_h^k(s,a)$
    \State Compute $c_h^k(s,a)$ from~\Cref{eq:fedq-threshold}
    \State Broadcast $\pi^k$, $V^k$, $V^{\mathrm{ref},k}$, and the relevant stage counters
    \ForAll{agents $m\in[M]$}
        \State Initialize local counts and the five sums in~\Cref{eq:fedq-five-functions} to zero, and set $\ell_m^k\gets0$
    \EndFor
    \If{$\mathrm{usyn}=\mathrm{FALSE}$}
        \ForAll{agents $m\in[M]$ in parallel}
            \Repeat
                \State Generate one trajectory under the fixed policy $\pi^k$ and set $\ell_m^k\gets\ell_m^k+1$
                \State Update $C_{m,h}^k(s,a)$ and the five local sums
            \Until{some $C_{m,h}^k(s,a)=c_h^k(s,a)$ at the end of an episode}
        \EndFor
    \Else
        \Repeat
            \ForAll{agents $m\in[M]$ in parallel}
                \State Generate one trajectory under the fixed policy $\pi^k$ and set $\ell_m^k\gets\ell_m^k+1$
                \State Update $C_{m,h}^k(s,a)$ and the five local sums
            \EndFor
        \Until{some agent reaches $C_{m,h}^k(s,a)=c_h^k(s,a)$ for some $(s,a,h)$}
    \EndIf
    \State $U\gets U+H\sum_{m=1}^M\ell_m^k$
    \State All agents send their local counts and aggregate sums to the server
    \State Update the global moment accumulators using~\Cref{eq:fedq-ref-moment-update,eq:fedq-stage-moment-update}
    \State Compute $\widehat n_h^{k+1}(s,a)$ using~\Cref{eq:fedq-stage-count}
    \For{each $(s,a,h)$}
        \If{\Cref{eq:fedq-stage-renewal} holds}
            \State Set $I_h^{\mathrm{ren},k+1}(s,a)\gets1$ and update $(n,\widetilde n,N)$ using~\Cref{eq:fedq-count-recursion}
            \State Compute $Q_h^{k+1,1},Q_h^{k+1,2}$ and update $Q_h^{k+1}(s,a)$ using~\Cref{eq:fedq-q1,eq:fedq-q2,eq:fedq-q-update}
        \Else
            \State Set $I_h^{\mathrm{ren},k+1}(s,a)\gets0$ and update $(n,\widetilde n,N)$ using~\Cref{eq:fedq-count-recursion}
            \State $Q_h^{k+1}(s,a)\gets Q_h^k(s,a)$
        \EndIf
    \EndFor
    \State Update $V^{k+1}$ and $\pi^{k+1}$
    \State Refresh $V_h^{\mathrm{ref}}(s)$ from $V_h^{k+1}(s)$ where $\sum_aN_h^{k+1}(s,a)\geq N_0$ is reached for the first time
    \State $k\gets k+1$
\EndWhile
\end{algorithmic}
\end{algorithm}

\subsubsection{Application of Our Framework and Computational Cost}
Let $Z_k$ denote the server state at the beginning of communication round $k$, including $Q^k,V^k,V^{\mathrm{ref},k}$, $N^k,n^k,\widetilde n^k$, the renewal indicators, and all historical moment accumulators. For agent $m$, let $G_{m,j}^k$ denote the Markov flow of its first $j$ trajectories in the round, with state-action and transition counts $C_{m,h}^{k,(j)}$ and $E_{m,h}^{k,(j)}$, respectively. Since $\pi^k$ is fixed and deterministic throughout the round, conditional on $Z_k$, the trajectories generated by each agent follow a fixed deterministic Markov policy.

The local statistics communicated to the server are flow measurable irrespective of the synchronization mechanism. For every $f$ in~\Cref{eq:fedq-five-functions},
\begin{equation}
\label{eq:fedq-flow-stat}
A_{m,h}^k(s,a;f)=\sum_{s'\in\cS}E_{m,h}^k(s,a,s')f(s').
\end{equation}
The corresponding local visitation count is simply $C_{m,h}^k(s,a)$. Historical contributions from earlier rounds are already stored in the accumulators contained in $Z_k$, while the current boundary flow contributes only the round-$k$ contractions in~\Cref{eq:fedq-flow-stat}. Therefore, stages spanning multiple communication rounds remain flow measurable even though the value functions may change between rounds. Once these local statistics have been recovered, the moment updates in~\Cref{eq:fedq-ref-moment-update,eq:fedq-stage-moment-update}, the count recursion in~\Cref{eq:fedq-count-recursion}, the Q-updates in~\Cref{eq:fedq-q1,eq:fedq-q2,eq:fedq-q-update}, and the subsequent value, policy, and reference updates are deterministic functions of the pre-round server state and the local aggregate messages. Moreover, all such updates occur only after local exploration terminates, so the second requirement of the boundary compatibility condition holds automatically.

For agent $m$, the local stopping condition is
\begin{equation}
\label{eq:fedq-trigger}
g_{m,k}(Z_k,G_{m,j}^k)=\ind\left\{\exists(s,a,h):C_{m,h}^{k,(j)}(s,a)\geq c_h^k(s,a)\right\}.
\end{equation}
The thresholds are determined by $Z_k$ before local exploration begins, and all local visitation counts are nondecreasing in $j$, so~\Cref{eq:fedq-trigger} is prefix monotone. At the beginning of round $k$, define
\begin{equation}
\label{eq:fedq-candidate-size}
L_k=1+\sum_{s\in\cS}\left(c_1^k(s,\pi_1^k(s))-1\right).
\end{equation}
This candidate size is guaranteed to contain the local stopping boundary. Indeed, after $L_k$ episodes,
\[
L_k=\sum_{s\in\cS}C_{m,1}^{k,(L_k)}(s,\pi_1^k(s)).
\]
If no local threshold had been reached by then, every term on the right would satisfy $$C_{m,1}^{k,(L_k)}(s,\pi_1^k(s))\leq c_1^k(s,\pi_1^k(s))-1,$$ which would imply $L_k\leq L_k-1$, a contradiction. Thus Algorithm~\ref{alg:fastforward} can be called with candidate size $L_k$ for every complete local round.

For the computational comparison, fix a deterministic budget parameter $T\geq H$ and terminate after the first complete communication round for which the total number of environment steps across all agents reaches $MT$. Thus $T$ specifies an average per-agent budget target; individual agents need not generate the same number of episodes. Every round that is started is completed according to the original local stopping rules. Let $U_k$ denote the total number of environment steps generated by all agents before round $k$. A new round is started only when $U_k<MT$.

The candidate size in~\Cref{eq:fedq-candidate-size} is uniformly controlled before termination. From~\Cref{eq:fedq-threshold}, for every state $s$,
\[
c_1^k(s,\pi_1^k(s))-1\leq \frac{n_1^k(s,\pi_1^k(s))}{M}.
\]
Moreover, $n_1^k(s,\pi_1^k(s))$ is no larger than the cumulative number of visits to the corresponding step-$1$ tuple before round $k$. Since every previous episode contributes exactly one step-$1$ visit,
\[
\sum_{s\in\cS}n_1^k(s,\pi_1^k(s))\leq \frac{U_k}{H}.
\]
Consequently,
\begin{equation}
\label{eq:fedq-candidate-budget-bound}
L_k\leq 1+\frac{U_k}{MH}<1+\frac{T}{H}.
\end{equation}
Thus each agent generates at most $L_k$ episodes in a communication round, under either synchronization mechanism. If $\Delta U_k$ denotes the total number of environment steps generated in round $k$, then
\[
\Delta U_k\leq MH L_k<MT+MH.
\]
Let $U$ be the total number of environment steps when the procedure terminates. The final round starts with $U_k<MT$ and ends with $U\geq MT$, so
\begin{equation}
\label{eq:fedq-total-budget-bound}
MT\leq U<2MT+MH.
\end{equation}
Define
\begin{equation}
\label{eq:fedq-barT}
\overline T=2T+H.
\end{equation}
Then $U<M\overline T$, and $T\geq H$ implies $\overline T\leq3T$. Therefore, conventional trajectory-by-trajectory simulation has simulator-side cost $\Theta(U)=\Theta(MT)$. Maintaining the constant number of local visitation and moment statistics along the generated transitions also costs $O(MT)$. At the end of each communication round, aggregating the local statistics across the $M$ agents costs $O(MHS)$, while checking stage renewals and performing the corresponding Q-, value-, policy-, and reference-function updates costs $O(HSA)$.

\textbf{Without forced synchronization.}
Suppose $\mathrm{usyn}=\mathrm{FALSE}$. Each agent terminates its local exploration independently after the first episode at which one of its local visitation counts reaches the prescribed threshold. Because $\pi^k$ is deterministic, only the active action $\pi_h^k(s)$ can acquire additional local visits for each $(s,h)$, so each local trigger evaluation costs $O(HS)$. Conditional on $Z_k$, the $M$ agents interact with independent copies of the MDP, so Algorithm~\ref{alg:fastforward} can be applied independently to each agent to recover its exact local boundary flow. Together with the flow-measurability argument above, this shows inductively over complete communication rounds that replacing conventional local trajectory generation by adaptive Markov-flow simulation preserves the joint distribution of the federated learning process under the aggregate stopping rule above.

By~\Cref{eq:fedq-candidate-budget-bound}, locating a local stopping boundary requires at most $O(\log(1+T/H))$ conditional refinements. Since the local policy is deterministic, each forward flow sample and conditional split costs $C_{\mathrm{flow}}^{\mathrm{det}}=O(HS^2)$, while the $O(HS)$ local trigger evaluation is dominated. Applying the deterministic communication-round bound of~\citet{zheng2024federated} with the per-agent-scale parameter $\overline T$, define
\begin{equation}
\label{eq:fedq-round-bound}
R_{\max}(\overline T)=SAH^2+4SAH^2(\log H+3)\log\left(1+\frac{\overline T}{SAH^3}\right).
\end{equation}
Since the realized total interaction satisfies $U<M\overline T$, this bounds the number of communication rounds before termination. The simulator-side cost is therefore
\begin{equation}
\label{eq:fedq-fast-cost}
O\left(MR_{\max}(\overline T)HS^2\log\left(1+\frac{T}{H}\right)\right).
\end{equation}
The local contractions in~\Cref{eq:fedq-flow-stat} require $O(MHS^2)$ arithmetic per communication round, while server-side aggregation and the subsequent Q-, value-, policy-, and reference-function updates require $O(HS(M+A))$ additional computation per round. Hence, the overall computational cost is
\begin{equation*}
O\left(MR_{\max}(\overline T)HS^2\log\left(1+\frac{T}{H}\right)+R_{\max}(\overline T)HS(M+A)\right).
\end{equation*}
For comparison, conventional trajectory-by-trajectory simulation has simulator-side cost $\Theta(MT)$ and overall computational cost
\[
O\left(MT+R_{\max}(\overline T)HS(M+A)\right).
\]
Substituting~\Cref{eq:fedq-round-bound} into~\Cref{eq:fedq-fast-cost} and using $\overline T\leq3T$, the simulator-side cost is
\[
O\left(MH^3S^3A\left[1+(\log H+3)\log\left(1+\frac{T}{SAH^3}\right)\right]\log\left(1+\frac{T}{H}\right)\right).
\]
The corresponding overall computational cost is
\[
O\left(H^3S^2A\left[1+(\log H+3)\log\left(1+\frac{T}{SAH^3}\right)\right]\left[MS\log\left(1+\frac{T}{H}\right)+M+A\right]\right).
\]

\textbf{With forced synchronization.}
We next consider $\mathrm{usyn}=\mathrm{TRUE}$ under a synchronized episode clock. In this case, a communication round terminates at the first episode for which any agent reaches one of its local thresholds. The joint stopping condition can therefore be written as
\[
g_k^{\mathrm{sync}}\left(Z_k,G_{1,j}^k,\ldots,G_{M,j}^k\right)
=\ind\left\{\exists m,s,a,h:C_{m,h}^{k,(j)}(s,a)\geq c_h^k(s,a)\right\}.
\]
This trigger is again prefix monotone. Since the common policy is deterministic, evaluating it over the $M$ agents costs $O(MHS)$ per candidate prefix by checking only the active action at each $(s,h)$. The same $L_k$ in~\Cref{eq:fedq-candidate-size} guarantees that the joint stopping boundary occurs no later than episode $L_k$. Adaptive binary refinement can therefore be performed jointly across the $M$ local flows. At each candidate prefix, the $M$ flows are conditionally split at the same episode index; conditional independence of the agents implies that these splits recover the correct joint conditional distribution.

Define the corresponding deterministic communication-round upper bound
\begin{equation}
\label{eq:fedq-round-bound-sync}
R_{\mathrm{sync},\max}(\overline T)=MSAH^2+4MSAH^2(\log H+3)\log\left(1+\frac{\overline T}{SAH^3}\right).
\end{equation}
Each joint refinement costs $O(MC_{\mathrm{flow}}^{\mathrm{det}})=O(MHS^2)$, which dominates the $O(MHS)$ joint trigger evaluation. By~\Cref{eq:fedq-candidate-budget-bound}, at most $O(\log(1+T/H))$ refinements are needed per round. Thus, the simulator-side cost is
\begin{equation*}
O\left(MR_{\mathrm{sync},\max}(\overline T)HS^2\log\left(1+\frac{T}{H}\right)\right),
\end{equation*}
and the overall computational cost is
\begin{equation*}
O\left(MR_{\mathrm{sync},\max}(\overline T)HS^2\log\left(1+\frac{T}{H}\right)+R_{\mathrm{sync},\max}(\overline T)HS(M+A)\right).
\end{equation*}
Substituting~\Cref{eq:fedq-round-bound-sync} and using $\overline T\leq3T$, the simulator-side cost is
\[
O\left(M^2H^3S^3A\left[1+(\log H+3)\log\left(1+\frac{T}{SAH^3}\right)\right]\log\left(1+\frac{T}{H}\right)\right),
\]
while the overall computational cost is
\[
O\left(MH^3S^2A\left[1+(\log H+3)\log\left(1+\frac{T}{SAH^3}\right)\right]\left[MS\log\left(1+\frac{T}{H}\right)+M+A\right]\right).
\]
For comparison, conventional trajectory-by-trajectory simulation again has simulator-side cost $\Theta(MT)$ and overall cost $O(MT+R_{\mathrm{sync},\max}(\overline T)HS(M+A))$. Thus, under the no-forced-synchronization mechanism and the episode-synchronized specialization of forced synchronization, adaptive Markov-flow simulation replaces the linear simulator-side dependence on the average per-agent budget target with polylogarithmic dependence while preserving the corresponding complete-round update semantics.

\subsection{Implications for Other Federated Q-Learning Algorithms}
The two extensions above identify complementary structures that substantially broaden the applicability of our framework. The single-agent specialization of FedQ-Bernstein shows that visit-dependent learning rates, round-wise value moments, and Bernstein confidence bonuses can all be represented through aggregate round statistics and endpoint quantities, while FedQ-Advantage shows that adaptive Markov-flow simulation extends naturally to distributed local exploration. Together, these observations suggest a general route for extending fast simulation to a broader class of federated RL algorithms that organize local interaction into communication rounds and base global updates on flow-measurable aggregate statistics.
\end{document}

%% file: math_commands.tex
\usepackage{amsmath,amsfonts,bm}

\def\eqref#1{equation~\ref{#1}}

\def\1{\bm{1}}

\DeclareMathAlphabet{\mathsfit}{\encodingdefault}{\sfdefault}{m}{sl}
\SetMathAlphabet{\mathsfit}{bold}{\encodingdefault}{\sfdefault}{bx}{n}

\DeclareMathOperator*{\argmax}{arg\,max}